\def\isarxiv{1} %%% for icml submission version, we comment this line
\definecolor{mydarkblue}{rgb}{0,0.08,0.45}
\theoremstyle{plain}
\newtheorem{theorem}{Theorem}[section]
\newtheorem{lemma}[theorem]{Lemma}
\newtheorem{definition}[theorem]{Definition}
\newtheorem{assumption}[theorem]{Assumption}
\newtheorem{fact}[theorem]{Fact}
\newtheorem{remark}[theorem]{Remark}
\newcommand{\wt}{\widetilde}
\newcommand{\N}{\mathcal{N}}
\newcommand{\R}{\mathbb{R}}
\newcommand{\True}{\mathrm{true}}
\renewcommand{\d}{\mathrm{d}}
\DeclareMathOperator*{\E}{{\mathbb{E}}}
\DeclareMathOperator*{\var}{\mathrm{Var}}
\DeclareMathOperator*{\Z}{\mathbb{Z}}
\DeclareMathOperator*{\D}{\mathcal{D}}
\newcommand*{\RN}[1]{\expandafter\@slowromancap\romannumeral #1@}
\begin{document}

\ifdefined\isarxiv

\date{}

\title{High-Order Matching for One-Step Shortcut Diffusion Models}
\author{
Bo Chen\thanks{\texttt{
bc7b@mtmail.mtsu.edu}. Middle Tennessee State University.}
\and
Chengyue Gong\thanks{\texttt{ cygong17@utexas.edu}. The University of Texas at Austin.}
\and
Xiaoyu Li\thanks{\texttt{
xiaoyu.li2@student.unsw.edu.au}. University of New South Wales.}
\and
Yingyu Liang\thanks{\texttt{
yingyul@hku.hk}. The University of Hong Kong. \texttt{
yliang@cs.wisc.edu}. University of Wisconsin-Madison.} 
\and
Zhizhou Sha\thanks{\texttt{
shazz20@mails.tsinghua.edu.cn}. Tsinghua University.}
\and
Zhenmei Shi\thanks{\texttt{
zhmeishi@cs.wisc.edu}. University of Wisconsin-Madison.}
\and
Zhao Song\thanks{\texttt{ magic.linuxkde@gmail.com}. The Simons Institute for the Theory of Computing at UC Berkeley.}
\and
Mingda Wan\thanks{\texttt{
dylan.r.mathison@gmail.com}. Anhui University.}
}

\else

% \title{Intern Project} 
% \maketitle 
% \iffalse
% \icmltitlerunning{Second-order Shortcut Models}
%\linenumbers

\twocolumn[

\icmltitle{High-Order Matching for One-Step Shortcut Diffusion Models}
% It is OKAY to include author information, even for blind
% submissions: the style file will automatically remove it for you
% unless you've provided the [accepted] option to the icml2019
% package.

% List of affiliations: The first argument should be a (short)
% identifier you will use later to specify author affiliations
% Academic affiliations should list Department, University, City, Region, Country
% Industry affiliations should list Company, City, Region, Country

% You can specify symbols, otherwise they are numbered in order.
% Ideally, you should not use this facility. Affiliations will be numbered
% in order of appearance and this is the preferred way.
\icmlsetsymbol{equal}{*}

\begin{icmlauthorlist}
\icmlauthor{Aeiau Zzzz}{equal,to}
\icmlauthor{Bauiu C.~Yyyy}{equal,to,goo}
\icmlauthor{Cieua Vvvvv}{goo}
\icmlauthor{Iaesut Saoeu}{ed}
\icmlauthor{Fiuea Rrrr}{to}
\icmlauthor{Tateu H.~Yasehe}{ed,to,goo}
\icmlauthor{Aaoeu Iasoh}{goo}
\icmlauthor{Buiui Eueu}{ed}
\icmlauthor{Aeuia Zzzz}{ed}
\icmlauthor{Bieea C.~Yyyy}{to,goo}
\icmlauthor{Teoau Xxxx}{ed}\label{eq:335_2}
\icmlauthor{Eee Pppp}{ed}
\end{icmlauthorlist}

\icmlaffiliation{to}{Department of Computation, University of Torontoland, Torontoland, Canada}
\icmlaffiliation{goo}{Googol ShallowMind, New London, Michigan, USA}
\icmlaffiliation{ed}{School of Computation, University of Edenborrow, Edenborrow, United Kingdom}

\icmlcorrespondingauthor{Cieua Vvvvv}{c.vvvvv@googol.com}
\icmlcorrespondingauthor{Eee Pppp}{ep@eden.co.uk}

% You may provide any keywords that you
% find helpful for describing your paper; these are used to populate
% the "keywords" metadata in the PDF but will not be shown in the document
\icmlkeywords{Machine Learning, ICML}

\vskip 0.3in
]

\printAffiliationsAndNotice{\icmlEqualContribution} 
% \fi
\fi

\ifdefined\isarxiv
\begin{titlepage}
  \maketitle
  \begin{abstract}
One-step shortcut diffusion models [Frans, Hafner, Levine and Abbeel, ICLR 2025] have shown potential in vision generation, but their reliance on first-order trajectory supervision is fundamentally limited. The Shortcut model's simplistic velocity-only approach fails to capture intrinsic manifold geometry, leading to erratic trajectories, poor geometric alignment, and instability-especially in high-curvature regions. These shortcomings stem from its inability to model mid-horizon dependencies or complex distributional features, leaving it ill-equipped for robust generative modeling.
In this work, we introduce \textit{HOMO} (\textbf{H}igh-\textbf{O}rder \textbf{M}atching for \textbf{O}ne-Step Shortcut Diffusion), a game-changing framework that leverages high-order supervision to revolutionize distribution transportation. By incorporating acceleration, jerk, and beyond, HOMO not only fixes the flaws of the Shortcut model but also achieves unprecedented smoothness, stability, and geometric precision. Theoretically, we prove that HOMO's high-order supervision ensures superior approximation accuracy, outperforming first-order methods.
Empirically, HOMO dominates in complex settings, particularly in high-curvature regions where the Shortcut model struggles. Our experiments show that HOMO delivers smoother trajectories and better distributional alignment, setting a new standard for one-step generative models. 

  \end{abstract}
  \thispagestyle{empty}
\end{titlepage}

{\hypersetup{linkcolor=black}
\tableofcontents
}
\newpage

\else

\begin{abstract}

\end{abstract}

\fi

\section{Introduction}

In recent years, deep generative models have exhibited extraordinary promise across various types of data modalities. Techniques such as Generative Adversarial Networks (GANs) \cite{gpm+14}, autoregressive models \cite{v17}, normalizing flows \cite{lcb+22}, and diffusion models \cite{hja20} have achieved outstanding results in tasks related to image, audio, and video generation \cite{kes+18, brl+23}. These models have attracted considerable interest owing to their capacity to create invertible and highly expressive mappings, transforming simple prior distributions into complex target data distributions. This fundamental characteristic is the key reason they are capable of modeling any data distribution. Particularly, \cite{lcb+22, lgl22} have effectively unified conventional normalizing flows with score-based diffusion methods. These techniques produce a continuous trajectory, often referred to as a ``flow'', which transitions samples from the prior distribution to the target data distribution. By adjusting parameterized velocity fields to align with the time derivatives of the transformation, flow matching achieves not only significant experimental gains but also retains a strong theoretical foundation.

Despite the remarkable progress in flow-based generative models, such as the Shortcut model \cite{fhla24}, these approaches still face challenges in accurately modeling complex data distributions, particularly in regions of high curvature or intricate geometric structure \cite{wet+24, hwa+24}. This limitation stems from the reliance on first-order techniques, which primarily focus on aligning instantaneous velocities while neglecting the influence of higher-order dynamics on the overall flow geometry. Recent research in diffusion-based modeling \cite{c23, hg24, llly24} has highlighted the importance of capturing higher-order information to improve the fidelity of learned trajectories. However, a systematic framework for incorporating such higher-order dynamics into flow matching, especially within Shortcut models, remains an open problem.

In this work, we propose HOMO (High-Order Matching for One-Step Shortcut Diffusion), a revolutionary leap beyond the limitations of the original Shortcut model \cite{fhla24}. While Shortcut models rely on simplistic first-order dynamics, often empirically struggling to capture complex data distributions and producing erratic trajectories in high-curvature regions, HOMO shatters these barriers by introducing high-order supervision. By incorporating acceleration, jerk, and beyond, HOMO not only addresses the empirical shortcomings of the Shortcut model but also achieves unparalleled geometric precision and stability. Where the Shortcut model falters—yielding suboptimal trajectories and poor distributional alignment—HOMO thrives, delivering smoother, more accurate, and fundamentally superior results.

Our primary contribution is a rigorous theoretical and empirical framework that showcases the dominance of HOMO. We prove that HOMO's high-order supervision drastically reduces approximation errors, ensuring precise trajectory alignment from the earliest stages to long-term evolution. Empirically, we demonstrate that the Shortcut model's first-order dynamics fall short in complex settings, while HOMO consistently outperforms it, achieving faster convergence, better sample quality, and unmatched robustness.

The contributions of our work is summarized as follows:
\begin{itemize}
    \item We introduce high-order supervision into the Shortcut model, resulting in the HOMO framework, which includes novel training and sampling algorithms.
    \item We provide rigorous theoretical guarantees for the approximation error of high-order flow matching, demonstrating its effectiveness in both the early and late stages of the generative process.
    \item We demonstrate that HOMO achieves superior empirical performance in complex settings, especially in intricate distributional landscapes, beyond the capabilities of the original Shortcut model \cite{fhla24}.
\end{itemize}
 %%% Section 1. Introduction
\section{Related Works}

\paragraph{Diffusion Models.} Diffusion models have garnered significant attention for their capability to generate high-fidelity images by incrementally refining noisy samples, as exemplified by DiT~\cite{px23} and U-ViT~\cite{bnx+23}. These approaches typically involve a forward process that systematically adds noise to an initial clean image and a corresponding reverse process that learns to remove noise step by step, thereby recovering the underlying data distribution in a probabilistic manner. Early works~\cite{se19,sme20} established the theoretical foundations of this denoising strategy, introducing score-matching and continuous-time diffusion frameworks that significantly improved sample quality and diversity. Subsequent research has focused on more efficient training and sampling procedures~\cite{lzb+22,ssz+24_dit,ssz+24_pruning}, aiming to reduce computational overhead and converge faster without sacrificing image fidelity. Other lines of work leverage latent spaces to learn compressed representations, thereby streamlining both training and inference~\cite{rbl+22,hwsl24}. This latent learning approach integrates naturally with modern neural architectures and can be extended to various modalities beyond images, showcasing the versatility of diffusion processes in modeling complex data distributions. In parallel, recent researchers have also explored multi-scale noise scheduling and adaptive step-size strategies to enhance convergence stability and maintain high-resolution detail in generated content in \cite{lkw+24,fmzz24,rckc24,jzx+25,lyhz24}. There are more other works also inspire our work~\cite{xzc+22,dwb+23,pbd+23,wsd+23,wcz+23,ssz+24,ssz+24_prun,wxz+24,cl24,kkn24,cll+25,cll+25_deskreject,cxj24,wcy+23,fjl+24,lzw+24,hwl+24}.

\paragraph{Flow Matching. }
Generative models like diffusion \citep{swmg15, hja20, sme20} and flow-matching \citep{lcb+22, lgl22} operate by learning ordinary differential equations (ODEs) that map noise to data. To simplify, this study leverages the optimal transport flow-matching formulation \citep{lgl22}. A linear combination of a noise sample $x_0 \sim \mathcal{N}(0, \mathbb{I})$ and a data point $x_1 \sim \mathcal{D}$ defines $x_t$:
\begin{align*}
x_t = (1-t)x_0 + tx_1,
\qquad v_t = x_1 - x_0,
\end{align*}
with $v_t$ representing the velocity vector directed from $x_0$ to $x_1$. While $v_t$ is uniquely derived from $(x_0, x_1)$, knowledge of only $x_t$ renders it a random variable due to the ambiguity in selecting $(x_0, x_1)$. Neural networks in flow models approximate the expected velocity $\bar{v}_t = \mathbb{E}[v_t \mid x_t]$, calculated as an average over all valid pairings. Training involves minimizing the deviation between predicted and empirical velocities:
\begin{align}\label{eq:flow-matching}
& ~ \notag \bar{v}_\theta(x_t, t) \sim \mathbb{E}_{x_0,x_1 \sim \D} \left[ v_t \mid x_t \right] \\ 
& ~ \mathcal{L}^{\mathrm{F}}(\theta) = \mathbb{E}_{x_0,x_1 \sim \D} \left[ \| \bar{v}_\theta(x_t, t) - (x_1-x_0) \|^2 \right].
\end{align}
Sampling involves first drawing a noise point $x_0 \sim \mathcal{N}(0, I)$ and iteratively transforming it into a data point $x_1$. The denoising ODEs, parameterized by $\bar{v}_\theta(x_t, t)$, governs this transformation, and Euler’s method approximates it over small, discrete time steps.

\paragraph{High-order ODE Gradient in Diffusion Models. }

Higher-order gradient-based methods like TTMs~\cite{kp92} have applications far exceeding DDMs. For instance, solvers~\cite{dng+22} and regularization frameworks~\cite{kbjd20,fjno20} for neural ODEs~\cite{crbd18,gcb+18} frequently utilize higher-order derivatives. Beyond machine learning contexts, the study of higher-order TTMs has been extensively directed toward solving stiff~\cite{cc94} and non-stiff~\cite{cc94,cc82} systems.

\section{Preliminary}

This section establishes the notations and theoretical foundations for the subsequent analysis. 
Section~\ref{sec:notation} provides a comprehensive list of the primary notations adopted in this work. Section~\ref{sec:pre_flow_matching} elaborates on the flow-matching framework, extending it to the second-order case, with critical definitions underscored. 

\subsection{Notations}\label{sec:notation}

We use $\Pr[\cdot]$ to denote the probability. We use $\E[\cdot]$ to denote the expectation. We use $\var[\cdot]$ to denote the variance.
We use $\|x\|_p$ to denote the $\ell_p$ norm of a vector $x \in \R^n$, i.e. $\|x\|_1 := \sum_{i=1}^n |x_i|$, $\|x\|_2 := (\sum_{i=1}^n x_i^2)^{1/2}$, and $\|x\|_{\infty} := \max_{i \in [n]} |x_i|$. 
We use $f(x) = O(g(x))$ or $f(x) \lesssim g(x)$ to denote that $f(x) \leq C\cdot g(x)$ for some constant $C>0$.
We use $\N(0,I)$ to denote the standard Gaussian distribution.

\subsection{Shortcut model}\label{sec:pre_flow_matching}

Next, we describe the general framework of flow matching and its second-order rectification. These concepts form the basis for our proposed method, as they integrate first and second-order information for trajectory estimation.

\begin{fact}\label{fac:one_second_order}
Let a field $x_t$ be defined as 
\begin{align*}
x_t = \alpha_t x_0 + \beta_t x_1,
\end{align*}
where $\alpha_t$ and $\beta_t$ are functions of $t$, and $x_0, x_1$ are constants. Then, the first-order gradient $\dot{x_t}$ and the second-order gradient $\ddot{x_t}$ can be manually calculated as
\begin{align*}
\dot{x}_t &= \dot{\alpha_t} x_0 + \dot{\beta_t} x_1, \\
\ddot{x}_t &= \ddot{\alpha_t} x_0 + \ddot{\beta_t} x_1.
\end{align*}
\end{fact}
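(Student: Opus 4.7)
The plan is to derive both identities by straightforward componentwise differentiation in the time variable $t$, exploiting the fact that $x_0$ and $x_1$ are constants (they have no $t$-dependence by hypothesis) and that differentiation is a linear operator. Since the expression $x_t = \alpha_t x_0 + \beta_t x_1$ is an affine combination whose coefficients are scalar functions of $t$ multiplying vectors independent of $t$, every derivative with respect to $t$ acts only on the scalar coefficients.

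First I would apply $\tfrac{\d}{\d t}$ to both sides of $x_t = \alpha_t x_0 + \beta_t x_1$. By linearity, $\dot{x}_t = \tfrac{\d}{\d t}(\alpha_t x_0) + \tfrac{\d}{\d t}(\beta_t x_1)$. Since $x_0$ and $x_1$ are constants (treated as vectors independent of $t$), the product rule degenerates to $\tfrac{\d}{\d t}(\alpha_t x_0) = \dot{\alpha}_t x_0$ and similarly for the $\beta_t x_1$ term. Combining these gives $\dot{x}_t = \dot{\alpha}_t x_0 + \dot{\beta}_t x_1$, which is the first claim.

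Next I would apply $\tfrac{\d}{\d t}$ again to the expression just derived for $\dot{x}_t$. By the same linearity-plus-constancy argument, $\ddot{x}_t = \tfrac{\d}{\d t}(\dot{\alpha}_t x_0) + \tfrac{\d}{\d t}(\dot{\beta}_t x_1) = \ddot{\alpha}_t x_0 + \ddot{\beta}_t x_1$, yielding the second claim. Implicitly I would assume $\alpha_t, \beta_t \in C^2$ so that $\ddot{\alpha}_t, \ddot{\beta}_t$ exist, which is a standing smoothness assumption in the flow-matching setting of the paper.

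There is no real obstacle here: the statement is purely a linearity-of-differentiation observation, and the only thing to be explicit about is that $x_0, x_1$ are held fixed throughout the differentiation (which is exactly the hypothesis of the fact). The proof is therefore two lines once the hypothesis is unpacked.
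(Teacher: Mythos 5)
Your proof is correct and is exactly the intended (and essentially only) argument: since $x_0,x_1$ are constant in $t$, differentiating $x_t=\alpha_t x_0+\beta_t x_1$ once and then twice acts only on the scalar coefficients, giving $\dot{x}_t=\dot{\alpha}_t x_0+\dot{\beta}_t x_1$ and $\ddot{x}_t=\ddot{\alpha}_t x_0+\ddot{\beta}_t x_1$. The paper states this as a Fact without proof, and your two-line linearity argument (with the implicit $C^2$ smoothness of $\alpha_t,\beta_t$) matches what is intended.
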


In practice, one often samples $(x_0, x_1)$ from $(\mu_0, \pi_0)$ and parameterizes $x_t$ (e.g., interpolation) at intermediate times to build a training objective that matches the velocity field to the true time derivative $\dot{x}_t$.

\begin{definition}[Shortcut models, implicit definition from page 3 on~\cite{fhla24}]
\label{def:shortcut_models}

Let $\Delta t = 1 / 128$. 
Let $x_t$ be current field. 
Let $t \in \mathbb{N}$ denote time step. 
Let $u_1( x_t, t, d )$ be the network to be trained. 
Let $d \in ( 1 / 128, 1 / 64,\dots, 1 / 2, 1 )$ denote step size. 
Then, we define Shortcut model compute next field $x_{t + d}$ as follow: 
\begin{align*}
x_{t + d} = 
\begin{cases}
x_t + u_1( x_t, t, d ) d & \mathrm{if } d \geq 1 / 128, \\
x_t + u_1( x_t, t, 0 ) \Delta t & \mathrm{if } d < 1 / 128.
\end{cases}
\end{align*}
\end{definition}

\section{Methodology} \label{sec:methodology}

When training a flow-based model, such as Shortcut model, using only the first-order term as the training loss has several limitations compared to incorporating high-order losses. (1) Firstly, relying solely on the first-order term results in a less accurate approximation of the true dynamics, as it captures only the linear component and misses important nonlinear aspects that higher-order terms can represent. This can lead to slower convergence, as the model must implicitly learn complex dynamics without explicit guidance from higher-order terms. (2) Additionally, while the first-order approach reduces model complexity and the risk of overfitting, it may also limit the model's ability to generalize effectively to unseen data, particularly when the underlying dynamics are highly nonlinear. (3) In contrast, including higher-order terms enhances the model's capacity to capture intricate patterns, improving both accuracy and generalization, albeit at the cost of increased computational complexity and potential overfitting risks.

Then, we introducing our HOMO (High-Order Matching for One-step Shortcut diffusion model). The intuition behind this design is to leverage high-order dynamics to achieve a more accurate and stable approximation of the field evolution. By incorporating higher-order losses, we aim to capture the nonlinearities and complex interactions that are often present in real-world systems. This approach not only improves the fidelity of the model but also enhances its ability to generalize across different scenarios.

\begin{definition}[HOMO Inference]
\label{def:HOMO_inference}
Let $\Delta t = 1 / 128$. 
Let $x_t$ be the current field. 
Let $t \in \mathbb{N}$ denote the time step. 
Let $u_{1,\theta_1}( \cdot )$ and $u_{2,\theta_2}( \cdot )$ denote the HOMO models to be trained. 
Let $d \in (0, 1 / 128, 1 / 64,\dots, 1 / 2, 1 )$ denote the step size. 
Then, we define the HOMO computation of the next field $x_{t + d}$ as follows: 
\begin{align*}
x_{t + d} = 
\begin{cases}
x_t + d \cdot u_1( x_t, t, d ) 
+ \frac{d^2}{2} \\
\qquad \cdot u_2(u_1 ( x_t, t, d), x_t, t, d ) & \text{if } d \geq 1 / 128, \\
x_t + \Delta t \cdot u_1( x_t, t, 0 )
+ \frac{(\Delta t)^2}{2} \\
\qquad \cdot u_2(u_1 ( x_t, t, 0), x_t, t, 0 ) & \text{if } d < 1 / 128.
\end{cases}
\end{align*}
\end{definition}

The self-consistency target is to ensure that the model's predictions are consistent across different time steps. This is crucial for maintaining the stability and accuracy of the model over long-term predictions. 

\begin{definition}[HOMO Self-Consistency Target]
\label{def:2nd_self_consistency_target}
Let $u_{1,\theta_1}$ be the networks to be trained.
Let $x_t$ be the current field and $x_{t+d}$ be defined in Definition~\ref{def:HOMO_inference}.
Let $t \in \mathbb{N}$ denote the time step. 
Let $d \in (0, 1 / 128, 1 / 64,\dots, 1 / 2, 1 )$ denote the step size.
Then, we define the Self-Consistency target as follows: 
\begin{align*}
    \dot{x}_t^{\mathrm{target}} = & ~ u_{1,\theta_1} ( x_t, t, d ) / 2 + u_{1,\theta_1}( x_{t + d}, t, d ) / 2 
\end{align*}
\end{definition}

The second-order HOMO loss is designed to optimize the model by minimizing the discrepancy between the predicted and true velocities and accelerations. This loss function ensures that the model not only captures the immediate dynamics but also the underlying trends and changes in the system. 

\begin{definition}[Second-order HOMO Loss] 
\label{def:HOMO_loss}
Let $x_t$ be the current field. 
Let $t \in \mathbb{N}$ denote the time step. 
Let $\dot{x}_t^{\mathrm{target}}$ be defined by Definition~\ref{def:2nd_self_consistency_target}.
Let $u_{1,\theta_1}( \cdot )$ and $u_{2,\theta_2}( \cdot )$ denote the HOMO models to be trained. 
Let $d \in (0, 1 / 128, 1 / 64,\dots, 1 / 2, 1 )$ denote the step size. 
Let $\dot{x}_t^\True$ and $\ddot{x}_t^\True$ be the observed (or numerically approximated) true velocity and acceleration. 
Let $\dot{x}_t^{\mathrm{pred}} := u_{1,\theta_1}(x_t, t, 2 d)$ denote the model prediction of the first-order term.
Then, we define the HOMO Loss as follows: 
\begin{align*}
L_{(\theta_1,\theta_2)} = & ~ \E[\ell_{2,1,\theta_1}(x_t, \dot{x}_t^{\True})] + \E[\ell_{2,2,\theta_2, \theta_1}(x_t, \ddot{x}_t^{\True})] \\
& ~ +
\E[ \| u_{1,\theta_1}(x_t, t, 2 d)- \dot{x}_t^{\mathrm{target}}\|^2]
\end{align*}

We define 
\begin{align*}
    \ell_{2,1,\theta_1}(x_t, \dot{x}_t^{\True}) := &~ \|  u_{1,\theta_1}(x_t, t, 2 d) - \dot{x}_t^{\True} \|^2, \\
    \ell_{2,2,\theta_2, \theta_1}(x_t, \ddot{x}_t^{\True}) := &~ \| u_{2,\theta_2}  (\dot{x}_t^{\mathrm{pred}}, x_t, t, 2 d) - \ddot{x}_t^{\True} \|^2 \\
    \ell_{\mathrm{selfc}}(x_t, \dot{x}_t^{\mathrm{target}}) := &~ \| u_{1,\theta_1}(x_t, t, 2 d)- \dot{x}_t^{\mathrm{target}}\|^2
\end{align*}
and 
\begin{align*}
    \ell_{(\theta_1, \theta_2)}(x_t, x_t^{\True}) := &~ \ell_{2,1,\theta_1}(x_t, \dot{x}_t^{\True})+\ell_{2,2,\theta_2, \theta_1}(x_t, \ddot{x}_t^{\True}) \\ + &~ \ell_{\mathrm{selfc}}(x_t, \dot{x}_t^{\mathrm{target}}).
\end{align*}
\end{definition}

\begin{remark} [Simple notations] \label{rem:simplicity_notations}
For simplicity, we denote first-order matching as M1, which implies that HOMO is optimized solely by the first-order loss $\ell_{2,1,\theta_1}(x_t, \dot{x}_t^{\True})$. Second-order matching is denoted as M2, where HOMO is optimized only by the second-order loss $\ell_{2,2,\theta_2, \theta_1}(x_t, \ddot{x}_t^{\True})$. We refer to HOMO optimized solely by the self-consistency loss as SC, denoted by $\ell_{\mathrm{selfc}}(x_t, \dot{x}_t^{\mathrm{target}})$. Combinations of M1, M2, and SC are used to indicate HOMO optimized by corresponding combinations of loss terms. For example, (M1 + M2) denotes HOMO optimized by both first-order and second-order terms, while (M1 + M2 + SC) represents HOMO optimized by the first-order, second-order, and self-consistency terms.
\end{remark}

\section{Theoretical Analysis} \label{sec:main_result}

In this section, we will introduce our main result, the approximation error of the second order flow matching. The theory for higher order flow matching is deferred to Section~\ref{sec:app:higher_order_flow_matching}.

\begin{algorithm}[!ht]\caption{HOMO Training}
\begin{algorithmic}[1]
\Procedure{HOMOTraining}{$\theta, D, p, k$}
\State \Comment{Parameter $\theta$ for HOMO model $u_1$ and $u_2$.}
\State \Comment{Training dataset $D$}
\State \Comment{Stepsize and time index distribution $p$}
\State \Comment{Batch size $k$}
\While{not converged}
\State $x_0 \sim \N (0, I), x_1 \sim D, (d, t) \sim p$
\State $\beta_t \gets \sqrt{1-\alpha_t^2}$
\State $x_t \gets \alpha_t \cdot x_0 + \beta_t \cdot x_1$ \Comment{Noise data point}
\For{first $k$ batch elements}
\State $\dot s_t^{\True} \gets \dot{\alpha_t} x_0 + \dot{\beta_t} x_1$ \Comment{First-order target}
\State $\ddot s_t^{\True} \gets \ddot{\alpha_t} x_0 + \ddot{\beta_t} x_1$ \Comment{Second-order target}
\State $d \gets 0$
\EndFor
\For{other batch elements}
\State $s_t \gets u_1 ( x_t, t, d)$ \Comment{First small step of first order}
\State $\dot s_t \gets u_2 (u_1 ( x_t, t, d), x_t, t, d)$ \Comment{First small step of second order}
\State $x_{t + d} \gets x_t + d \cdot s_t + \frac{d^2}{2} \dot s_t $ \Comment{Follow ODE}
\State $s_{t + d} \gets u_1 ( x_{t + d}, t + d, d )$ \Comment{Second small step of first order}
\State $\dot s_t^{\mathrm{target}} \gets$ stopgrad $(s_t + s_{t+d}) / 2$ \Comment{Self-consistency target of first order }
\EndFor
\State $\theta \gets \nabla_\theta ( \| u_1 ( x_t, t, 2d ) - \dot s_t^{\True} \|^2$
\Statex \hspace{4.2em} $ + \| u_2 (u_1 (x_t, t, 2d), x_t, t, 2d) - \ddot s_t^{\True} \|^2$
\Statex \hspace{4.2em} $ + \| u_{1}(x_t, t, 2 d) - \dot{s}_t^{\mathrm{target}}\|^2)$
\EndWhile
\State \Return{$\theta$}
\EndProcedure
\end{algorithmic}
\end{algorithm}

\begin{algorithm}
[!ht]
\caption{HOMO Sampling}
\begin{algorithmic}[1]
\Procedure{HOMOSampling}{$\theta, M$}
\State \Comment{Parameter $\theta$ for the HOMO model $u_1$ and $u_2$}
\State \Comment{The number of sampling steps $M$}
\State $x \sim \N (0, I)$
\State $d \gets 1 / M$
\State $t \gets 0$
\For{$n \in [0, \dots, M - 1]$}
\State $x \gets x + d \cdot u_1 (x, t, d) + \frac{d^2}{2} \cdot u_2 (u_1 (x, t, d), x, t, d)$
\State $t \gets t + d$
\EndFor
\State \textbf{return} $x$
    \EndProcedure
\end{algorithmic}
\end{algorithm}

We first present the approximation error result for the early stage of the diffusion process. This result establishes theoretical guarantees on how well a neural network can approximate the first and second order flows during the initial phases of the trajectory evolution.

\begin{theorem}[Approximation error of second order flow matching for small $t$, informal version of Theorem~\ref{thm:secon_order_small_t:formal}]\label{thm:secon_order_small_t:informal}
    Let $N$ be a value associated with sample size $n$. Let $T_0 := N^{-R_0}$ and $T_* := N - \frac{\kappa^{-1} - \delta}{d}$ where $R_0, \kappa, \delta$ are some parameters.  Let $s$ be the order of smoothness of the Besov space that the target distribution belongs to.
    Under some mild assumptions, there exist neural networks $\phi_{1},\phi_2$ from a class of neural networks such that, for sufficiently large $N$, we have
\begin{align*}
    &~ \int (\|\phi_1(x, t) - \dot{x}_t^\mathrm{true}\|_2^2 + \|\phi_2(x, t) - \ddot{x}_t^\mathrm{true}\|_2^2) p_t(x) \d x \\ 
    \lesssim &~ (\dot{\alpha}_t^2 \log N + \dot{\beta}_t^2 ) N^{- \frac{2s}{d}} +
    \E_{x \sim P_t}[\|\dot{x}^\mathrm{true}_t - \ddot{x}^\mathrm{true}_t\|_2^2]
\end{align*}
    holds for any $t \in [T_{0}, 3T_{*}]$. In addition, $\phi_1, \phi_2$ can be taken so we have
    \begin{align*}
         \|\phi_1(\cdot,t) \|_\infty = &~ O(  |\dot{\alpha}_t | \sqrt{\log n} +  |\dot{\beta}_t |), \\ \|\phi_2(\cdot,t) \|_\infty = &~ O(  |\dot{\alpha}_t | \sqrt{\log n} +  |\dot{\beta}_t |).
    \end{align*}
\end{theorem}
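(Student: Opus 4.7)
The plan is to reduce both approximation tasks to that of approximating a smooth conditional-expectation function by a ReLU network on a Besov ball, and then assemble the two pieces.

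First, I would rewrite the targets in a form suitable for regression. Using Fact~\ref{fac:one_second_order} applied to the interpolant $x_t=\alpha_tx_0+\beta_tx_1$, one has
\begin{align*}
\dot{x}_t^{\True}=\E[\dot{\alpha}_tx_0+\dot{\beta}_tx_1\mid x_t],\qquad
\ddot{x}_t^{\True}=\E[\ddot{\alpha}_tx_0+\ddot{\beta}_tx_1\mid x_t].
\end{align*}
Each of these is a smooth function of $x_t$: its regularity is inherited from the Besov smoothness $s$ of the data distribution convolved with the Gaussian noise kernel, and its $L^\infty$ scale is bounded by $|\dot{\alpha}_t|\cdot\|x_0\|_\infty+|\dot{\beta}_t|\cdot\|x_1\|_\infty$. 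Because $x_0\sim\N(0,I)$, a truncation argument restricts attention to $\|x_0\|_\infty\lesssim\sqrt{\log N}$ with only polynomially small loss; this is precisely the source of the $\log N$ factor multiplying $\dot{\alpha}_t^2$.

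Next, I would invoke the standard Suzuki-type approximation theorem for ReLU networks on Besov spaces of smoothness $s$ on a compact $d$-dimensional domain, which gives an $L^2$ approximation rate of $N^{-2s/d}$. Applying this coordinatewise to the two conditional expectations produces a network $\phi_1$ with
\begin{align*}
\int\|\phi_1(x,t)-\dot{x}_t^{\True}\|_2^2\,p_t(x)\,\d x\lesssim(\dot{\alpha}_t^2\log N+\dot{\beta}_t^2)N^{-2s/d},
\end{align*}
and the $L^\infty$ bound $\|\phi_1(\cdot,t)\|_\infty=O(|\dot{\alpha}_t|\sqrt{\log n}+|\dot{\beta}_t|)$ follows from truncating the network output at the scale of the true regression function.

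For the second-order piece, the cleanest route exploits the architectural fact that $\phi_2$ takes $\phi_1(x_t,t)$ as one of its inputs: I would define $\phi_2$ so that it effectively returns $\phi_1(x_t,t)$ (ignoring the auxiliary input, which a one-layer correction can implement). Then triangle inequality gives
\begin{align*}
\|\phi_2-\ddot{x}_t^{\True}\|_2^2\le 2\|\phi_1-\dot{x}_t^{\True}\|_2^2+2\|\dot{x}_t^{\True}-\ddot{x}_t^{\True}\|_2^2,
\end{align*}
which, combined with the first-order bound, yields precisely the claimed inequality including the extra $\E_{x\sim P_t}[\|\dot{x}_t^{\True}-\ddot{x}_t^{\True}\|_2^2]$ term. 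The $L^\infty$ bound for $\phi_2$ follows from the same truncation and the bound already established for $\phi_1$.

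Finally, I would verify that the construction is uniform over $t\in[T_0,3T_*]$: the lower cutoff $T_0=N^{-R_0}$ prevents $p_t$ from being too singular and keeps the Besov norms of the regression targets polynomially controlled, while the upper cutoff $3T_*$ keeps the Gaussian component nondegenerate. The main obstacle is this uniformity at the boundary $t\approx T_0$: as $t\to 0$ the conditional expectation concentrates near the data and its higher-derivative seminorms blow up, so one must balance the cutoff radius, the Gaussian truncation level, and the network size so that the stated rate survives. Once this is handled, the bound follows by combining the coordinatewise Besov approximation results with the triangle inequality above.
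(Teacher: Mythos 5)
Your proposal is correct and follows essentially the same route as the paper: the paper also obtains $\phi_1$ from the first-order Besov-approximation result of \cite{fsi+24} (Lemma~\ref{lem:error_approx_small_t}), and then handles the second-order term exactly as you do, by letting $\phi_2$ effectively reproduce the first-order approximant and applying the triangle inequality together with $(a+b)^2\le 2a^2+2b^2$, which is precisely what produces the extra $\E_{x\sim P_t}[\|\dot{x}^{\mathrm{true}}_t-\ddot{x}^{\mathrm{true}}_t\|_2^2]$ term and the identical $L^\infty$ bounds. The only difference is presentational: the paper invokes the first-order bound (and its uniformity over $t\in[T_0,3T_*]$) as a black-box lemma, whereas you re-sketch its derivation via conditional expectations, Gaussian truncation, and ReLU approximation on Besov balls.
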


Next, we present the approximation error result for the later stages, confirming that the second-order flow matching remains effective throughout the generative process.

\begin{theorem}[Approximation error of second order flow matching for large $t$, informal version of Theorem~\ref{thm:secon_order_large_t:formal}]\label{thm:secon_order_large_t:informal}
    Let $N$ be a value associated with sample size $n$. Let $T_0 := N^{-R_0}$ and $T_* := N - \frac{\kappa^{-1} - \delta}{d}$ where $R_0, \kappa, \delta$ are some parameters.  Let $s$ be the order of smoothness of the Besov space that the target distribution belongs to.
    Fix $t_{*} \in [T_{*},1]$ and let $\eta>0$ be arbitrary. Under some mild assumptions, there exists neural networks $\phi_{1},\phi_2$ from a class of neural networks such that
\begin{align*}
    &~ \int (\|\phi_1(x, t) - \dot{x}_t^\mathrm{true}\|_2^2 + \|\phi_2(x, t) - \ddot{x}_t^\mathrm{true}\|_2^2) p_t(x) \d x \\ \lesssim &~ (\dot{\alpha}_t^{2} \log N  +   \dot{\beta}_t^{2} ) N^{-\eta} +
    \E_{x \sim P_t}[\|\dot{x}^\mathrm{true}_t - \ddot{x}^\mathrm{true}_t\|_2^2]
\end{align*}
    holds for any $t \in [2t_*, 1]$. In addition, $\phi_1, \phi_2$ can be taken so we have
    \begin{align*}
         \|\phi_1(\cdot,t) \|_\infty = &~ O(  |\dot{\alpha}_t | \log N +  |\dot{\beta}_t |), \\ \|\phi_2(\cdot,t) \|_\infty = &~ O(  |\dot{\alpha}_t | \log N +  |\dot{\beta}_t |).
    \end{align*}
\end{theorem}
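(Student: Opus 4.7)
The plan is to reduce the second-order approximation problem to a first-order one by a simple triangle inequality, which explains why the extra term $\E_{x\sim P_t}[\|\dot{x}_t^{\mathrm{true}} - \ddot{x}_t^{\mathrm{true}}\|_2^2]$ appears on the right-hand side. Concretely, I would take both $\phi_1$ and $\phi_2$ to be neural networks that approximate the first-order conditional expectation $\dot{x}_t^{\mathrm{true}} = \E[\dot{\alpha}_t x_0 + \dot{\beta}_t x_1 \mid x_t]$. For $\phi_2$ the bound then follows from
\begin{align*}
\|\phi_2(x,t) - \ddot{x}_t^{\mathrm{true}}\|_2^2 \leq 2\|\phi_2(x,t) - \dot{x}_t^{\mathrm{true}}\|_2^2 + 2\|\dot{x}_t^{\mathrm{true}} - \ddot{x}_t^{\mathrm{true}}\|_2^2,
\end{align*}
so after integrating against $p_t$, the second-order error is controlled by the first-order approximation error of $\phi_2$ (of the same form as that of $\phi_1$) plus the integrated target gap, which is exactly the shape of the stated bound.

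Next, I would invoke a first-order approximation result for diffusion-type velocity fields at large $t$. For $t \in [2t_*, 1]$ with $t_* \geq T_*$, the interpolated density $p_t$ has been smoothed by substantial Gaussian convolution (the noise component has variance bounded below by a fixed positive constant), so the map $x \mapsto \E[\dot{\alpha}_t x_0 + \dot{\beta}_t x_1 \mid x_t = x]$ becomes arbitrarily smooth with controlled derivatives, independently of the Besov smoothness parameter $s$ of the target distribution. In this regime, the standard localized-polynomial / ReLU-network construction (the same machinery that underlies Theorem~\ref{thm:secon_order_small_t:informal}) gives a network of size polynomial in $N$ that attains an $L^2(p_t)$ approximation rate $N^{-\eta}$ for any prescribed $\eta>0$, with constants depending on $\eta$ but not on $s$. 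This is what replaces the Besov-limited rate $N^{-2s/d}$ from the small-$t$ theorem.

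The prefactor $\dot{\alpha}_t^2 \log N + \dot{\beta}_t^2$ comes from truncation and scale bookkeeping: the random variable $\dot{\alpha}_t x_0 + \dot{\beta}_t x_1$ is sub-Gaussian in $x_0$ with scale $|\dot{\alpha}_t|$ plus a bounded data contribution of order $|\dot{\beta}_t|$, so truncating the targets at level $|\dot{\alpha}_t|\sqrt{\log N} + |\dot{\beta}_t|$ costs at most $N^{-\eta}$ in expectation while providing the squared magnitude factor of the required form. Clipping the final layer of the constructed network at this truncation level then propagates to the claimed $L^\infty$ bound $O(|\dot{\alpha}_t|\log N + |\dot{\beta}_t|)$ on both $\phi_1$ and $\phi_2$.

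The main obstacle is the second step: rigorously verifying that on $[2t_*, 1]$ one genuinely escapes the Besov-limited rate and achieves an arbitrary $N^{-\eta}$. This requires a quantitative smoothness estimate for the heat-smoothed conditional mean that is independent of the Besov parameter $s$, together with a careful construction of the approximating network whose size and weight magnitudes stay polynomial in $N$ while the logarithmic prefactors are tracked exactly. Once that first-order approximation is in place, the triangle-inequality reduction, the sub-Gaussian truncation argument, and the $L^\infty$ clipping step are largely routine.
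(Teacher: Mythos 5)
Your proposal matches the paper's proof essentially exactly: the paper also chooses $\phi_2$ to approximate $\dot{x}_t^{\mathrm{true}}$ and applies the same triangle-inequality/$(a+b)^2\le 2a^2+2b^2$ decomposition to produce the $\E_{x\sim P_t}[\|\dot{x}_t^{\mathrm{true}}-\ddot{x}_t^{\mathrm{true}}\|_2^2]$ term. The only difference is that the first-order large-$t$ approximation with arbitrary rate $N^{-\eta}$ and the $L^\infty$ clipping bound, which you sketch how to prove, are simply imported as a black box from Lemma~\ref{lem:error_approx_large_t} (Theorem~7 of \cite{fsi+24}), so the step you identify as the main obstacle is not re-proved in the paper at all.
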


Overall, these two results demonstrate the effectiveness across different phases of the generative process.

\section{Experiments} \label{sec:experiments}

This section presents a series of experiments to evaluate the effectiveness of our HOMO method and assess the impact of each loss component. Our results demonstrate that HOMO significantly improves distribution generation, with the higher-order loss playing a key role in enhancing model performance.

\begin{figure*}[!ht]
\centering
\subfloat[Eight-mode Dataset]{\includegraphics[width=0.24\textwidth]{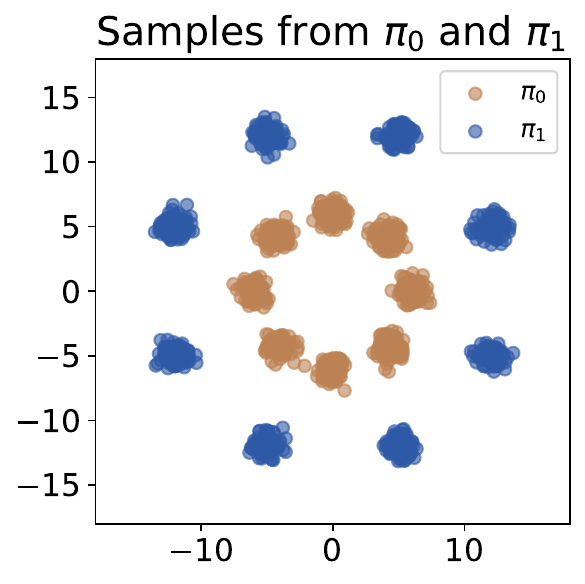}}
\subfloat[M1]{\includegraphics[width=0.24\textwidth]{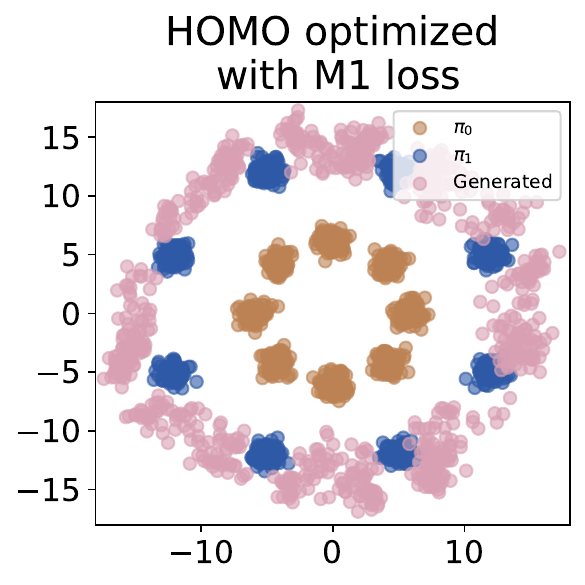}}
\subfloat[M2]{\includegraphics[width=0.24\textwidth]{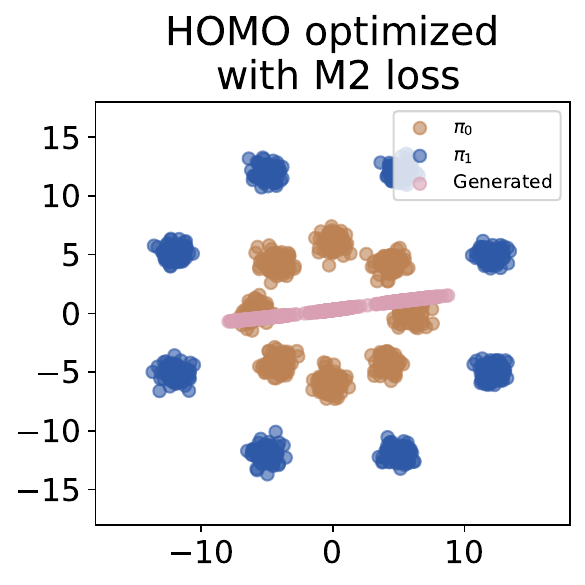}}
\subfloat[SC]{\includegraphics[width=0.24\textwidth]{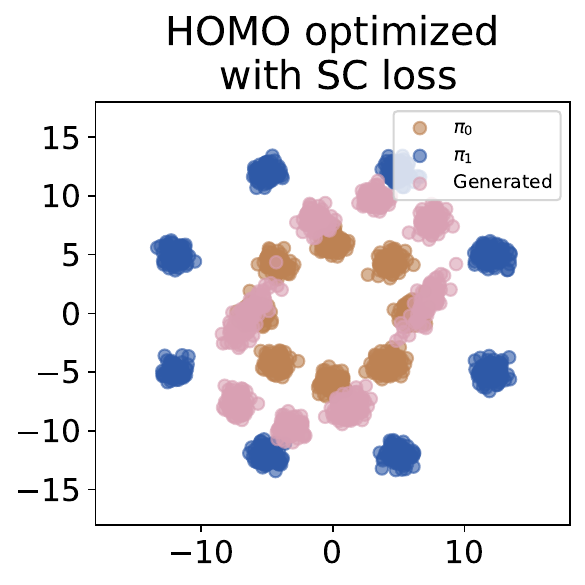}}\\
\subfloat[(M1 + M2)]{\includegraphics[width=0.24\textwidth]{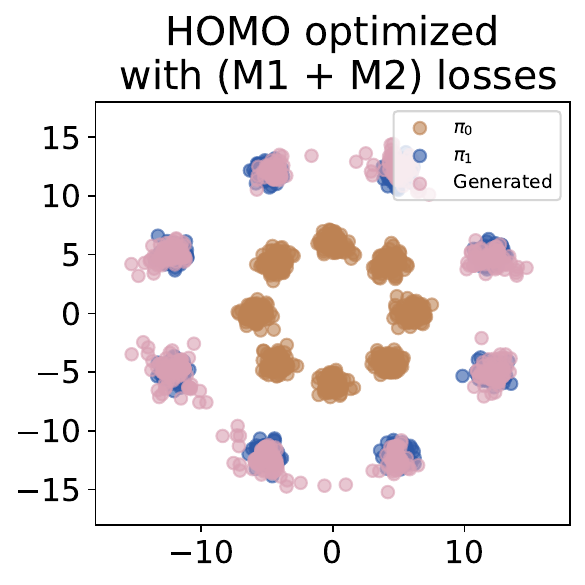}}
\subfloat[(M1 + SC) \cite{fhla24}]{\includegraphics[width=0.24\textwidth]{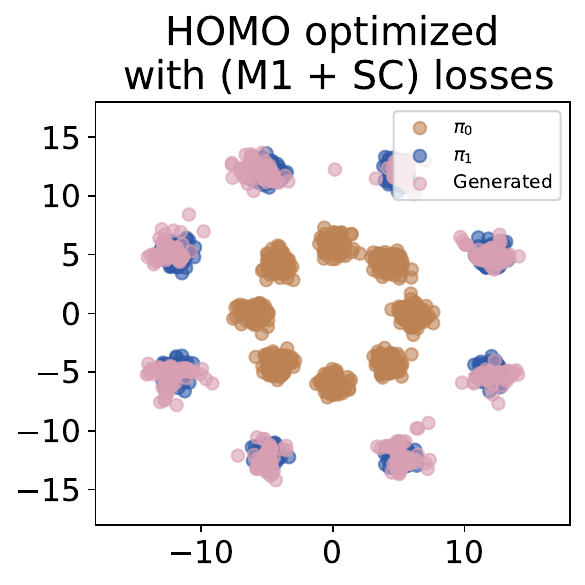}}
\subfloat[(M2 + SC)]{\includegraphics[width=0.24\textwidth]{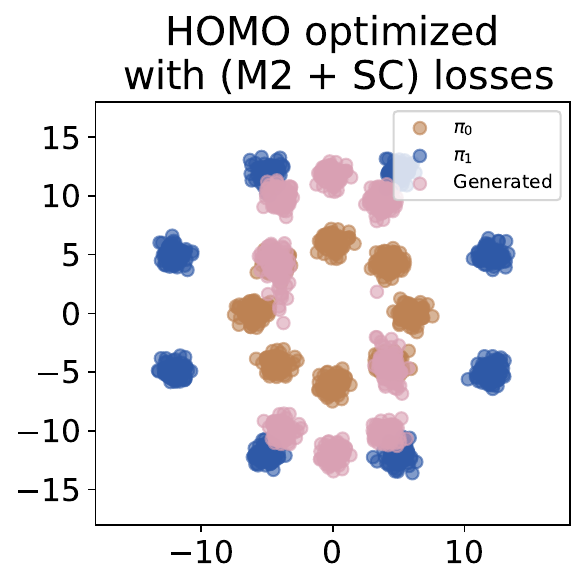}}
\subfloat[(M1 + M2 + SC) (Ours)]{\includegraphics[width=0.24\textwidth]{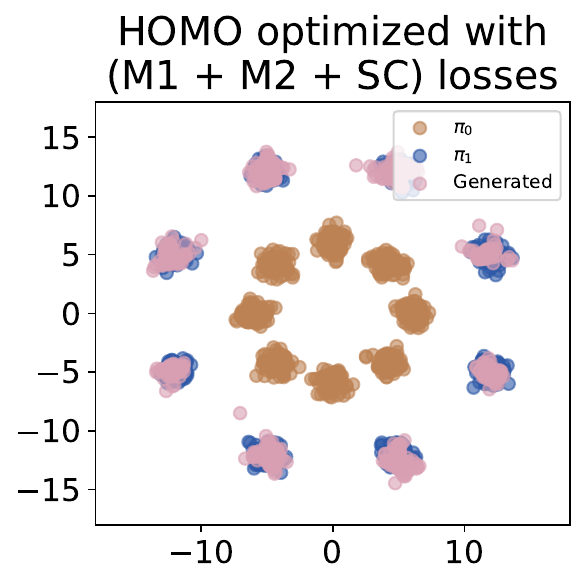}}
\caption{ 
\textbf{HOMO on a mixture of Gaussian datasets.} The first row shows results for the initial eight-mode dataset (a) and HOMO optimized with first-order loss (M1), second-order loss (M2), and self-consistency loss (SC) Figures (b-d). The second row presents combinations of losses: M1+M2 (e), M1+SC \cite{fhla24} (f), M2+SC (g), and M1+M2+SC (Ours) (h). Quantitative results are shown in Table~\ref{tab:euclidean_distance}. 
}
\label{fig:mixture_of_gaussian_experiment}
\end{figure*}

\begin{table}[!ht] 
\centering
\caption{
\textbf{Euclidean distance loss on Gaussian datasets.} Lower values indicate more accurate distribution matching. Optimal values are in \textbf{Bold}, with \underline{Underlined} numbers representing second-best results.
For qualitative results, please refer to Figure~\ref{fig:mixture_of_gaussian_experiment}.
}
\label{tab:euclidean_distance}
\begin{tabular}{l|c|c|c}
\toprule
& \textbf{Four} & \textbf{Five} & \textbf{Eight} \\
\textbf{Losses}  & \textbf{mode} & \textbf{mode} & \textbf{mode} \\
\midrule
M1              & 2.759 & 3.281 & 3.321 \\
M2              & 11.089 & 6.554 & 10.830 \\
SC              & 6.761 & 10.893 & 7.646 \\
M1 + M2          & 0.941 & 1.097 & \underline{0.977} \\
M2 + SC          & 8.708 & 9.212 & 4.801 \\
M1 + SC  \cite{fhla24}        & \underline{0.820} & \underline{1.067} & 1.084 \\
M1 + M2 + SC   (Ours)   & \textbf{0.809} & \textbf{0.917} & \textbf{0.778} \\
\bottomrule
\end{tabular}
\end{table}

\begin{figure*}[!ht]  
\centering
\subfloat[(M1 + M2) / spin]{\includegraphics[width=0.24\textwidth]{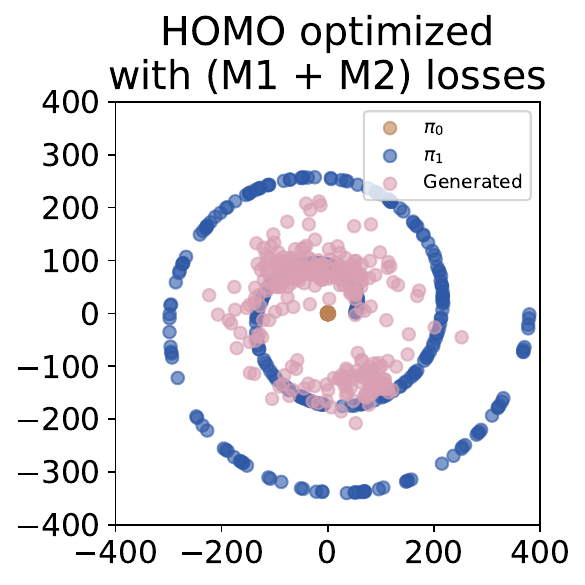}}
\subfloat[(M1 + SC) / spin ]{\includegraphics[width=0.24\textwidth]{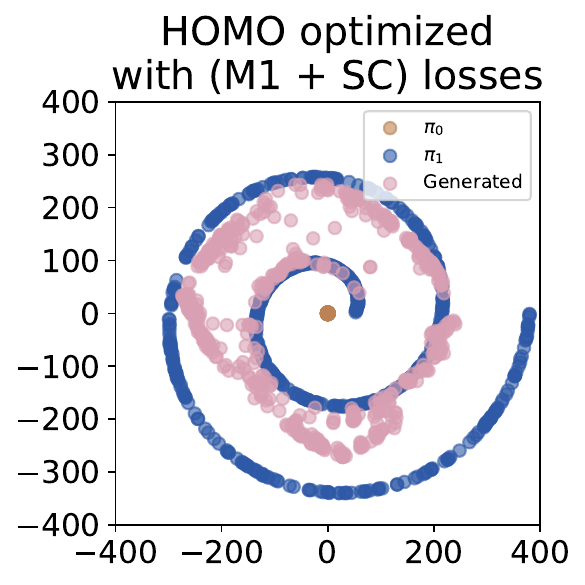}}
\subfloat[(M2 + SC) / spin]{\includegraphics[width=0.24\textwidth]{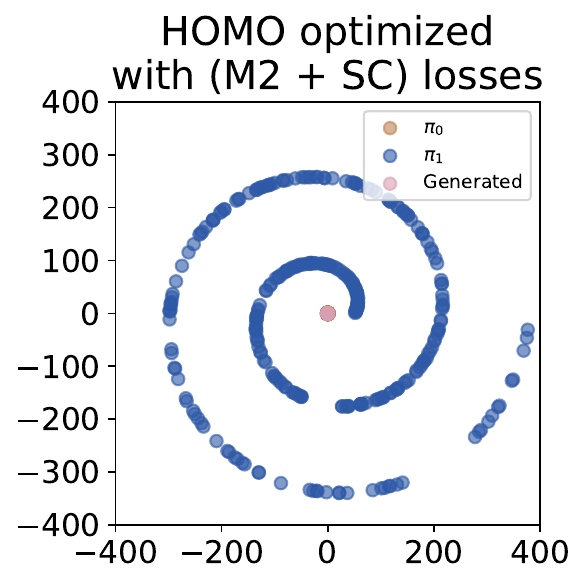}}
\subfloat[(M1 + M2 + SC) / spin]{\includegraphics[width=0.24\textwidth]{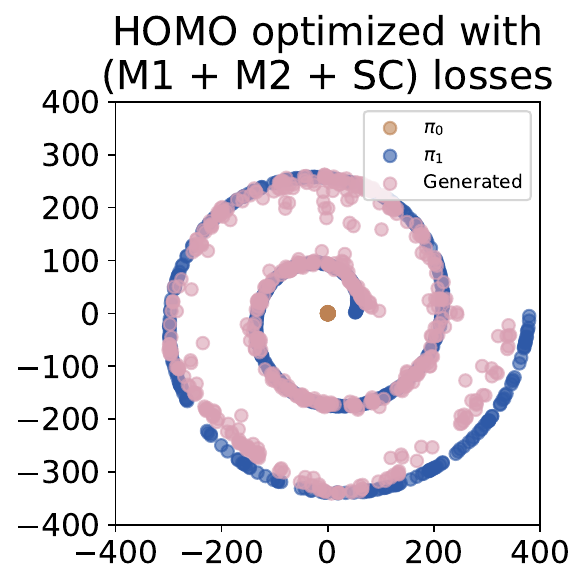}}\\
\caption{
\textbf{HOMO on complex datasets (Spin).} Results show HOMO optimized with various loss combinations: M1+M2 (a), M1+SC \cite{fhla24} (b), M2+SC (c), and M1+M2+SC (Ours) (d). Quantitative results are in Table~\ref{tab:euclidean_distance_other_distributions}.
}
\label{fig:circle_irr_circle_spiral}
\end{figure*}

\begin{table}[!ht]
\centering
\caption{
\textbf{Euclidean distance loss on complex datasets.} Lower values indicate better distribution matching. Optimal results are in \textbf{Bold}, with the second-best marked in \underline{Underlined}.
For qualitative results of complex distribution experiments, please refer to Figure~\ref{fig:circle_irr_circle_spiral} and Figure \ref{fig:m1_m2_appendix}, \ref{fig:m1_sc_appendix}, \ref{fig:m2_sc_appendix}, \ref{fig:m1_m2_sc_appendix}.
}
\label{tab:euclidean_distance_other_distributions}
\begin{tabular}{l|c|c|c|c}
\toprule
\textbf{Losses} & \textbf{Circle} & \textbf{Irregular} & \textbf{Spiral} & \textbf{Spin}\\
\midrule
M1 + M2          & \underline{0.642} & \underline{0.731} & 7.233 & 31.009\\
M1 + SC        & 0.736 & 0.743 & \underline{3.289} & \underline{12.055}\\
M2 + SC          & 7.233 & 0.975 & 10.096 & 50.499\\
M1 + M2 + SC      & \textbf{0.579} & \textbf{0.678} & \textbf{1.840} & \textbf{10.066}\\
\bottomrule
\end{tabular}
\end{table}

\subsection{Experiment setup} \label{sec:exp:experiment_setup}

We evaluate HOMO on a variety of data distributions and different combinations of losses. We would like to restate that the HOMO with first order loss and self-consistency loss is equal to the original One-step Shortcut model \cite{fhla24}, i.e., M1+SC. Furthermore, M1+M2+SC and M1+M2+M3+SC are our proposed methods. 

For the distribution dataset, in the left-most figure of Figure~\ref{fig:mixture_of_gaussian_experiment}, we show an example of an eight-mode Gaussian distribution. The source distribution $\pi_0$ and the target distribution $\pi_1$ are constructed as mixture distributions, each consisting of eight equally weighted Gaussian components. Each Gaussian component has a variance of $0.3$. This setup presents a challenging transportation problem, requiring the flow to handle multiple modes and cross-modal interactions.

We implement HOMO according to the losses defined in Definition~\ref{def:HOMO_loss}, which include the first-order loss $\ell_{2,1,\theta_1}(x_t, \dot{x}_t^{\True})$, the second-order loss $\ell_{2,2,\theta_2, \theta_1}(x_t, \ddot{x}_t^{\True})$, and the self-consistency loss $\ell_{\mathrm{selfc}}(x_t, \dot{x}_t^{\mathrm{target}})$. 
Following Remark~\ref{rem:simplicity_notations}, we denote first-order matching as M1, which implies that HOMO is optimized solely by the first-order loss $\ell_{2,1,\theta_1}(x_t, \dot{x}_t^{\True})$. Second-order matching is denoted as M2, where HOMO is optimized only by the second-order loss $\ell_{2,2,\theta_2, \theta_1}(x_t, \ddot{x}_t^{\True})$. We refer to HOMO optimized solely by the self-consistency loss as SC, denoted by $\ell_{\mathrm{selfc}}(x_t, \dot{x}_t^{\mathrm{target}})$.
For all experiments, we optimize models by the sum of squared error (SSE). 
For the target transport trajectory setting, we follow the VP ODE framework from~\cite{rectified_flow}, which is $x_t = \alpha_t x_0 + \beta_t x_1$. We choose $\alpha_t = \exp(-\frac{1}{4} a(1-t)^2 - \frac{1}{2} b(1-t))$ and $\beta_t = \sqrt{1 - \alpha_t^2}$, with hyperparameters $a = 19.9$ and $b = 0.1$.

\subsection{Mixture of Gaussian experiments} \label{sec:results_and_analysis}

We analyze the performance of HOMO on Gaussian mixture datasets \cite{lssz24_gm} with varying modes (four, five, and eight). The most challenging is the eight-mode distribution, where HOMO with all three losses (M1+M2+SC) produces the best results, achieving the lowest Euclidean distance.

The eight-mode Gaussian mixture distribution dataset (Figure~\ref{fig:mixture_of_gaussian_experiment} (a) ) contains eight Gaussian distributions whose variance is $0.3$. 
Eight source mode (\textbf{brown}) positioned at a distance $D_0 = 6$ from the origin, and eight target mode (\textbf{indigo}) positioned at a distance $D_0 = 13$ from the origin, each mode sample 100 points. 
HOMO optimized with first-order, second-order, and self-consistency losses is the only model that can accurately learn the target eight-mode Gaussian distribution, achieving high precision as evidenced by the lowest Euclidean distance loss among all tested configurations.
We emphasize the importance of the second-order loss. Without it, the model struggles to accurately capture finer distribution details (Figure~\ref{fig:mixture_of_gaussian_experiment} (f)). However, when included, the model better matches the target distribution (Figure~\ref{fig:mixture_of_gaussian_experiment} (h)).

We further analyze how each loss contributes to the final performance of the HOMO. (1) The first-order loss enables HOMO to learn the general structure of the target distribution, but it struggles to capture finer details, as shown in Figure~\ref{fig:mixture_of_gaussian_experiment} (b) and Figure~\ref{fig:mixture_of_gaussian_experiment} (g). (2) The second-order loss can lead to overfitting in the target distribution, as shown in Figure~\ref{fig:mixture_of_gaussian_experiment}(c). When used alone, the second-order loss may cause the model to focus too much on details and lose sight of the broader distribution. (3) The self-consistency loss enhances the concentration of the learned distribution, as shown in Figure~\ref{fig:mixture_of_gaussian_experiment} (d). Without the self-consistency loss, as shown in Figure~\ref{fig:mixture_of_gaussian_experiment} (e), the learned distribution becomes more sparse. 

\subsection{Complex distribution experiments} \label{sec:exp:complex_distribution}
In this section, we conduct experiments on datasets with complex distributions, where we expect our HOMO model to learn the transformation from a regular source distribution to an irregular target distribution.

We first introduce the dataset used in Figure~\ref{fig:circle_irr_circle_spiral}. In the spin dataset, we sample $600$ points from a Gaussian distribution with a variance of $0.3$ for both the source and target distributions.
The second-order loss is essential for accurate fitting, particularly for irregular and spiral distributions. As shown in Figure~\ref{fig:circle_irr_circle_spiral} (b) and (d), incorporating the second-order loss allows the model to better align with the outer boundaries of the target distribution.

We emphasize the critical role of the second order loss in the success of our HOMO model for learning complex distributions. As demonstrated in Figure~\ref{fig:circle_irr_circle_spiral} (b), the original shortcut model, which includes only first-order and self-consistency losses, fails to accurately fit the outer circle distribution. In contrast, the result of our HOMO model, shown in Figure~\ref{fig:circle_irr_circle_spiral} (d), illustrates that adding the second-order loss enables the model to generate points within the outer circle. This highlights the importance of the second-order loss in enabling the model to learn more complex distributions.

We have also performed experiments with HOMO optimized using each loss individually, as well as on other distribution datasets. Due to space limitations, we refer the reader to Section~\ref{sec:app:rectified_flow}, \ref{sec:app:second_order}, and \ref{sec:app:self_consistency} for further details.

Based on the above analysis, we find that each loss plays a crucial role in enhancing HOMO's ability to learn arbitrary target distributions, with the second-order loss further improving its performance.

\subsection{Third-order HOMO}

As discussed in previous sections, second-order HOMO has shown great performance on various distribution datasets. Therefore, in this section, we further investigate the performance of HOMO for adding an additional third-order loss.

We begin by introducing the dataset we used in this section.
We use three kinds of datasets: 2 Round spin, 3 Round spin, and Dot-Circle datasets. In 2 Round spin dataset and 3 Round spin dataset, we both sample 600 points from Gaussian distribution with $0.3$ variance for both source distribution and target distribution. In Dot-Circle datasets, we sample 300 points from the center dot and 300 points from the outermost circle, combine them as source distribution, and then sample 600 points from 2 round spin distribution. 

The qualitative results from the experiments (Figure~\ref{fig:main_paper_3rd_order_homo}) demonstrate that the additional third-order loss enables HOMO to better capture more complex target distributions. For instance, the comparison between Figure~\ref{fig:main_paper_3rd_order_homo} (c) and Figure~\ref{fig:main_paper_3rd_order_homo} (d), as well as between Figure~\ref{fig:main_paper_3rd_order_homo} (g) and Figure~\ref{fig:main_paper_3rd_order_homo} (h), illustrates how the third-order loss enhances the model's ability to fit more intricate distributions. These findings are consistent with the quantitative results presented in Table~\ref{tab:euclidean_distance_complex_datasets_complex}.

The trend of incorporating higher-order loss terms to improve model performance highlights the importance of introducing higher-order supervision in modeling the complex dynamics of distribution transformations between the source and target distributions. By capturing higher-order interactions, the model is better equipped to understand and adapt to the intricate relationships within the data, leading to more accurate and robust learning. This approach underscores the value of enriching the model's training objective to handle the complexities inherent in real-world data distributions.

\begin{figure*}[!ht]
\centering
\subfloat[SC / 2 Round]
{\includegraphics[width=0.24\textwidth]{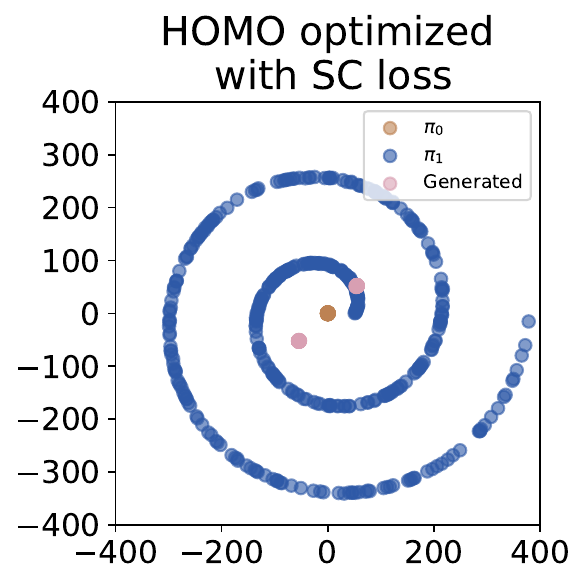}}
\subfloat[(M1 + SC)) / 2 Round]{\includegraphics[width=0.24\textwidth]{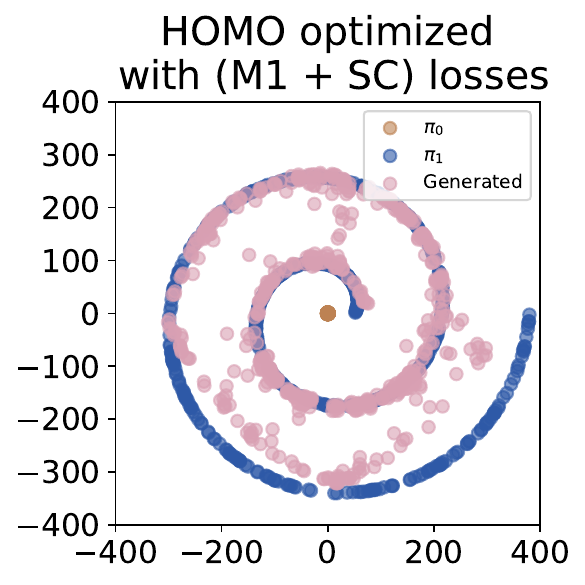}}
\subfloat[(M1 + M2 + SC)) / 2 Round]{\includegraphics[width=0.24\textwidth]{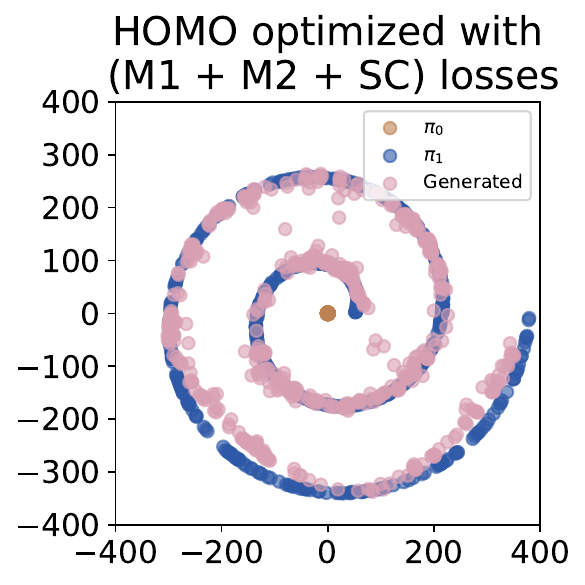}}
\subfloat[(M1+M2+M3+SC))/2 Round]{\includegraphics[width=0.24\textwidth]{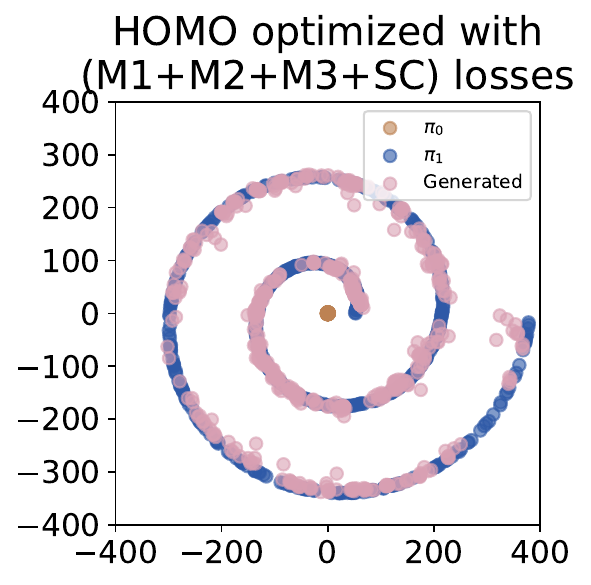}} \\
\subfloat[SC / 3 Round]
{\includegraphics[width=0.24\textwidth]{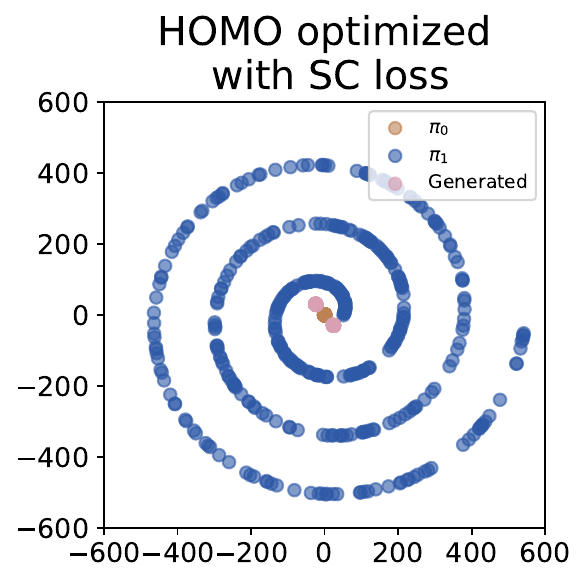}}
\subfloat[(M1 + SC)) / 3 Round]{\includegraphics[width=0.24\textwidth]{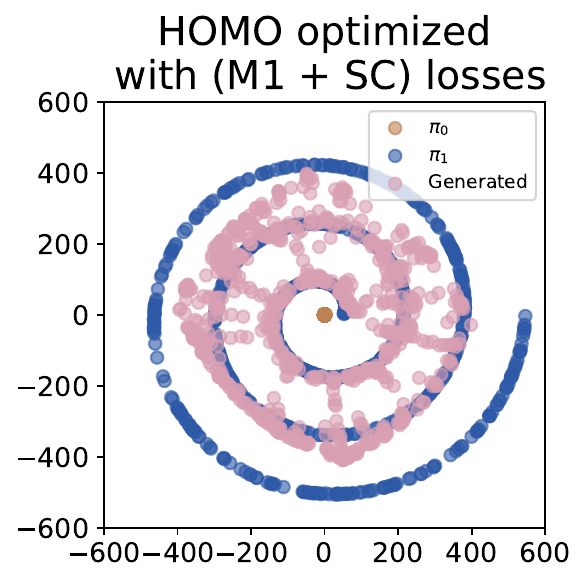}}
\subfloat[(M1 + M2 + SC)) / 3 Round]{\includegraphics[width=0.24\textwidth]{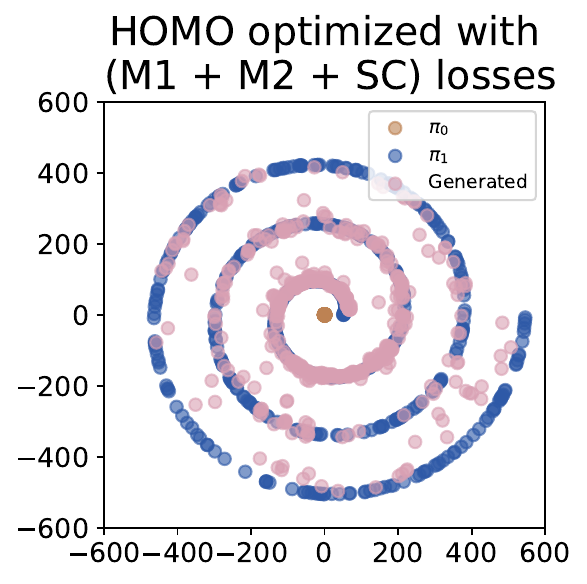}}
\subfloat[(M1+M2+M3+SC))/3 Round]{\includegraphics[width=0.24\textwidth]{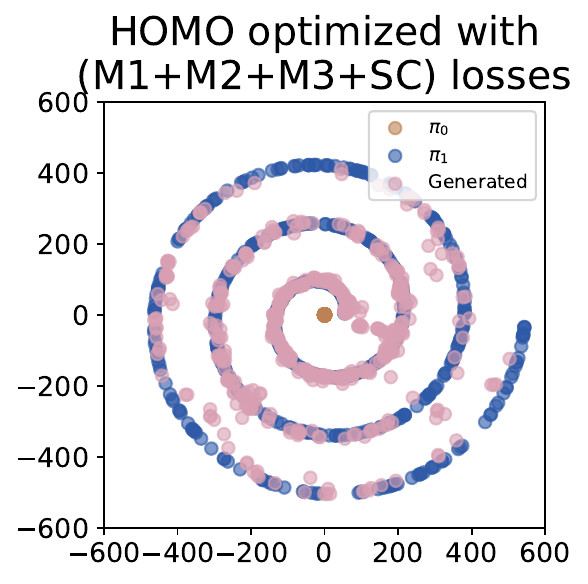}} \\
\subfloat[SC / DC]
{\includegraphics[width=0.24\textwidth]{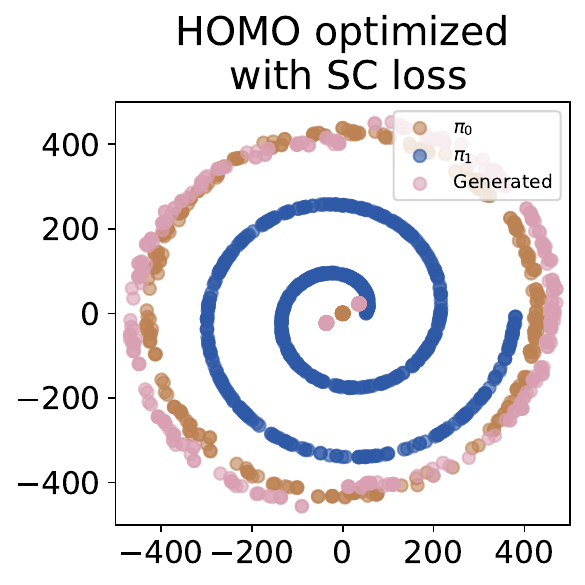}}
\subfloat[(M1 + SC)) / DC]{\includegraphics[width=0.24\textwidth]{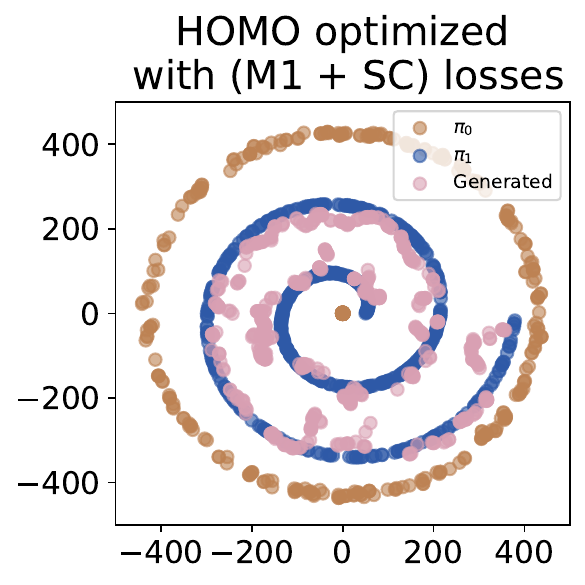}}
\subfloat[(M1 + M2 + SC)) / DC]{\includegraphics[width=0.24\textwidth]{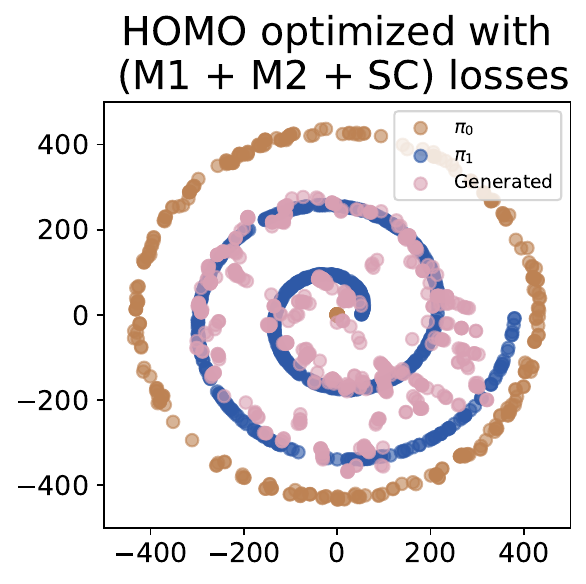}}
\subfloat[(M1+M2+M3+SC)) / DC]{\includegraphics[width=0.24\textwidth]{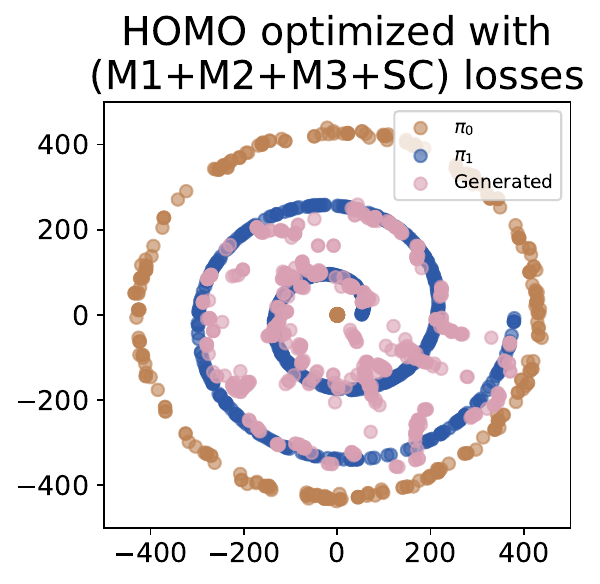}} \\
\caption{
\textbf{Third-Order HOMO results on complex datasets.} We present the third-order HOMO results in three kinds of complex datasets: 2-round spiral (2 Round), 3-round spiral (3 Round), and dot-circle (DC) datasets. From left to right, we present results of HOMO optimized with different kinds of losses. \textbf{Left most, Figure (a), (e), (i), (m):} (SC) HOMO optimized with self-consistency loss; \textbf{Middle left, Figure (b), (f), (j), (n):} (M1+SC \cite{fhla24})  HOMO optimized with first-order and self-consistency losses; \textbf{Middle right, Figure (c), (g), (k), (o):} (M1+M2+SC (Ours)) HOMO optimized with first-order, second-order and self-consistency losses; \textbf{Right most, Figure (d), (h), (l), (p):} (M1+M2+M3+SC (Ours)) HOMO optimized with first-order, second-order, third-order and self-consistency losses. 
A quantitative evaluation of the complex distribution experiments is presented in Table~\ref{tab:euclidean_distance_complex_datasets_complex}.
}
\label{fig:main_paper_3rd_order_homo}
\end{figure*}

\begin{table}[!ht] 
\centering
\caption{
\textbf{Euclidean distance loss of three complex distribution datasets under original trajectory setting.} Lower values indicate more accurate distribution transfer results. Optimal values are highlighted in \textbf{Bold}. And \underline{Underlined} numbers represent the second best (second lowest) loss value for each dataset (row). 
For the qualitative results of a mixture of Gaussian experiments, please refer to Figure~\ref{fig:main_paper_3rd_order_homo}.
}
\label{tab:euclidean_distance_complex_datasets_complex}
\small
\begin{tabular}{l|c|c|c}
\toprule
& \textbf{2 Round} & \textbf{3 Round} & \textbf{Dot-} \\
\textbf{Loss terms}  & \textbf{spin} & \textbf{spin} & \textbf{Circle} \\
\midrule
SC              & 59.490 & 50.981 & 89.974 \\
M1 + SC \cite{fhla24}       & 17.866 & 23.606 & 37.550 \\
M1 + M2 + SC (Ours)     & \underline{9.417} & \underline{13.085} & \underline{30.679} \\
M1 + M2 + SC + M3	(Ours)	& \textbf{7.440} & \textbf{10.679} & \textbf{26.819} \\
\bottomrule
\end{tabular}
\end{table}

\section{Conclusion} \label{sec:conclusion}

In this work, we introduced HOMO (High-Order Matching for One-Step Shortcut Diffusion), a novel framework that incorporates high-order dynamics into the training and sampling processes of Shortcut models. By leveraging high-order supervision, our method significantly enhances the geometric consistency and precision of learned trajectories.
Theoretical analyses demonstrate that high-order supervision ensures stability and generalization across different phases of the generative process. These findings are supported by extensive experiments, where HOMO outperforms original Shortcut models \cite{fhla24}, achieving more accurate distributional alignment and fewer suboptimal trajectories.
The integration of high-order terms establishes a new style for geometrically-aware generative modeling, highlighting the importance of capturing higher-order dynamics for accurate transport learning. Our results suggest that high-order supervision is a powerful tool for improving the fidelity and robustness of flow-based generative models. 

\ifdefined\isarxiv
\else
\section*{Impact Statement}
This paper presents work whose goal is to advance the field of Machine Learning. There are many potential societal consequences of our work, none of which we feel must be specifically highlighted here.
\fi

\ifdefined\isarxiv
%\section*{Acknowledgments}
\bibliographystyle{alpha}
\bibliography{ref} 
\else
\bibliography{ref}
\bibliographystyle{icml2025}

\fi

\newpage
\onecolumn
\appendix

\begin{center}
    \textbf{\LARGE Appendix}
\end{center}

% {\hypersetup{linkcolor=black}
% \tableofcontents
% }

\paragraph{Roadmap.}  
In Section~\ref{sec:app:original_algorithm}, we introduce the Shortcut Model Training and Sampling Algorithm. Section~\ref{sec:app:related_work} discusses related works that inspire our approach. 
Section~\ref{sec:app:main_theorems} states the tools from \cite{fsi+24} used in our analysis. Section~\ref{sec:app:higher_order_flow_matching} explores the theory behind Higher-Order Flow Matching. Section~\ref{sec:app:empirical_ablation_study} investigates the impact of different optimization terms through empirical ablation studies. Section~\ref{sec:app:complex_distribution_experiment} examines model performance on complex distribution experiments. 
Section~\ref{sec:app:3rd_homo} extends HOMO to third-order dynamics and evaluates its effectiveness on complex tasks. Section~\ref{sec:app:computational_cost} quantifies the computational and optimization costs associated with different configurations.

\section{Original Algorithm}\label{sec:app:original_algorithm}
Here we introduce Shortcut Model Training and Sampling Algorithm from Page 5 of~\cite{fhla24}
\begin{algorithm}[!ht]\caption{Shortcut Model Training from page 5 of~\cite{fhla24}}
\begin{algorithmic}[1]
\While{not converged}
\State $x_0 \sim \N (0, I), x_1 \sim D, (d, t) \sim p(d, t)$
\State $x_t \gets (1 - t) x_0 + t x_1$ \Comment{Noise data point}
\For{first $k$ batch elements}
\State $s_{\mathrm{target}} \gets x_1 - x_0$ \Comment{Flow-matching target}
\State $d \gets 0$
\EndFor
\For{other batch elements}
\State $s_t \gets u_1 ( x_t, t, d )$ \Comment{Fitst small step}
\State $x_{t + d} \gets x_t + s_t d$ \Comment{Follow ODE}
\State $s_{t + d} \gets u_1 ( x_{t + d}, t + d, d )$ \Comment{Second small step}
\State $s_{\mathrm{target}} \gets$ stopgrad $( s_t + s_{t + d} ) / 2$ \Comment{Self-consistency target}
\EndFor
\State $\theta \gets \nabla_\theta { \| u_1 ( x_t, t, 2d ) - s_{\mathrm{target}} \|^2 }$
\EndWhile
\end{algorithmic}
\end{algorithm}

\begin{algorithm}[!ht]\caption{Shortcut model. Sampling from page 5 of~\cite{fhla24}}
\begin{algorithmic}[1]
\State $x \sim \N (0, I)$
\State $d \gets 1 / M$
\State $t \gets 0$
\For{$n \in [0, \dots, M - 1]$}
\State $x \gets x + d \cdot u_1 (x, t, d)$
\State $t \gets t + d$
\EndFor
\State \textbf{return} $x$
\end{algorithmic}
\end{algorithm}

\section{More related work}\label{sec:app:related_work}

In this section, we discuss more related work which inspire our work.
\paragraph{Large Language Models.}Neural networks built upon the Transformer architecture~\cite{vsp+17} have swiftly risen to dominate modern machine learning approaches in natural language processing. Extensive Transformer models, trained on wide-ranging and voluminous datasets while encompassing billions of parameters, are often termed large language models (LLM) or foundation models~\cite{bha+21}. Representative instances include BERT~\cite{dclt19}, PaLM~\cite{cnd+22}, Llama~\cite{tli+23}, ChatGPT~\cite{chatgpt}, GPT4~\cite{o23}, among others. These LLMs have showcased striking general intelligence abilities~\cite{bce+23} in various downstream tasks. Numerous adaptation methods have been developed to tailor LLMs for specific applications, such as adapters~\cite{eyp+22,zhz+23,ghz+23,zjk+23}, calibration schemes~\cite{zwf+21,cpp+23}, multitask fine-tuning \cite{gfc+21a,zzj+23a,vnr+23,zzj+23b}, prompt optimization~\cite{gfc+21b,lac+21}, scratchpad approaches \cite{naa+21}, instruction tuning~\cite{ll21,chl+22,mkd+22}, symbol tuning~\cite{jla+23}, black-box tuning~\cite{ssy+22}, and reinforcement learning from human feedback (RLHF)~\cite{owj+22}. Additional lines of research endeavor to boost model efficiency without sacrificing performance across diverse domains, for example in \cite{dswy22_coreset,swy23,gswy23,gsy23_hyper,gsy23_coin,bsy23,dms23_spar,gms23,ssx23_nns,qss23_gnn,cls+24,lsy24,cll+24_icl,lss+24_multi_layer,cll+24_rope,lsss24_dp_ntk,lss+24_relu,lls+24_prune,llss24_sparsegpt,llsz24_nn_tw,cls24_grams,lls+24_io,cll+24_ssm,chl+24_rope_grad,kll+24_fixed,kll+25,kll+25_tc,lls+25_graph,hwg+24}.

\section{Tools from Previous Works}\label{sec:app:main_theorems}

We state the tools in \cite{fsi+24} that we will use to prove our main results.

\subsection{Definitions of Besov Space}

\begin{definition}[Modulus of Smoothness]
\label{def:modulus_smoothness}
Let $\Omega$ be a domain in $\R^d$. For a function 
$f \in L^{p'}(\Omega)$ with $p' \in (0,\infty]$, 
the $r$-th modulus of smoothness of $f$ is defined by
\begin{align*}
    w_{r, p'}(f,t) = \sup_{\|h\|_2 \leq t} \|\Delta_h^r (f) \|_{p'},
\end{align*}
where the finite difference operator $\Delta_h^r (f)(x)$ is given by
\begin{align*}
    \Delta_h^r (f)(x) =
\begin{cases}
\sum_{j=0}^{r} \binom{r}{j} (-1)^{r-j} f(x + jh), & \mathrm{if } x + jh \in \Omega \mathrm{ for all } j, \\
0, & \mathrm{otherwise}.
\end{cases}
\end{align*}
\end{definition}

\begin{definition}[Besov Seminorm]
\label{def:besov_seminorm}
Let $0 < p', q' \leq \infty$, $s > 0$, and set $r := | s | + 1$. The Besov seminorm of $f \in L^{p'}(\Omega)$ is defined as
\begin{align*}
| f |_{B^{s}_{p',q'}} :=
\begin{cases}
\left( \int_0^{\infty} (t^{-s} w_{r, p'}(f,t))^{q'} \frac{dt}{t} \right)^{\frac{1}{q'}}, & q' < \infty, \\
\sup_{t>0} t^{-s} w_{r, p'}(f,t), & q' = \infty.
\end{cases}
\end{align*}
\end{definition}

\begin{definition}[Besov Space]
\label{def:besov_space}
The Besov space $B^{s}_{p',q'}(\Omega)$ is the function space equipped with the norm
\begin{align*}
    \| f \|_{B^{s}_{p',q'}} := \| f \|_{p'} + | f |_{B^{s}_{p',q'}},
\end{align*}
It consists of all functions $f \in L^{p'}(\Omega)$ such that
\begin{align*}
    B^{s}_{p',q'}(\Omega) := \{ f \in L^{p'}(\Omega) \mid \| f \|_{B^{s}_{p',q'}} < \infty \}.
\end{align*}
\end{definition}

\begin{remark}
The parameter $s$ governs the degree of smoothness of functions in $B^{s}_{p',q'}(\Omega)$. In particular, when $p' = q'$ and $s$ is an integer, the Besov space $B^{s}_{p',q'}(\Omega)$ coincides with the standard Sobolev space of order $s$. For further details on the properties and applications of Besov spaces, see \cite{t92}.
\end{remark}

\subsection{B-spline}

\begin{definition}[Indicator Function]\label{def:indicator_func}
Let ${\cal N}(x)$ be the characteristic function defined by
\begin{align*}
    {\cal N}(x) &=
    \begin{cases}
        1, & x \in [0,1],\\
        0, & \mathrm{otherwise}.
    \end{cases}
\end{align*}
\end{definition}

\begin{definition}[Cardinal B-Spline]\label{def:cardinal_b_spline}
For $\ell \in \N$, the cardinal B-spline of order $\ell$ is defined by
\begin{align*}
    {\cal N}_\ell(x) 
    := & 
    \underbrace{{\cal N} * {\cal N} * \cdots * {\cal N}}_{\ell+1 \mathrm{ times}}(x),
\end{align*}
where $*$ denotes the convolution operation. Explicitly, the convolution of two functions $f,g: \R \to \R$ is given by
\begin{align*}
    (f * g)(x) =& \int_{\R} f(x-y)  g(y)  \d y.
\end{align*}
Thus, ${\cal N}_\ell(x)$ is obtained by convolving ${\cal N}$ with itself $(\ell+1)$ times.
\end{definition}

\begin{definition}[Tensor Product B-Spline Basis]\label{def:tensor_product_b_spline}
For a multi-index $k \in \N^d$ and $j \in  \Z^d$, the tensor product B-spline basis in $ \R^d$ of order $\ell$ is defined as
\begin{align*}
    M_{k,j}^d(x) :=&~ \prod_{i=1}^d {\cal N}_\ell (2^{k_i} x_i - j_i).
\end{align*}
This basis is constructed as the product of univariate B-splines, scaled and translated according to the parameters $k$ and $j$.
\end{definition}

\begin{definition}[B-Spline Approximation in Besov Spaces in \cite{s19, oas23}]\label{def:approx_b_spline}
A function $f$ in the Besov space can be approximated using a superposition of tensor product B-splines as
\begin{align*}
    f_N(x) =&~ \sum_{(k,j)} \alpha_{k,j}  M_{k,j}^d(x),
\end{align*}
where the summation is taken over appropriate index sets $(k,j)$, and the coefficients $\alpha_{k,j}$ are real numbers that determine the contribution of each basis function.
\end{definition}

\subsection{Class of Neural Networks}
\begin{definition}[Neural Network Class in \cite{fsi+24}]\label{def:sparse_nn_class}
\label{def:nn_class}
Let $L \in \mathbb{N}$ denote the depth (number of layers), $W = (W_1, W_2, \dots, W_{L+1}) \in \mathbb{N}^{L+1}$ the width configuration of the network, $S \in \mathbb{N}$ a sparsity constraint, and $B > 0$ a norm bound. The class of neural networks ${\cal M}(L,W,S,B)$ is defined as
\begin{align*}
    {\cal M}(L,W,S,B) := 
     \{ &~
    \psi_{A^{(L)},b^{(L)}} 
    \circ \cdots \circ 
    \psi_{A^{(2)},b^{(2)}} (A^{(1)}x + b^{(1)} ) 
    m|  
    A^{(i)} \in \R^{W_{i+1} \times W_i}, 
    b^{(i)} \in \R^{W_{i+1}}, \\ 
    &~
    \sum_{i=1}^{L}  (\|A^{(i)}\|_0 + \|b^{(i)}\|_0 ) \leq S, 
    \quad 
    \max_{1 \leq i \leq L}  \{\|A^{(i)}\|_\infty \vee \|b^{(i)}\|_\infty \} \leq B
     \}.
\end{align*}
Here, the function $\psi_{A,b}: \R^{W_i} \to \R^{W_{i+1}}$ represents the affine transformation with ReLU activation, given by
\begin{align*}
    \psi_{A,b}(z) = A \cdot \mathsf{ReLU}(z) + b, \quad \mathrm{where} \quad \mathsf{ReLU}(z) = \max\{0, z\}.
\end{align*}
The sparsity constraint ensures that the total number of nonzero entries in all weight matrices and bias vectors does not exceed $S$, while the norm constraint limits their maximum absolute values to $B$.
\end{definition}

\subsection{Assumptions}

\begin{remark}\label{rmk:setting_on_assumption}
We introduce a small positive constant $\delta>0$ and denote by $N$ the number of basis functions in the B-spline used to approximate $p_t(x)$. The value of $N$ is determined by the sample size $n$, specifically following the relation $N = n^{\frac{d}{2s+d}}$, which balances the approximation error and the complexity of both the B-spline and the neural network. \end{remark}

\begin{definition}[Stopping Time]\label{def:stop_time}
  As we introduce in Remark~\ref{rmk:setting_on_assumption}, we define the stopping time as $T_0 = N^{-R_0}$, where $R_0$ is a parameter to be specified later, and consider solving the ODE backward in time from $t=1$ down to $t=T_0$.   
\end{definition}

\begin{definition}[Reduced Cube]
Let $I^d = [-1,1]^d$ denote the $d$-dimensional cube. To mitigate boundary effects when $N$ is large, we define the reduced cube as
\begin{align*}
I^d_N := [-1 + N^{-(1-\kappa\delta)}, 1 - N^{-(1-\kappa\delta)}]^d,  
\end{align*}
where the parameter $\kappa > 0$ will be specified later in Assumption~\ref{ass:A3}.
\end{definition}

\begin{assumption}[Smoothness and support of $p_0$]\label{ass:A1}
The target probability $P_0$ has support contained in $I^d$, and its probability density function $p_0$ satisfies
\begin{align*}
    p_0 \in B^s_{p',q'}(I^d)
  \quad\mathrm{and}\quad
  p_0 \in B^{\wt{s}}_{p',q'}(I^d \setminus I^d_N)
  \quad\mathrm{with}\quad
  \wt{s} \geq \max\{6s-1, 1\}.
\end{align*}
\end{assumption}

\begin{assumption}[Boundedness away from $0$ and above]\label{ass:A2}
There exists a constant $C_0>0$ such that
\begin{align*}
    C_0^{-1} \leq  p_0(x) \leq  C_0 \quad\mathrm{for all}\quad x \in I^d.  
\end{align*}
\end{assumption}

\begin{assumption}[Form of $(\alpha_t,\beta_t)$ and their bounds]\label{ass:A3}
There are constants $\kappa \geq \frac{1}{2}$, $b_0>0$, $\wt{\kappa}>0$, and $\wt{b}_0>0$ such that, for sufficiently small $t \geq T_0$,
\begin{align*}
  \alpha_t  =  b_0, t^{\kappa},
  \quad\mathrm{and}\quad
  1 - \beta_t  =  \wt{b}_0, t^{\wt{\kappa}}.
\end{align*}
Moreover, there exist $D_0 > 0$ and $K_0 > 0$ such that $\forall t \in [T_0,1]$, we have
\begin{align*}
  D_0^{-1}  \leq  \alpha_t^2 + \beta_t^2   \leq   D_0,
  \quad
   |\dot{\alpha}_t | +  |\dot{\beta}_t|  \leq   N^{K_0}.
\end{align*}
\end{assumption}

\begin{assumption}[Additional bound in the critical case $\kappa = \frac{1}{2}$]\label{ass:A4}
If $\kappa = \frac{1}{2}$, then there exist $b_1>0$ and $D_1>0$ such that, for all $0 \leq \gamma < R_0$,
\begin{align*}
\int_{T_0}^{N^{-\gamma}} \{ (\dot{\alpha}_t )^2 +  (\dot{\beta}_t )^2 \} \d t \leq  
D_1  (\log N )^{b_1}.
\end{align*}
\end{assumption}

\begin{assumption}[Lipschitz bound on the first moment]\label{ass:A5}
There is a constant $C_L > 0$ such that, for all $t \in [T_0, 1]$,
\begin{align*}
\| \frac{\partial}{\partial x} \int y p_t(y | x) \d y \|_{\mathrm{op}} \leq  C_L.
\end{align*}
\end{assumption}

\subsection{Approximation error for small \texorpdfstring{$t$}{}}
\begin{lemma}[Theorem 7 in \cite{fsi+24}]\label{lem:error_approx_small_t}
    Under Assumptions~\ref{ass:A1}~\ref{ass:A2}~\ref{ass:A3}~\ref{ass:A4} and \ref{ass:A5}, and if the following holds 
    \begin{itemize}
        \item $L = O(\log^4 N )$.
        \item $\|W\|_{\infty} = O(N \log^{6} N)$
        \item $S = O(N \log^{8} N)$
        \item $B = \exp(O (\log N \log \log N ) ).$
    \end{itemize}
    Then there exists a neural network $\phi  \in  {\cal M}(L,W,S,B)$ such that, for sufficiently large $N$, we have
    \begin{align*}
        \int  \|\phi(x,t) - \dot{x}_t^\mathrm{true} \|^{2}_2 p_{t}(x) \d x \lesssim  (\dot{\alpha}_t^{2} \log N  +  \dot{\beta}_t^{2} ) N^{-\frac{2s}{d}},
    \end{align*}
    holds for any $t \in [T_{0}, 3T_{*}]$.
    In addition, $\phi$ can be taken so we have
    \begin{align*}
         \|\phi(\cdot,t) \|_\infty = O(  |\dot{\alpha}_t | \sqrt{\log n} +  |\dot{\beta}_t |) .
    \end{align*}
\end{lemma}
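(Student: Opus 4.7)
The plan is to follow the construction of \cite{fsi+24} by reducing the approximation of $\dot{x}_t^{\mathrm{true}} = \dot{\alpha}_t \E[x_0 \mid x_t] + \dot{\beta}_t \E[x_1 \mid x_t]$ to B-spline approximation of a smoothed density and its moments, and then realizing the B-splines via a ReLU network of the stated depth, width, sparsity, and weight-norm.

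First, I would rewrite the target into density form. Since $x_t = \alpha_t x_0 + \beta_t x_1$ with $x_0 \sim \N(0, I)$ independent of $x_1 \sim P_0$, Tweedie/Bayes-type identities express both $\E[x_0 \mid x_t]$ and $\E[x_1 \mid x_t]$ as ratios of Gaussian-smoothed moment integrals of $p_0$ over the marginal density $p_t(x_t)$. The problem therefore reduces to simultaneously approximating $p_t(x)$ and the first-moment integrals of $p_0$ against a Gaussian kernel, and then forming a stable ratio whose residual, weighted by $p_t$, is small in $L^2$.

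Second, I would approximate $p_0$ by a finite superposition of tensor-product B-splines $\wt p_0 = \sum_{(k,j)} \alpha_{k,j} M_{k,j}^d$ with $N$ active basis functions. The standard Besov approximation theory (Definition~\ref{def:approx_b_spline}) then gives the $L^2$-rate $N^{-s/d}$ under Assumption~\ref{ass:A1}, while Assumption~\ref{ass:A2} keeps the coefficients in a benign range. The boundary condition $\wt s \geq \max\{6s-1, 1\}$ in Assumption~\ref{ass:A1} prevents accuracy loss on the annulus $I^d \setminus I^d_N$. Gaussian convolution preserves the rate and yields matching approximations $\wt p_t$ and of the moment integrals; Assumption~\ref{ass:A3} controls the scaling of $\alpha_t, \dot{\alpha}_t, \beta_t, \dot{\beta}_t$ that appear as multiplicative factors in the velocity decomposition.

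Third, I would implement $\wt p_t$ and the numerator integrals by a ReLU network. Each B-spline basis function is piecewise polynomial, hence realizable up to error $N^{-C}$ by a Yarotsky-type sub-network of depth $O(\log^4 N)$ and polylogarithmic width and sparsity per basis function. Summing $N$ such sub-networks and composing with a truncated-reciprocal approximator (cutting off where $\wt p_t < \tau$ for a polynomially small $\tau$) yields a single network $\phi \in {\cal M}(L,W,S,B)$ whose depth, width, sparsity, and weight-norm match the stated budgets. The $L^\infty$ bound on $\phi$ follows by combining the truncation threshold, the pointwise norm of the numerator, and the scaling of $\dot{\alpha}_t, \dot{\beta}_t$.

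Finally, I would collect the error contributions. The B-spline approximation contributes $N^{-2s/d}$ in $L^2(p_t)$; the ratio step contributes a $\log N$ overhead that becomes critical when $\kappa = 1/2$, which is exactly where Assumption~\ref{ass:A4} is used to bound $\int_{T_0}^{N^{-\gamma}}(\dot{\alpha}_t^2 + \dot{\beta}_t^2)\,\d t$ by $(\log N)^{b_1}$; the Lipschitz-type bound in Assumption~\ref{ass:A5} on $\int y p_t(y \mid x)\,\d y$ controls the contribution from the region where the truncated denominator is active; and boundary effects are absorbed by the reduced cube $I^d_N$ together with the stronger boundary smoothness $\wt s$. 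The outer prefactor $\dot{\alpha}_t^2 \log N + \dot{\beta}_t^2$ then emerges directly from the chain rule on the velocity decomposition combined with this logarithmic ratio overhead. The main obstacle is precisely the ratio step: showing that dividing by the truncated $\wt p_t$ does not blow up the $L^2(p_t)$ error uniformly for $t \in [T_0, 3T_*]$ is delicate at the critical exponent $\kappa = 1/2$, where $\alpha_t^{-1}$ grows like $t^{-1/2}$ near $T_0$, and this is where Assumption~\ref{ass:A4} together with the reduced-cube refinement does the essential work.
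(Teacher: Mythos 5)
This lemma is not proved in the paper at all: it is imported verbatim as Theorem~7 of \cite{fsi+24} and used as a black-box tool (the paper's appendix section explicitly calls it ``Tools from Previous Works''), so there is no in-paper argument to compare your attempt against. What you have written is a plausible reconstruction of the strategy used in the cited source — Besov/B-spline approximation of $p_0$, Gaussian smoothing to handle $p_t$ and the posterior means in $\dot{x}_t^{\mathrm{true}} = \dot{\alpha}_t\E[x_0\mid x_t] + \dot{\beta}_t\E[x_1\mid x_t]$, ReLU realization of the B-spline superposition, and a truncated-reciprocal step for the ratio — and the role you assign to Assumptions~\ref{ass:A1}--\ref{ass:A5} is broadly the right one.

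However, as a proof it is a plan rather than an argument. The quantitatively delicate steps are asserted, not derived: (i) you never verify that the composition of $N$ spline sub-networks with the truncated-reciprocal approximator actually fits inside the stated budgets $L = O(\log^4 N)$, $\|W\|_\infty = O(N\log^6 N)$, $S = O(N\log^8 N)$, $B = \exp(O(\log N\log\log N))$, which is the entire content of the neural-network half of the statement; (ii) the uniform-in-$t$ control of the ratio step in $L^2(p_t)$ over $[T_0, 3T_*]$ is exactly the hard part and you only name it as ``delicate''; and (iii) the attribution of the $\log N$ prefactor (and the $\sqrt{\log n}$ in the sup-norm bound) to ``ratio overhead'' is off — in this line of analysis those factors arise from truncating the Gaussian component $x_0$ at radius of order $\sqrt{\log n}$, so that $|\dot{\alpha}_t\E[x_0\mid x_t]|$ is clipped at $|\dot{\alpha}_t|\sqrt{\log n}$ and contributes $\dot{\alpha}_t^2\log N$ after squaring, while Assumption~\ref{ass:A4} enters when integrating the error over time near $T_0$ in the critical case $\kappa = 1/2$, not in establishing the pointwise bound stated here. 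In the context of this paper the expected justification is simply the citation; if you do want a self-contained proof, the sketch needs these pieces filled in before it can be checked.
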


\subsection{Approximation error for large \texorpdfstring{$t$}{}}

\begin{lemma}[Theorem 7 in \cite{fsi+24}]\label{lem:error_approx_large_t}
    Fix $t_{*} \in [T_{*},1]$ and let $\eta>0$ be arbitrary, under Assumptions~\ref{ass:A1}~\ref{ass:A2}~\ref{ass:A3}~\ref{ass:A4} and \ref{ass:A5}, and if the following holds 
    \begin{itemize}
        \item $L = O(\log^4 N )$.
        \item $\|W\|_{\infty} = O(N)$
        \item $S = O(t_{*}^{- d\kappa} N^{\delta\kappa})$
        \item $B = \exp(O (\log N \log \log N ) ).$
    \end{itemize}
  Then there exist a neural network $\phi  \in  {\cal M}(L,W,S,B)$ such that
\begin{align*}
  \int  \| \phi(x,t) - \dot{x}^\mathrm{true}_t \|^2 p_{t} (x) \d x \lesssim (\dot{\alpha}_t^{2} \log N  +   \dot{\beta}_t^{2} ) N^{-\eta}.
 \end{align*}
 holds for any $t \in [2t_{*}, 1]$. In addition, $\phi$ can be taken so we have
 \begin{align*}
       \|\phi(\cdot,t) \|_{\infty} =  O( |\dot{\alpha}_t | \log N +  |\dot{\beta}_t | ).
 \end{align*}
\end{lemma}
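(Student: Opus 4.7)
The plan is to adapt the construction underlying Theorem~7 of \cite{fsi+24} to the large-$t$ regime $t\in[2t_*,1]$. At a high level, once $t$ is bounded away from $0$, the interpolated density $p_t$ is smoothed by a well-spread Gaussian factor, so the velocity $\dot{x}_t^{\mathrm{true}}$ is an analytic functional of $p_t$ with bounded denominators, and can therefore be fit by a moderate-size ReLU network at the essentially arbitrary polynomial rate $N^{-\eta}$. The parallel result for small $t$, Lemma~\ref{lem:error_approx_small_t}, forces the rate down to $N^{-2s/d}$ and requires a much larger width $O(N\log^6 N)$; here the Gaussian smoothing lets us be far more generous.

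First I would rewrite $\dot{x}_t^{\mathrm{true}}=\E[\dot\alpha_t x_0+\dot\beta_t x_1\mid x_t]$ using a Tweedie-type identity, so that the problem reduces to approximating $\nabla\log p_t$, with the linear-in-$(\dot\alpha_t,\dot\beta_t)$ prefactor accounting for the $(\dot\alpha_t^2\log N+\dot\beta_t^2)$ appearing in the stated bound. Second, I would build a cardinal-B-spline surrogate $\tilde p_0$ for $p_0$ on $I^d$ via Definition~\ref{def:approx_b_spline}; by Assumption~\ref{ass:A1} and Besov B-spline approximation theory, $\tilde p_0$ already achieves an $L^2$-rate $N^{-s/d}$, which is far stronger than the $N^{-\eta}$ we need. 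Third, I would push this forward through the Gaussian convolution induced by $\alpha_t x_0$ to obtain a closed-form smooth surrogate of $p_t$ and its score, using Assumption~\ref{ass:A3} together with $t\ge 2t_*\ge 2T_*$ to guarantee that $\alpha_t$ and $\alpha_t^2+\beta_t^2$ are bounded below on the relevant range, so that the denominator in Tweedie is safely nonzero. Fourth, I would realize the resulting rational function of Gaussian-weighted tensor-product B-splines as a ReLU network $\phi\in{\cal M}(L,W,S,B)$ using the standard approximation-theoretic toolbox: ReLU networks with depth $O(\log^4 N)$ and $\mathrm{poly}(\log(1/\varepsilon))$ nonzero weights can simulate cardinal B-splines, multiplication, exponentials on bounded ranges, and bounded-denominator division to precision $\varepsilon$. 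Setting $\varepsilon=N^{-\eta/2}$ and absorbing constants into the prescribed $S=O(t_*^{-d\kappa}N^{\delta\kappa})$ fits the stated budget, while the norm bound $B=\exp(O(\log N\log\log N))$ accommodates the Gaussian weights. Finally, I would enforce $\|\phi(\cdot,t)\|_\infty=O(|\dot\alpha_t|\log N+|\dot\beta_t|)$ by clipping the output with a constant number of additional ReLUs, justified a priori by standard Gaussian tail bounds on $\E[x_0\mid x_t]$.

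The main obstacle will be controlling the division by $p_t(x)$ through a sparse ReLU network while keeping the approximation rate at the prescribed $N^{-\eta}$ uniformly in $t\in[2t_*,1]$. In the small-$t$ companion result one has to carefully handle the boundary cube $I^d_N$ via Assumption~\ref{ass:A1} and an intricate cutoff scheme; in the present regime, the Gaussian smoothing provided by a nontrivial $\alpha_t$ washes out boundary subtleties, which is precisely the reason the sparsity budget can drop to $O(t_*^{-d\kappa}N^{\delta\kappa})$ and the target rate can be taken as an arbitrary $\eta>0$. The delicate bookkeeping is to propagate the $t$-dependence through the B-spline-to-network translation so that one network works uniformly across the entire slab $[2t_*,1]$ while respecting the width, sparsity, and norm budgets; the $\log^4 N$ depth budget — much larger than required at any fixed $t$ — is exactly what accommodates this uniformity and the logarithmic factors in the final $L^\infty$ bound.
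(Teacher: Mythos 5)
You are attempting to prove a statement that this paper never proves: Lemma~\ref{lem:error_approx_large_t} is imported verbatim as Theorem~7 of \cite{fsi+24}, stated in the appendix section ``Tools from Previous Works'' precisely so that it can be used as a black box in the proofs of Theorems~\ref{thm:secon_order_large_t:formal} and \ref{thm:higher_order_large_t:formal}. So there is no in-paper argument to compare yours against; the Besov-space, B-spline, and sparse-network definitions reproduced in the appendix (Definitions~\ref{def:besov_space}, \ref{def:approx_b_spline}, \ref{def:sparse_nn_class}) are there only to make the citation self-contained, not as ingredients of a proof given here.

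Judged on its own terms, your outline is a reasonable reconstruction of the general strategy in that line of work (represent $\dot{x}_t^{\mathrm{true}}$ as a conditional expectation via a Tweedie-type identity, approximate $p_0$ by tensor-product B-splines using Assumption~\ref{ass:A1}, push through the Gaussian convolution, and realize the resulting expression in ${\cal M}(L,W,S,B)$), but as a proof it is only a sketch: the genuinely hard steps are asserted rather than established. In particular, ``bounded-denominator division'' by $p_t$ is not automatic even for $t\in[2t_*,1]$ — Assumption~\ref{ass:A2} bounds $p_0$ on $I^d$, but $p_t$ still decays in the Gaussian tails, so the cutoff/clipping scheme and the resulting error must be quantified, which is where the $\log N$ factors in both the error bound and the sup-norm bound $O(|\dot{\alpha}_t|\log N + |\dot{\beta}_t|)$ actually come from; the specific sparsity budget $S=O(t_*^{-d\kappa}N^{\delta\kappa})$ requires a careful count of active B-spline cells at resolution dictated by $\alpha_t \asymp t^{\kappa}$ (Assumption~\ref{ass:A3}), not just ``absorbing constants''; and achieving the rate $N^{-\eta}$ for arbitrary $\eta$ uniformly over the slab hinges on the Gaussian smoothing argument being made quantitative. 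Filling these in amounts to reproducing the analysis of \cite{fsi+24}; within this paper the correct move is simply to cite that result, as the authors do.
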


\section{Theory of Higher Order Flow Matching}\label{sec:app:higher_order_flow_matching}

We use $\frac{\d^k}{\d t^k} x_t^\mathrm{true}$ to denote the $k$-th order derivative of $x_t^\mathrm{true}$ with respect to $t$. Note that $\dot{x}^\mathrm{true}_t := \frac{\d}{\d t} x^\mathrm{true}_t$, and $\ddot{x}^\mathrm{true}_t := \frac{\d^2}{\d t^2} x^\mathrm{true}_t$.

\subsection{Approximation Error of Second Order Flow Matching for Small \texorpdfstring{$t$}{}}
\begin{theorem}[Approximation error of second order flow matching for small $t$, formal version of Theorem~\ref{thm:secon_order_small_t:informal}]\label{thm:secon_order_small_t:formal}
    Under Assumptions~\ref{ass:A1}~\ref{ass:A2}~\ref{ass:A3}~\ref{ass:A4} and \ref{ass:A5}, and if the following holds 
    \begin{itemize}
        \item $L = O(\log^4 N )$.
        \item $\|W\|_{\infty} = O(N \log^{6} N)$
        \item $S = O(N \log^{8} N)$
        \item $B = \exp(O (\log N \log \log N ) ).$
    \end{itemize}
    Then there exists neural networks $\phi_{1},\phi_2  \in  {\cal M}(L,W,S,B)$ such that, for sufficiently large $N$, we have
\begin{align*}
    &~ \int (\|\phi_1(x, t) - \dot{x}_t^\mathrm{true}\|_2^2 + \|\phi_2(x, t) - \ddot{x}_t^\mathrm{true}\|_2^2) p_t(x) \d x \\ \lesssim &~ (\dot{\alpha}_t^2 \log N + \dot{\beta}_t^2 ) N^{- \frac{2s}{d}} +
    \E_{x \sim P_t}[\|\dot{x}^\mathrm{true}_t - \ddot{x}^\mathrm{true}_t\|_2^2]
\end{align*}
    holds for any $t \in [T_{0}, 3T_{*}]$. In addition, $\phi_1, \phi_2$ can be taken so we have
    \begin{align*}
         \|\phi_1(\cdot,t) \|_\infty = O(  |\dot{\alpha}_t | \sqrt{\log n} +  |\dot{\beta}_t |) \mathrm{~~~and~~~} \|\phi_2(\cdot,t) \|_\infty = O(  |\dot{\alpha}_t | \sqrt{\log n} +  |\dot{\beta}_t |).
    \end{align*}
\end{theorem}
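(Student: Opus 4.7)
The structure of the bound in the theorem strongly suggests an approach in which the two networks $\phi_1$ and $\phi_2$ are constructed from the same approximator, with the $\ddot{x}_t^\mathrm{true}$ error absorbed via a triangle inequality that explicitly produces the $\E_{x\sim P_t}[\|\dot{x}_t^\mathrm{true}-\ddot{x}_t^\mathrm{true}\|_2^2]$ term. Under this strategy, the theorem follows almost directly from the existing first-order approximation guarantee in Lemma~\ref{lem:error_approx_small_t}.

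The plan is as follows. First, I would invoke Lemma~\ref{lem:error_approx_small_t} under Assumptions~\ref{ass:A1}--\ref{ass:A5}, and under the stated architectural choices $L=O(\log^4 N)$, $\|W\|_\infty=O(N\log^6 N)$, $S=O(N\log^8 N)$, $B=\exp(O(\log N\log\log N))$, to obtain a network $\phi_1\in\mathcal{M}(L,W,S,B)$ satisfying, for all $t\in[T_0,3T_*]$,
\begin{align*}
\int \|\phi_1(x,t)-\dot{x}_t^\mathrm{true}\|_2^2\, p_t(x)\,\d x \;\lesssim\; (\dot{\alpha}_t^2\log N+\dot{\beta}_t^2)N^{-\frac{2s}{d}},
\end{align*}
together with the pointwise bound $\|\phi_1(\cdot,t)\|_\infty=O(|\dot{\alpha}_t|\sqrt{\log n}+|\dot{\beta}_t|)$. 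Second, I would simply take $\phi_2:=\phi_1$, so that $\phi_2\in\mathcal{M}(L,W,S,B)$ inherits the same sup-norm bound required in the theorem's conclusion.

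Third, I would control the error of $\phi_2$ against $\ddot{x}_t^\mathrm{true}$ by adding and subtracting $\dot{x}_t^\mathrm{true}$ and applying the elementary inequality $\|a+b\|_2^2\le 2\|a\|_2^2+2\|b\|_2^2$:
\begin{align*}
\|\phi_2(x,t)-\ddot{x}_t^\mathrm{true}\|_2^2 \;\le\; 2\|\phi_1(x,t)-\dot{x}_t^\mathrm{true}\|_2^2 + 2\|\dot{x}_t^\mathrm{true}-\ddot{x}_t^\mathrm{true}\|_2^2.
\end{align*}
Integrating against $p_t(x)\,\d x$ and summing with the already-controlled $\phi_1$ error yields
\begin{align*}
\int\bigl(\|\phi_1(x,t)-\dot{x}_t^\mathrm{true}\|_2^2+\|\phi_2(x,t)-\ddot{x}_t^\mathrm{true}\|_2^2\bigr)p_t(x)\,\d x \;\lesssim\; (\dot{\alpha}_t^2\log N+\dot{\beta}_t^2)N^{-\frac{2s}{d}}+\E_{x\sim P_t}\bigl[\|\dot{x}_t^\mathrm{true}-\ddot{x}_t^\mathrm{true}\|_2^2\bigr],
\end{align*}
which is exactly the claimed bound after absorbing the constants into $\lesssim$.

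The proof is essentially a repackaging of Lemma~\ref{lem:error_approx_small_t}, so there is no serious obstacle; all heavy lifting (B-spline/Besov approximation, sparse ReLU network construction, boundary handling) is already encapsulated in the lemma. The only point requiring minor care is bookkeeping: one must check that choosing $\phi_2=\phi_1$ is admissible (the theorem asserts existence of neural networks in the stated class with matching sup-norm bounds, which is immediate), and that the factor-of-$2$ losses from the triangle inequality are harmless under $\lesssim$. If instead one preferred $\phi_2$ to directly approximate $\ddot{x}_t^\mathrm{true}$, one would need a fresh Besov-regularity hypothesis on the conditional second moment driving $\ddot{x}_t^\mathrm{true}$, which the theorem statement avoids precisely by paying the $\E\|\dot{x}_t^\mathrm{true}-\ddot{x}_t^\mathrm{true}\|_2^2$ surrogate term.
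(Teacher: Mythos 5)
Your proposal matches the paper's own proof essentially step for step: the paper likewise obtains $\phi_1$ from Lemma~\ref{lem:error_approx_small_t}, chooses $\phi_2$ as a network satisfying the same first-order guarantee (your choice $\phi_2=\phi_1$ is exactly this), and bounds $\int\|\phi_2(x,t)-\ddot{x}_t^\mathrm{true}\|_2^2 p_t(x)\,\d x$ by adding and subtracting $\dot{x}_t^\mathrm{true}$ and using $(a+b)^2\le 2a^2+2b^2$ to produce the surrogate term $\E_{x\sim P_t}[\|\dot{x}_t^\mathrm{true}-\ddot{x}_t^\mathrm{true}\|_2^2]$. Your argument is correct and requires no changes.
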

\begin{proof}
    Suppose that $t \in [T_0, 3T_*]$.
    By Lemma~\ref{lem:error_approx_small_t}, there is $\phi_1 \in {\cal M}(L,W,S,B)$ such that
    \begin{align}
        \label{eq:tmp_1}
        \int (\|\phi_1(x, t) - \dot{x}_t^\mathrm{true}\|_2^2 p_t(x) \lesssim  (\dot{\alpha}_t^2 \log N + \dot{\beta}_t^2 ) N^{- \frac{2s}{d}}.
    \end{align}
    
    Next, we can show that there exists some $\phi_2 \in {\cal M}(L,W,S,B)$ such that
    \begin{align}
         \int \|\phi_2(x, t) - \ddot{x}_t^\mathrm{true}\|_2^2 p_t(x) \d x 
         = &~\int \|\phi_2(x, t) - \dot{x}_t^\mathrm{true} +\dot{x}_t^\mathrm{true} - \ddot{x}_t^\mathrm{true}\|_2^2 p_t(x) \d x \notag \\
         \leq &~ \int (\|\phi_2(x, t) - \dot{x}_t^\mathrm{true}\|_2 + \| \dot{x}_t^\mathrm{true}-\ddot{x}_t^\mathrm{true}\|_2)^2 p_t(x) \d x \notag \\
         \leq &~ \int 2(\|\phi_2(x, t) - \dot{x}_t^\mathrm{true}\|_2^2 + \| \dot{x}_t^\mathrm{true}-\ddot{x}_t^\mathrm{true}\|_2^2) p_t(x) \d x \notag \\
         = &~ 2\int \|\phi_2(x, t) - \dot{x}_t^\mathrm{true}\|_2^2 p_t(x) \d x + 2\int \| \dot{x}_t^\mathrm{true}-\ddot{x}_t^\mathrm{true}\|_2^2 p_t(x) \d x \notag \\
         = &~ 2\int \|\phi_2(x, t) - \dot{x}_t^\mathrm{true}\|_2^2p_t(x)\d x + 2\E_{x \sim P_t}[\|\dot{x}^\mathrm{true}_t - \ddot{x}^\mathrm{true}_t\|_2^2 \notag \\
         \lesssim &~ (\dot{\alpha}_t^2 \log N + \dot{\beta}_t^2 ) N^{- \frac{2s}{d}} + \E_{x \sim P_t}[\|\dot{x}^\mathrm{true}_t - \ddot{x}^\mathrm{true}_t\|_2^2] \label{eq:tmp_2}
    \end{align}
    where the first step follows from the basic algebra, the second step follows from the triangle inequality, the third step follows from $(a+b)^2 \leq 2a^2 + 2b^2$, the fourth step follows from basic algebra, the fifth step follows from the definition of expectation, and the last step follows from Lemma~\ref{lem:error_approx_small_t}.

    Finally, by Eq.~\eqref{eq:tmp_1} and Eq.~\eqref{eq:tmp_2}, for any $t \in [T_0, 3T_*]$, we have
    \begin{align*}
    &~ \int (\|\phi_1(x, t) - \dot{x}_t^\mathrm{true}\|_2^2 + \|\phi_2(x, t) - \ddot{x}_t^\mathrm{true}\|_2^2) p_t(x) \d x \\ \lesssim &~ (\dot{\alpha}_t^2 \log N + \dot{\beta}_t^2 ) N^{- \frac{2s}{d}} +
    \E_{x \sim P_t}[\|\dot{x}^\mathrm{true}_t - \ddot{x}^\mathrm{true}_t\|_2^2].
    \end{align*}

    Moreover, by Lemma~\ref{lem:error_approx_small_t}, $\phi_1, \phi_2$ can be taken so we have
    \begin{align*}
         \|\phi_1(\cdot,t) \|_\infty = O(  |\dot{\alpha}_t | \sqrt{\log n} +  |\dot{\beta}_t |) \mathrm{~~~and~~~} \|\phi_2(\cdot,t) \|_\infty = O(  |\dot{\alpha}_t | \sqrt{\log n} +  |\dot{\beta}_t |).
    \end{align*}
    Thus, the proof is complete.
\end{proof}

\subsection{Approximation Error of Higher Order Flow Matching for Small \texorpdfstring{$t$}{}}
\begin{theorem}[Approximation error of higher order flow matching for small $t$]\label{thm:higher_order_small_t:formal}
    Under Assumptions~\ref{ass:A1}~\ref{ass:A2}~\ref{ass:A3}~\ref{ass:A4} and \ref{ass:A5}, and if the following holds 
    \begin{itemize}
        \item $L = O(\log^4 N )$.
        \item $\|W\|_{\infty} = O(N \log^{6} N)$
        \item $S = O(N \log^{8} N)$
        \item $B = \exp(O (\log N \log \log N ) )$
        \item $K = O(1)$
    \end{itemize}
    Then there exists neural networks $\phi_{1},\phi_2, \ldots, \phi_K \in {\cal M}(L,W,S,B)$ such that, for sufficiently large $N$, we have
\begin{align*}
    &~ \int (\sum_{k=1}^K\|\phi_k(x, t) - \frac{\d^k}{\d t^k}x_t^\mathrm{true}\|_2^2) p_t(x) \d x \\ \lesssim &~ (\dot{\alpha}_t^2 \log N + \dot{\beta}_t^2 ) N^{- \frac{2s}{d}} +
    \sum_{k=1}^{K-1}\E_{x \sim P_t}[\|\frac{\d^k}{\d t^k}x_t^\mathrm{true} - \frac{\d^{k+1}}{\d t^{k+1}}x_t^\mathrm{true}\|_2^2] 
\end{align*}
    holds for any $t \in [T_{0}, 3T_{*}]$. In addition, for any $k \in [K]$, $\phi_k$ can be taken so we have
    \begin{align*}
         \|\phi_k(\cdot,t) \|_\infty = O(  |\dot{\alpha}_t | \sqrt{\log n} +  |\dot{\beta}_t |).
    \end{align*}
\end{theorem}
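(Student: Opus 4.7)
The plan is to mirror the argument of Theorem~\ref{thm:secon_order_small_t:formal} but iterate the triangle-inequality trick along a telescoping decomposition of derivative gaps. First I would invoke Lemma~\ref{lem:error_approx_small_t} exactly $K$ times to obtain neural networks $\phi_1,\phi_2,\ldots,\phi_K \in {\cal M}(L,W,S,B)$ with the stated complexity budget (since $K = O(1)$, the width/depth/sparsity/bound constants are preserved). Each $\phi_k$ is chosen as an approximator of the \emph{first-order} true velocity $\dot{x}_t^\mathrm{true}$, so that for every $k \in [K]$ and every $t \in [T_0, 3T_*]$,
\begin{align*}
    \int \|\phi_k(x,t) - \dot{x}_t^\mathrm{true}\|_2^2 \, p_t(x) \d x \lesssim (\dot{\alpha}_t^2 \log N + \dot{\beta}_t^2)\, N^{-\tfrac{2s}{d}}.
\end{align*}
The $L^\infty$ bound on each $\phi_k$ is inherited directly from the same lemma, so the norm claim in the theorem requires no additional work.

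Next, for each $k \in [K]$ I would decompose the $k$-th order error via the identity
\begin{align*}
    \phi_k(x,t) - \tfrac{\d^k}{\d t^k}x_t^\mathrm{true} = \bigl(\phi_k(x,t) - \dot{x}_t^\mathrm{true}\bigr) + \sum_{j=1}^{k-1}\bigl(\tfrac{\d^j}{\d t^j}x_t^\mathrm{true} - \tfrac{\d^{j+1}}{\d t^{j+1}}x_t^\mathrm{true}\bigr),
\end{align*}
which collapses telescopically on the right. Applying the elementary inequality $\|a_1 + \cdots + a_m\|_2^2 \leq m \sum_{i=1}^m \|a_i\|_2^2$ with $m = k$ and integrating against $p_t(x)$ yields
\begin{align*}
    \int \|\phi_k(x,t) - \tfrac{\d^k}{\d t^k}x_t^\mathrm{true}\|_2^2 p_t(x) \d x \leq k \int \|\phi_k - \dot{x}_t^\mathrm{true}\|_2^2 p_t \d x + k \sum_{j=1}^{k-1} \E_{x \sim P_t}\bigl[\|\tfrac{\d^j}{\d t^j}x_t^\mathrm{true} - \tfrac{\d^{j+1}}{\d t^{j+1}}x_t^\mathrm{true}\|_2^2\bigr].
\end{align*}

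Finally I would sum this over $k = 1,\ldots,K$. Each ``approximation'' term contributes the common bound $(\dot{\alpha}_t^2 \log N + \dot{\beta}_t^2)N^{-2s/d}$ times a constant at most $\sum_{k=1}^K k = O(K^2)$, and each consecutive gap $\|\tfrac{\d^j}{\d t^j}x_t^\mathrm{true} - \tfrac{\d^{j+1}}{\d t^{j+1}}x_t^\mathrm{true}\|_2^2$ is double-counted at most $\sum_{k=j+1}^K k = O(K^2)$ times. Since $K = O(1)$, all such combinatorial factors are absorbed into the $\lesssim$, giving precisely the claimed bound.

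The only mild obstacle is the constant-tracking at the telescoping step, i.e., verifying that each inter-derivative gap appears with a bounded coefficient after combining the $K$ rows. This is where the hypothesis $K = O(1)$ is crucial; without it, one would need a sharper handling (e.g., Cauchy--Schwarz with carefully chosen weights) to avoid a $K^2$ blowup. Beyond this, no new approximation-theoretic content is needed: the result is a purely structural consequence of Lemma~\ref{lem:error_approx_small_t} together with the triangle inequality applied along a telescoping chain of successive derivatives.
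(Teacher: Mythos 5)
Your proposal is correct and is essentially the paper's argument: every network is taken from Lemma~\ref{lem:error_approx_small_t} as an approximator of $\dot{x}_t^{\mathrm{true}}$, and the $k$-th order error is reduced to that approximation error plus the consecutive-derivative gaps $\E_{x \sim P_t}[\|\tfrac{\d^j}{\d t^j}x_t^{\mathrm{true}} - \tfrac{\d^{j+1}}{\d t^{j+1}}x_t^{\mathrm{true}}\|_2^2]$, with $K=O(1)$ absorbing the combinatorial constants. The only cosmetic difference is that you telescope in one shot using $\|a_1+\cdots+a_m\|_2^2 \leq m\sum_i\|a_i\|_2^2$, whereas the paper runs an induction on $k$ (base case Theorem~\ref{thm:secon_order_small_t:formal}) paying one gap per step via $(a+b)^2 \leq 2a^2+2b^2$; both bookkeepings are valid under $K=O(1)$.
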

\begin{proof}

    We first show that for any $k \geq 2$, for any $t \in [T_0, 3T_*]$, there exists $\phi \in {\cal M}(L,W,S,B)$ such that
    \begin{align}
    \label{eq:tmp_3}
    &~ \int \|\phi(x, t) - \frac{\d^k}{\d t^k}{x}_t^\mathrm{true}\|^2_2 p_t(x) \d x \notag \\
    \lesssim &~ 
    (\dot{\alpha}_t^2 \log N + \dot{\beta}_t^2 ) N^{- \frac{2s}{d}} +
    \sum_{j=1}^{k}\E_{x \sim P_t}[\|\frac{\d^{j}}{\d t^{j}}x_t^\mathrm{true} - \frac{\d^{j+1}}{\d t^{j+1}}x_t^\mathrm{true}\|_2^2].
    \end{align}

    We prove this by mathematical induction.

    \textbf{Base case.} The statements hold when $k = 2$ because of Lemma~\ref{thm:secon_order_small_t:formal}.

    \textbf{Induction step.} We assume that the statement hold for $k \geq 2$. We would like to show that it holds for $k+1$. We can show that, for any $t \in [T_0, 3T_*]$, there exists $\phi \in {\cal M}(L,S,W, B)$ such that
    \begin{align}
    &~ \int \|\phi(x, t) - \frac{\d^{k+1}}{\d t^{k+1}}{x}_t^\mathrm{true}\|^2_2 p_t(x) \d x \notag
    \\ = &~
    \int \|\phi(x, t) - \frac{\d^{k}}{\d t^{k}}{x}_t^\mathrm{true} + \frac{\d^{k}}{\d t^{k}}{x}_t^\mathrm{true} - \frac{\d^{k+1}}{\d t^{k+1}}{x}_t^\mathrm{true}\|^2_2 p_t(x) \d x \notag \\
    \leq &~ \int (\|\phi(x, t) - \frac{\d^{k}}{\d t^{k}}{x}_t^\mathrm{true}\|_2 + \| \frac{\d^{k}}{\d t^{k}}{x}_t^\mathrm{true} - \frac{\d^{k+1}}{\d t^{k+1}}{x}_t^\mathrm{true}\|_2 )^2 p_t(x) \d x \notag \\
    \leq &~ \int 2(\|\phi(x, t) - \frac{\d^{k}}{\d t^{k}}{x}_t^\mathrm{true}\|_2^2 + \| \frac{\d^{k}}{\d t^{k}}{x}_t^\mathrm{true} - \frac{\d^{k+1}}{\d t^{k+1}}{x}_t^\mathrm{true}\|_2^2) p_t(x) \d x \notag \\
    = &~ 2\int \|\phi(x, t) - \frac{\d^{k}}{\d t^{k}}{x}_t^\mathrm{true}\|_2^2p_t(x) \d x + 2 \int \| \frac{\d^{k}}{\d t^{k}}{x}_t^\mathrm{true} - \frac{\d^{k+1}}{\d t^{k+1}}{x}_t^\mathrm{true}\|_2^2 p_t(x) \d x \notag \\
    = &~ 2\int \|\phi(x, t) - \frac{\d^{k}}{\d t^{k}}{x}_t^\mathrm{true}\|_2^2p_t(x) \d x + 2 \E_{x \sim P_t} [ \| \frac{\d^{k}}{\d t^{k}}{x}_t^\mathrm{true} - \frac{\d^{k+1}}{\d t^{k+1}}{x}_t^\mathrm{true}\|_2^2]\notag \\
    \lesssim &~ (\dot{\alpha}_t^2 \log N + \dot{\beta}_t^2 ) N^{- \frac{2s}{d}} +
    \sum_{j=1}^{k}\E_{x \sim P_t}[\|\frac{\d^{j}}{\d t^{j}}x_t^\mathrm{true} - \frac{\d^{j+1}}{\d t^{j+1}}x_t^\mathrm{true}\|_2^2] + \E_{x \sim P_t} [ \| \frac{\d^{k}}{\d t^{k}}{x}_t^\mathrm{true} - \frac{\d^{k+1}}{\d t^{k+1}}{x}_t^\mathrm{true}\|_2^2]\notag \\
    = &~ (\dot{\alpha}_t^2 \log N + \dot{\beta}_t^2 ) N^{- \frac{2s}{d}} +
    \sum_{j=1}^{k+1}\E_{x \sim P_t}[\|\frac{\d^{j}}{\d t^{j}}x_t^\mathrm{true} - \frac{\d^{j+1}}{\d t^{j+1}}x_t^\mathrm{true}\|_2^2], \label{eq:tmp_4}
    \end{align}
    where the first step follows from basic algebra, the second step follows from triangle inequality, the third step follows from the Cauchy-Schwarz inequality, the fourth step follows from basic algebra, the fifth step follows from the definition of expectation, the six step follows from Eq.~\eqref{eq:tmp_3}.
    
     Hence, there exists $\phi_1, \phi_2, \ldots, \phi_K \in {\cal M}(L,W,S,B)$ such that for $k \in [K]$, for any $t \in [T_0, 3T_*]$, we have
    \begin{align}
    &~ \int \|\phi_k(x, t) - \frac{\d^k}{\d t^k}{x}_t^\mathrm{true}\|^2_2 p_t(x) \d x \notag \\
    \lesssim &~ 
    (\dot{\alpha}_t^2 \log N + \dot{\beta}_t^2 ) N^{- \frac{2s}{d}} +
    \sum_{j=1}^{k}\E_{x \sim P_t}[\|\frac{\d^{j}}{\d t^{j}}x_t^\mathrm{true} - \frac{\d^{j+1}}{\d t^{j+1}}x_t^\mathrm{true}\|_2^2]. \label{eq:tmp_5}
    \end{align}

    Taking the summation over $k \in [K]$, we have for any $t \in [T_0, 3T_*]$,
    \begin{align*}
        &~ \int \sum_{k=1}^K \|\phi_k(x, t) - \frac{\d^k}{\d t^k}{x}_t^\mathrm{true}\|^2_2 p_t(x) \d x \notag \\
    \lesssim &~ 
    K \cdot (\dot{\alpha}_t^2 \log N + \dot{\beta}_t^2 ) N^{- \frac{2s}{d}} +
    \sum_{k=1}^{K} (k \cdot \E_{x \sim P_t}[\|\frac{\d^{j}}{\d t^{j}}x_t^\mathrm{true} - \frac{\d^{j+1}}{\d t^{j+1}}x_t^\mathrm{true}\|_2^2]) \\
    \lesssim &~ ((\dot{\alpha}_t)^2 \log N + (\dot{\beta}_t)^2 ) N^{- \frac{2s}{d}} +
    \sum_{k=1}^{K} \E_{x \sim P_t}[\|\frac{\d^{j}}{\d t^{j}}x_t^\mathrm{true} - \frac{\d^{j+1}}{\d t^{j+1}}x_t^\mathrm{true}\|_2^2]
    \end{align*}
    where the first step follows from Eq.~\eqref{eq:tmp_5}, and the second step uses $K=O(1)$.
    
    Moreover, by Lemma~\ref{lem:error_approx_large_t}, $\phi_1, \phi_2, \ldots, \phi_K$ can be taken so we have for $k \in [K]$,
    \begin{align*}
         \|\phi_k(\cdot,t) \|_\infty = O(  |\dot{\alpha}_t | \log \sqrt{n} +  |\dot{\beta}_t |).
    \end{align*}
    Thus, the proof is complete.
\end{proof}

\subsection{Approximation Error of Second Order Flow Matching for Large \texorpdfstring{$t$}{}}
\begin{theorem}[Approximation error of second order flow matching for large $t$, formal version of Theorem~\ref{thm:secon_order_large_t:informal}]\label{thm:secon_order_large_t:formal}
    Fix $t_{*} \in [T_{*},1]$ and let $\eta>0$ be arbitrary, under Assumptions~\ref{ass:A1}~\ref{ass:A2}~\ref{ass:A3}~\ref{ass:A4} and \ref{ass:A5}, and if the following holds 
    \begin{itemize}
        \item $L = O(\log^4 N )$.
        \item $\|W\|_{\infty} = O(N)$
        \item $S = O(t_{*}^{- d\kappa} N^{\delta\kappa})$
        \item $B = \exp(O (\log N \log \log N ) ).$
    \end{itemize}
  Then there exist neural networks $\phi_{1},\phi_2  \in  {\cal M}(L,W,S,B)$ such that
\begin{align*}
    &~ \int (\|\phi_1(x, t) - \dot{x}_t^\mathrm{true}\|_2^2 + \|\phi_2(x, t) - \ddot{x}_t^\mathrm{true}\|_2^2) p_t(x) \d x \\ \lesssim &~ (\dot{\alpha}_t^{2} \log N  +   \dot{\beta}_t^{2} ) N^{-\eta} +
    \E_{x \sim P_t}[\|\dot{x}^\mathrm{true}_t - \ddot{x}^\mathrm{true}_t\|_2^2]
\end{align*}
    holds for any $t \in [2t_*, 1]$. In addition, $\phi_1, \phi_2$ can be taken so we have
    \begin{align*}
         \|\phi_1(\cdot,t) \|_\infty = O(  |\dot{\alpha}_t | \log N +  |\dot{\beta}_t |) \mathrm{~~~and~~~} \|\phi_2(\cdot,t) \|_\infty = O(  |\dot{\alpha}_t | \log N +  |\dot{\beta}_t |).
    \end{align*}
\end{theorem}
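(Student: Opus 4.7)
The plan is to mimic the proof of Theorem~\ref{thm:secon_order_small_t:formal} almost verbatim, but substituting Lemma~\ref{lem:error_approx_large_t} for Lemma~\ref{lem:error_approx_small_t} throughout so that the approximation guarantees are those valid for $t \in [2t_*,1]$ rather than $t \in [T_0, 3T_*]$. The budgets $L, \|W\|_\infty, S, B$ in the theorem statement already match those supplied by Lemma~\ref{lem:error_approx_large_t}, so the network class $\mathcal{M}(L,W,S,B)$ is common to both lemma and theorem.

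First I would invoke Lemma~\ref{lem:error_approx_large_t} directly to extract a network $\phi_1 \in \mathcal{M}(L,W,S,B)$ satisfying
\begin{align*}
\int \|\phi_1(x,t) - \dot{x}_t^\mathrm{true}\|_2^2\, p_t(x)\, \d x \lesssim (\dot\alpha_t^2 \log N + \dot\beta_t^2)\, N^{-\eta},
\end{align*}
together with the pointwise bound $\|\phi_1(\cdot,t)\|_\infty = O(|\dot\alpha_t|\log N + |\dot\beta_t|)$ for all $t \in [2t_*,1]$. This handles the first-order term and the first $L_\infty$ claim with no additional work.

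Next, I would construct $\phi_2$ by applying Lemma~\ref{lem:error_approx_large_t} a second time to obtain another element of $\mathcal{M}(L,W,S,B)$ that approximates $\dot{x}_t^\mathrm{true}$ on $[2t_*,1]$. The key trick, copied from the small-$t$ proof, is to write $\phi_2(x,t) - \ddot{x}_t^\mathrm{true} = [\phi_2(x,t) - \dot{x}_t^\mathrm{true}] + [\dot{x}_t^\mathrm{true} - \ddot{x}_t^\mathrm{true}]$ and apply the triangle inequality followed by $(a+b)^2 \leq 2a^2 + 2b^2$; integrating against $p_t(x)$ and recognizing the second contribution as $\E_{x \sim P_t}[\|\dot{x}_t^\mathrm{true} - \ddot{x}_t^\mathrm{true}\|_2^2]$ yields
\begin{align*}
\int \|\phi_2(x,t) - \ddot{x}_t^\mathrm{true}\|_2^2\, p_t(x)\, \d x \lesssim (\dot\alpha_t^2 \log N + \dot\beta_t^2)\, N^{-\eta} + \E_{x \sim P_t}[\|\dot{x}_t^\mathrm{true} - \ddot{x}_t^\mathrm{true}\|_2^2].
\end{align*}
The $L_\infty$ bound for $\phi_2$ is inherited from the lemma. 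Summing the two integral bounds gives the claimed inequality on the full sum $\|\phi_1 - \dot{x}_t^\mathrm{true}\|_2^2 + \|\phi_2 - \ddot{x}_t^\mathrm{true}\|_2^2$.

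There is essentially no obstacle here, since we are not attempting to approximate $\ddot{x}_t^\mathrm{true}$ by a dedicated network from first principles; we piggyback on the existing first-order approximation result and absorb the mismatch between $\dot{x}_t^\mathrm{true}$ and $\ddot{x}_t^\mathrm{true}$ into the additive expectation term that appears in the theorem statement. The only point requiring any attention is verifying that the architecture budget from Lemma~\ref{lem:error_approx_large_t} matches the one in the theorem and that the valid time interval $[2t_*,1]$ is preserved, both of which are immediate.
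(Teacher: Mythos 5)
Your proposal is correct and follows essentially the same route as the paper: invoke Lemma~\ref{lem:error_approx_large_t} for $\phi_1$, take $\phi_2$ as another approximator of $\dot{x}_t^\mathrm{true}$, and bound $\int\|\phi_2-\ddot{x}_t^\mathrm{true}\|_2^2 p_t\,\d x$ via the triangle inequality and $(a+b)^2\le 2a^2+2b^2$, absorbing the mismatch into $\E_{x\sim P_t}[\|\dot{x}_t^\mathrm{true}-\ddot{x}_t^\mathrm{true}\|_2^2]$. The $L_\infty$ bounds and the architecture budget carry over from the lemma exactly as you state.
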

\begin{proof}
    Suppose that $t \in [2t_*, 1]$.
    By Lemma~\ref{lem:error_approx_large_t}, there is $\phi_1 \in {\cal M}(L,W,S,B)$ such that
    \begin{align}
        \label{eq:tmp_1_large}
        \int (\|\phi_1(x, t) - \dot{x}_t^\mathrm{true}\|_2^2p_t(x)\d x \lesssim  (\dot{\alpha}_t^{2} \log N  +   \dot{\beta}_t^{2} ) N^{-\eta}.
    \end{align}
    
    Next, we can show that there exists some $\phi_2 \in {\cal M}(L,W,S,B)$ such that
    \begin{align}
         \int \|\phi_2(x, t) - \ddot{x}_t^\mathrm{true}\|_2^2 p_t(x) \d x 
         = &~\int \|\phi_2(x, t) - \dot{x}_t^\mathrm{true} +\dot{x}_t^\mathrm{true} - \ddot{x}_t^\mathrm{true}\|_2^2 p_t(x) \d x \notag \\
         \leq &~ \int (\|\phi_2(x, t) - \dot{x}_t^\mathrm{true}\|_2 + \| \dot{x}_t^\mathrm{true}-\ddot{x}_t^\mathrm{true}\|_2)^2 p_t(x) \d x \notag \\
         \leq &~ \int 2(\|\phi_2(x, t) - \dot{x}_t^\mathrm{true}\|_2^2 + \| \dot{x}_t^\mathrm{true}-\ddot{x}_t^\mathrm{true}\|_2^2) p_t(x) \d x \notag \\
         = &~ 2\int \|\phi_2(x, t) - \dot{x}_t^\mathrm{true}\|_2^2 p_t(x) \d x + 2\int \|_2 \dot{x}_t^\mathrm{true}-\ddot{x}_t^\mathrm{true}\|_2^2 p_t(x) \d x \notag \\
         = &~ 2\int \|\phi_2(x, t) - \dot{x}_t^\mathrm{true}\|_2^2p_t(x)\d x + 2\E_{x \sim P_t}[\|\dot{x}^\mathrm{true}_t - \ddot{x}^\mathrm{true}_t\|_2^2 \notag \\
         \lesssim &~ (\dot{\alpha}_t^{2} \log N  +   \dot{\beta}_t^{2} ) N^{-\eta} + \E_{x \sim P_t}[\|\dot{x}^\mathrm{true}_t - \ddot{x}^\mathrm{true}_t\|_2^2] \label{eq:tmp_2_large}
    \end{align}
    where the first step follows from the basic algebra, the second step follows from the triangle inequality, the third step follows from $(a+b)^2 \leq 2a^2 + 2b^2$, the fourth step follows from basic algebra, the fifth step follows from the definition of expectation, and the last step follows from Lemma~\ref{lem:error_approx_large_t}.

    Finally, by Eq.~\eqref{eq:tmp_1_large} and Eq.~\eqref{eq:tmp_2_large}, we have
    \begin{align*}
    &~ \int (\|\phi_1(x, t) - \dot{x}_t^\mathrm{true}\|^2 + \|\phi_2(x, t) - \ddot{x}_t^\mathrm{true}\|^2) p_t(x) \d x \\ \lesssim &~ (\dot{\alpha}_t^{2} \log N  +   \dot{\beta}_t^{2} ) N^{-\eta} +
    \E_{x \sim P_t}[\|\dot{x}^\mathrm{true}_t - \ddot{x}^\mathrm{true}_t\|^2].
    \end{align*}

    Moreover, by Lemma~\ref{lem:error_approx_large_t}, $\phi_1, \phi_2$ can be taken so we have
    \begin{align*}
         \|\phi_1(\cdot,t) \|_\infty = O(  |\dot{\alpha}_t | \log N +  |\dot{\beta}_t |) \mathrm{~~~and~~~} \|\phi_2(\cdot,t) \|_\infty = O(  |\dot{\alpha}_t |\log N +  |\dot{\beta}_t |).
    \end{align*}
    Thus, the proof is complete.
\end{proof}

\subsection{Approximation Error of Higher Order Flow Matching for Large \texorpdfstring{$t$}{}}
\begin{theorem}[Approximation error of higher order flow matching for large $t$]\label{thm:higher_order_large_t:formal}
    Fix $t_{*} \in [T_{*},1]$ and let $\eta>0$ be arbitrary, under Assumptions~\ref{ass:A1}~\ref{ass:A2}~\ref{ass:A3}~\ref{ass:A4} and \ref{ass:A5}, and if the following holds 
    \begin{itemize}
        \item $L = O(\log^4 N )$.
        \item $\|W\|_{\infty} = O(N)$
        \item $S = O(t_{*}^{- d\kappa} N^{\delta\kappa})$
        \item $B = \exp(O (\log N \log \log N ) )$ 
        \item $K = O(1)$
    \end{itemize}
  Then there exist neural networks $\phi_{1},\phi_2, \ldots, \phi_K \in {\cal M}(L,W,S,B)$ such that,
\begin{align*}
    &~ \int (\sum_{k=1}^K\|\phi_k(x, t) - \frac{\d^k}{\d t^k}x_t^\mathrm{true}\|^2) p_t(x) \d x \\ \lesssim &~ (\dot{\alpha}_t^{2} \log N  +   \dot{\beta}_t^{2} ) N^{-\eta} +
    \sum_{k=1}^{K-1}\E_{x \sim P_t}[\|\frac{\d^k}{\d t^k}x_t^\mathrm{true} - \frac{\d^{k+1}}{\d t^{k+1}}x_t^\mathrm{true}\|^2] 
\end{align*}
    holds for any $t \in [2t_*, 1]$. In addition, for any $k \in [K]$, $\phi_k$ can be taken so we have
    \begin{align*}
         \|\phi_k(\cdot,t) \|_\infty = O(  |\dot{\alpha}_t | \log N +  |\dot{\beta}_t |).
    \end{align*}
\end{theorem}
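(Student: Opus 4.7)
The plan is to mirror the induction used in Theorem~\ref{thm:higher_order_small_t:formal}, but now using Theorem~\ref{thm:secon_order_large_t:formal} as the base case so that every approximation estimate holds on the large-$t$ interval $[2t_*,1]$ with the rate $(\dot{\alpha}_t^{2} \log N + \dot{\beta}_t^{2}) N^{-\eta}$ rather than $N^{-2s/d}$. Concretely, I will fix $t \in [2t_*,1]$, invoke Theorem~\ref{thm:secon_order_large_t:formal} to obtain $\phi_1,\phi_2 \in {\cal M}(L,W,S,B)$ satisfying the stated $L^2(p_t)$ bound with respect to $\dot{x}_t^{\True}$ and $\ddot{x}_t^{\True}$, and then build $\phi_3,\dots,\phi_K$ inductively.

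For the inductive step, assume $\phi_k \in {\cal M}(L,W,S,B)$ already approximates $\tfrac{\d^k}{\d t^k} x_t^{\True}$ in the required sense, and set $\phi_{k+1} := \phi_k$. The key inequality is
\begin{align*}
\|\phi_{k+1}(x,t) - \tfrac{\d^{k+1}}{\d t^{k+1}} x_t^{\True}\|_2^2
\leq 2\|\phi_k(x,t) - \tfrac{\d^{k}}{\d t^{k}} x_t^{\True}\|_2^2
+ 2\|\tfrac{\d^{k}}{\d t^{k}} x_t^{\True} - \tfrac{\d^{k+1}}{\d t^{k+1}} x_t^{\True}\|_2^2,
\end{align*}
obtained by adding and subtracting $\tfrac{\d^k}{\d t^k}x_t^{\True}$, applying the triangle inequality, and using $(a+b)^2 \le 2a^2+2b^2$. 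Integrating against $p_t(x)$ turns the second term into $\E_{x\sim P_t}[\|\tfrac{\d^{k}}{\d t^{k}} x_t^{\True} - \tfrac{\d^{k+1}}{\d t^{k+1}} x_t^{\True}\|_2^2]$, while the first term is controlled by the inductive hypothesis. This is exactly the algebraic manipulation carried out in Eq.~\eqref{eq:tmp_4} of the proof of Theorem~\ref{thm:higher_order_small_t:formal}; only the base-case rate changes.

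After iterating the bound, one obtains for each $k \in [K]$ that
\begin{align*}
\int \|\phi_k(x,t) - \tfrac{\d^k}{\d t^k} x_t^{\True}\|_2^2 \, p_t(x)\,\d x
\lesssim (\dot{\alpha}_t^{2} \log N  +   \dot{\beta}_t^{2} ) N^{-\eta} + \sum_{j=1}^{k-1} \E_{x\sim P_t}\bigl[\|\tfrac{\d^{j}}{\d t^{j}} x_t^{\True} - \tfrac{\d^{j+1}}{\d t^{j+1}} x_t^{\True}\|_2^2\bigr].
\end{align*}
Summing over $k \in [K]$ and absorbing the factor $K$ using $K = O(1)$ yields the theorem's conclusion. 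The $L^\infty$ control on $\phi_k$ follows directly from the corresponding statement in Lemma~\ref{lem:error_approx_large_t}, since every $\phi_k$ is taken equal to the network produced by that lemma (or to $\phi_k$ from the previous inductive step, which inherits the same bound).

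The proof is essentially routine once the large-$t$ base case is in hand; the only subtlety is bookkeeping. The mildly non-trivial step is verifying that the network-class parameters $(L,W,S,B)$ prescribed in the statement are sufficient \emph{uniformly} for all $K = O(1)$ networks $\phi_1,\dots,\phi_K$. This is not an obstacle, however, because all $\phi_k$ for $k\ge 2$ are chosen to coincide with $\phi_1$, so a single application of Lemma~\ref{lem:error_approx_large_t} produces every network we need, and no further capacity budget is required.
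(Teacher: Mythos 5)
Your proposal is correct and follows essentially the same route as the paper: the base case from Theorem~\ref{thm:secon_order_large_t:formal}, an induction on the order using the add-and-subtract trick with the triangle inequality and $(a+b)^2 \le 2a^2+2b^2$, a final summation over $k$ absorbed via $K=O(1)$, and the $L^\infty$ bound inherited from Lemma~\ref{lem:error_approx_large_t}. Your bookkeeping (summing the expectation terms only up to $j=k-1$ and reusing the same network for all orders $k\ge 2$) is in fact slightly cleaner than the paper's, but it is not a different argument.
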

\begin{proof}
    We first show that for any $k \geq 2$, for any $t \in [2t_*,1]$, there exists $\phi \in {\cal M}(L,W,S,B)$ such that
    \begin{align}
    \label{eq:tmp_3_large}
    &~ \int \|\phi(x, t) - \frac{\d^k}{\d t^k}{x}_t^\mathrm{true}\|^2_2 p_t(x) \d x \notag \\
    \lesssim &~ 
    (\dot{\alpha}_t^{2} \log N  +   \dot{\beta}_t^{2} ) N^{-\eta} +
    \sum_{j=1}^{k}\E_{x \sim P_t}[\|\frac{\d^{j}}{\d t^{j}}x_t^\mathrm{true} - \frac{\d^{j+1}}{\d t^{j+1}}x_t^\mathrm{true}\|^2].
    \end{align}

    We prove this by mathematical induction.

    \textbf{Base case.} The statements hold when $k = 2$ because of Lemma~\ref{thm:secon_order_large_t:formal}.

    \textbf{Induction step.} We assume that the statement hold for $k \geq 2$. We would like to show that it holds for $k+1$. We can show that, for any $t \in [2t_*, 1]$, there exists $\phi \in {\cal M}(L,S,W, B)$ such that
    \begin{align}
    &~ \int \|\phi(x, t) - \frac{\d^{k+1}}{\d t^{k+1}}{x}_t^\mathrm{true}\|^2_2 p_t(x) \d x \notag
    \\ = &~
    \int \|\phi(x, t) - \frac{\d^{k}}{\d t^{k}}{x}_t^\mathrm{true} + \frac{\d^{k}}{\d t^{k}}{x}_t^\mathrm{true} - \frac{\d^{k+1}}{\d t^{k+1}}{x}_t^\mathrm{true}\|^2_2 p_t(x) \d x \notag \\
    \leq &~ \int (\|\phi(x, t) - \frac{\d^{k}}{\d t^{k}}{x}_t^\mathrm{true}\|_2 + \| \frac{\d^{k}}{\d t^{k}}{x}_t^\mathrm{true} - \frac{\d^{k+1}}{\d t^{k+1}}{x}_t^\mathrm{true}\|_2 )^2 p_t(x) \d x \notag \\
    \leq &~ \int 2(\|\phi(x, t) - \frac{\d^{k}}{\d t^{k}}{x}_t^\mathrm{true}\|_2^2 + \| \frac{\d^{k}}{\d t^{k}}{x}_t^\mathrm{true} - \frac{\d^{k+1}}{\d t^{k+1}}{x}_t^\mathrm{true}\|_2^2) p_t(x) \d x \notag \\
    = &~ 2\int \|\phi(x, t) - \frac{\d^{k}}{\d t^{k}}{x}_t^\mathrm{true}\|_2^2p_t(x) \d x + 2 \int \| \frac{\d^{k}}{\d t^{k}}{x}_t^\mathrm{true} - \frac{\d^{k+1}}{\d t^{k+1}}{x}_t^\mathrm{true}\|_2^2 p_t(x) \d x \notag \\
    = &~ 2\int \|\phi(x, t) - \frac{\d^{k}}{\d t^{k}}{x}_t^\mathrm{true}\|_2^2p_t(x) \d x + 2 \E_{x \sim P_t} [ \| \frac{\d^{k}}{\d t^{k}}{x}_t^\mathrm{true} - \frac{\d^{k+1}}{\d t^{k+1}}{x}_t^\mathrm{true}\|_2^2]\notag \\
    \lesssim &~(\dot{\alpha}_t^{2} \log N  +   \dot{\beta}_t^{2} ) N^{-\eta} +
    \sum_{j=1}^{k}\E_{x \sim P_t}[\|\frac{\d^{j}}{\d t^{j}}x_t^\mathrm{true} - \frac{\d^{j+1}}{\d t^{j+1}}x_t^\mathrm{true}\|_2^2] + \E_{x \sim P_t} [ \| \frac{\d^{k}}{\d t^{k}}{x}_t^\mathrm{true} - \frac{\d^{k+1}}{\d t^{k+1}}{x}_t^\mathrm{true}\|_2^2]\notag \\
    = &~ (\dot{\alpha}_t^{2} \log N  +   \dot{\beta}_t^{2} ) N^{-\eta}+
    \sum_{j=1}^{k+1}\E_{x \sim P_t}[\|\frac{\d^{j}}{\d t^{j}}x_t^\mathrm{true} - \frac{\d^{j+1}}{\d t^{j+1}}x_t^\mathrm{true}\|_2^2], \label{eq:tmp_4_large}
    \end{align}
    where the first step follows from basic algebra, the second step follows from triangle inequality, the third step follows from the Cauchy-Schwarz inequality, the fourth step follows from basic algebra, the fifth step follows from the definition of expectation, the six step follows from Eq.~\eqref{eq:tmp_3_large}.
    
     Hence, there exists $\phi_1, \phi_2, \ldots, \phi_K \in {\cal M}(L,W,S,B)$ such that for $k \in [K]$, for any $t \in [2t_*, 1]$, we have
    \begin{align}
    &~ \int \|\phi_k(x, t) - \frac{\d^k}{\d t^k}{x}_t^\mathrm{true}\|^2_2 p_t(x) \d x \notag \\
    \lesssim &~ 
    (\dot{\alpha}_t^{2} \log N  +   \dot{\beta}_t^{2} ) N^{-\eta} +
    \sum_{j=1}^{k}\E_{x \sim P_t}[\|\frac{\d^{j}}{\d t^{j}}x_t^\mathrm{true} - \frac{\d^{j+1}}{\d t^{j+1}}x_t^\mathrm{true}\|_2^2]. \label{eq:tmp_5_large}
    \end{align}

    Taking the summation over $k \in [K]$, we have for any $t \in [2 t_*, 1]$,
    \begin{align*}
        &~ \int \sum_{k=1}^K \|\phi_k(x, t) - \frac{\d^k}{\d t^k}{x}_t^\mathrm{true}\|^2_2 p_t(x) \d x \notag \\
    \lesssim &~ 
    ((\dot{\alpha}_t )^{2} \log N  +   (\dot{\beta}_t )^{2} ) N^{-\eta} +
    \sum_{k=1}^{K} (k \cdot \E_{x \sim P_t}[\|\frac{\d^{j}}{\d t^{j}}x_t^\mathrm{true} - \frac{\d^{j+1}}{\d t^{j+1}}x_t^\mathrm{true}\|_2^2]) \\
    \lesssim &~ (\dot{\alpha}_t^{2} \log N  +   \dot{\beta}_t^{2} ) N^{-\eta} +
    \sum_{k=1}^{K} \E_{x \sim P_t}[\|\frac{\d^{j}}{\d t^{j}}x_t^\mathrm{true} - \frac{\d^{j+1}}{\d t^{j+1}}x_t^\mathrm{true}\|_2^2]
    \end{align*}
    where the first step follows from Eq.~\eqref{eq:tmp_5_large}, and the second step uses $K=O(1)$.
   Moreover, by Lemma~\ref{lem:error_approx_large_t}, $\phi_1, \phi_2, \ldots, \phi_K$ can be taken so we have for $k \in [K]$,
    \begin{align*}
         \|\phi_k(\cdot,t) \|_\infty = O(  |\dot{\alpha}_t | \log N +  |\dot{\beta}_t |).
    \end{align*}
    Thus, the proof is complete.
\end{proof}
\section{Empirical Ablation Study} \label{sec:app:empirical_ablation_study}
In Section~\ref{sec:app:dataset}, we introduce the three Gaussian mixture distribution datasets—four-mode, five-mode, and eight-mode—used in our empirical ablation study, along with their configurations for source and target modes. The subsequent subsections analyze the impact of different optimization terms. Section~\ref{sec:app:rectified_flow} evaluates the performance of HOMO optimized solely with the first-order term. Section~\ref{sec:app:second_order} examines the effect of using only the second-order term. Section~\ref{sec:app:self_consistency} assesses results when optimization is guided by the self-consistency term. Section~\ref{sec:app:first_order_plus_second_order} explores the combined effect of first- and second-order terms, while Section~\ref{sec:app:second_order_plus_self_target} investigates the combination of second-order and self-consistency terms. Through these analyses, we aim to dissect the contributions of individual and combined loss terms in achieving effective transport trajectories.
\subsection{Dataset}\label{sec:app:dataset}

Here we introduce three datasets we use: four-mode, five-mode, and eight-mode Gaussian mixture distribution datasets; each Gaussian component has a variance of $0.3$. In the four-mode Gaussian mixture distribution, four source mode(\textbf{brown}) positioned at a distance $D_0 = 5$ from the origin, and four target mode(\textbf{indigo}) positioned at a distance $D_0 = 14$ from the origin, each mode sample 200 points. In five-mode Gaussian mixture distribution, five source mode(\textbf{brown}) positioned at a distance $D_0 = 6$ from the origin, and five target mode(\textbf{indigo}) positioned at a distance $D_0 = 13$ from the origin, each mode sample 200 points. And in eight-mode Gaussian mixture distribution, eight source mode(\textbf{brown}) positioned at a distance $D_0 = 6$ from the origin, and eight target mode(\textbf{indigo}) positioned at a distance $D_0 = 13$ from the origin, each mode sample 100 points. 

\begin{figure}[!ht] 
\centering
\includegraphics[width=0.25\textwidth]{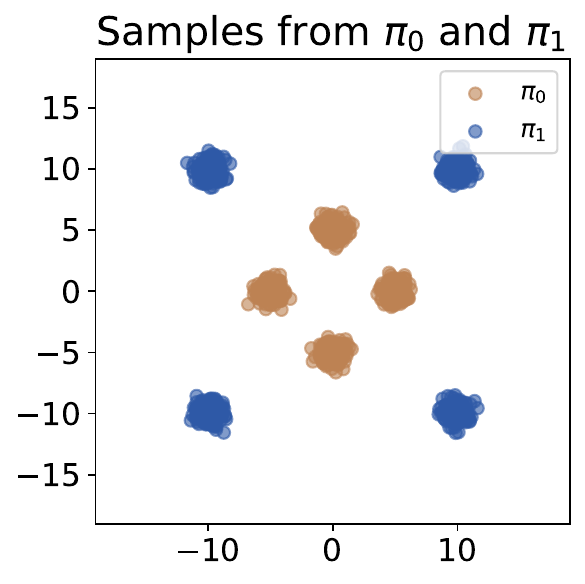}
\includegraphics[width=0.25\textwidth]{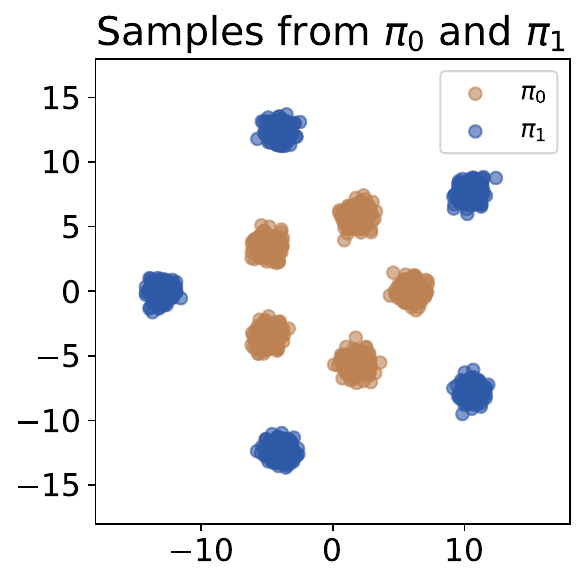}
\includegraphics[width=0.25\textwidth]{8_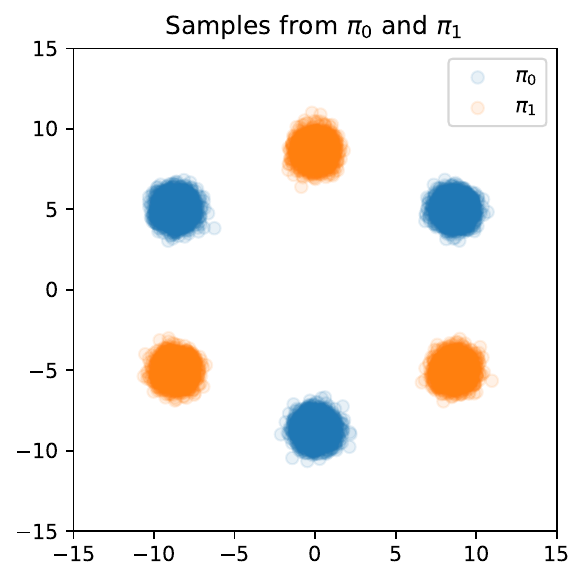}
\caption{
The four-mode Gaussian mixture distribution (\textbf{Left}), five-mode Gaussian mixture distribution (\textbf{Middle}), and eight-mode Gaussian mixture distribution (\textbf{Right}). Our goal is to make HOMO learn a transport trajectory from distribution $\pi_0$ ({\textbf{brown}}) to distribution $\pi_1$ ({\textbf{indigo}}). 
}
\label{fig:three_normal_dataset}
\end{figure}

\subsection{Only First Order Term}\label{sec:app:rectified_flow}

We optimize models by the sum of squared error(SSE). The source distribution and target distribution are all Gaussian distributions. For the target transport trajectory setting, we follow the VP ODE framework from~\cite{rectified_flow}, which is $x_t = \alpha_t x_0 + \beta_t x_1$. We choose $\alpha_t = \exp(-\frac{1}{4} a(1-t)^2 - \frac{1}{2} b(1-t))$ and $\beta_t = \sqrt{1 - \alpha_t^2}$, with hyperparameters $a = 19.9$ and $b = 0.1$. In the four-mode dataset, five-mode dataset, and eight-mode dataset, we all sample 100 points in each source mode and target mode. And in four-mode dataset training, we use an ODE solver and Adam optimizer, with 2 hidden layer MLP, 100 hidden dimensions, $800$ batch size, $0.005$ learning rate, and $1000$ training steps. In five-mode dataset training, we also use an ODE solver and Adam optimizer, with 2 hidden layer MLP, 100 hidden dimensions, $1000$ batch size, $0.005$ learning rate, and $1000$ training steps. And in eight-mode dataset training, we use an ODE solver and Adam optimizer, with 2 hidden layer MLP, 100 hidden dimensions, $1600$ batch size, $0.005$ learning rate, and $1000$ training steps.

\begin{figure}[!ht]
\centering
\includegraphics[width=0.25\textwidth]{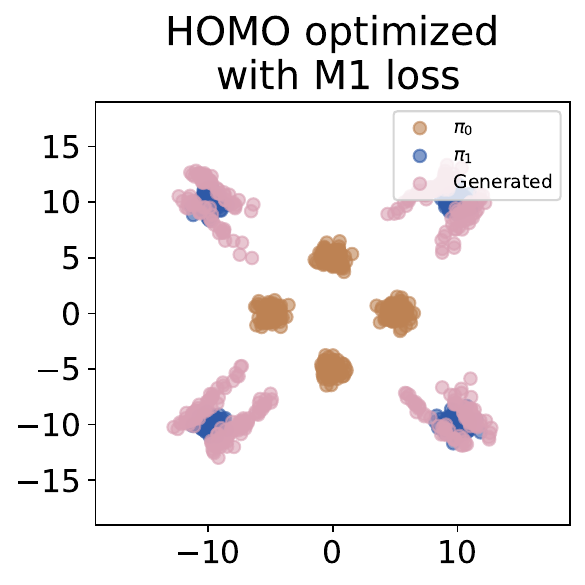}
\includegraphics[width=0.25\textwidth]{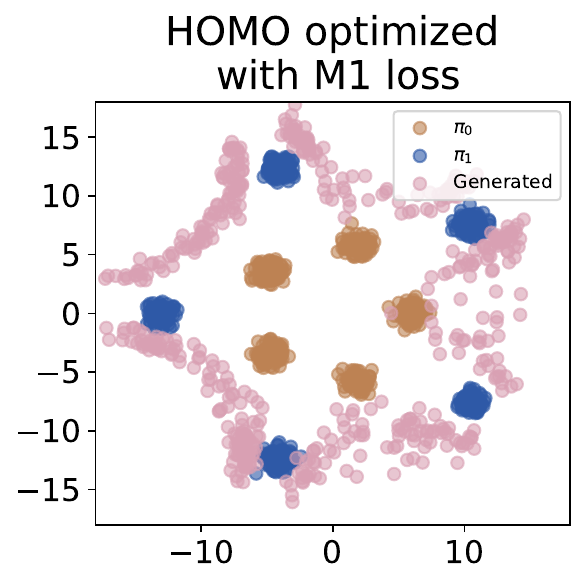}
\includegraphics[width=0.25\textwidth]{8_1_output.pdf}
\caption{
(A) The distributions generated by HOMO are only optimized by first-order term in four-mode dataset (\textbf{Left}), five-mode dataset (\textbf{Middle}), and eight-mode dataset (\textbf{Right}). 
The source distribution, $\pi_0$ ({\textbf{brown}}), and the target distribution, $\pi_1$ ({\textbf{indigo}}), are shown, along with the generated distribution ({\textbf{pink}}). 
}
\label{fig:1_distribution}
\end{figure}

\subsection{Only Second Order Term}\label{sec:app:second_order}
We optimize models by the sum of squared error(SSE). The source distribution and target distribution are all Gaussian distributions. For the target transport trajectory setting, we follow the VP ODE framework from~\cite{rectified_flow}, which is $x_t = \alpha_t x_0 + \beta_t x_1$. We choose $\alpha_t = \exp(-\frac{1}{4} a(1-t)^2 - \frac{1}{2} b(1-t))$ and $\beta_t = \sqrt{1 - \alpha_t^2}$, with hyperparameters $a = 19.9$ and $b = 0.1$. In the four-mode dataset, five-mode dataset, and eight-mode dataset, we all sample 100 points in each source mode and target mode. And in four-mode dataset training, we use an ODE solver and Adam optimizer, with 2 hidden layer MLP, 100 hidden dimensions, $800$ batch size, $0.005$ learning rate, and $100$ training steps. In five-mode dataset training, we also use an ODE solver and Adam optimizer, with 2 hidden layer MLP, 100 hidden dimensions, $1000$ batch size, $0.005$ learning rate, and $100$ training steps. And in eight-mode dataset training, we use an ODE solver and Adam optimizer, with 2 hidden layer MLP, 100 hidden dimensions, $1600$ batch size, $0.005$ learning rate, and $100$ training steps.

\begin{figure}[!ht]
\centering
\includegraphics[width=0.25\textwidth]{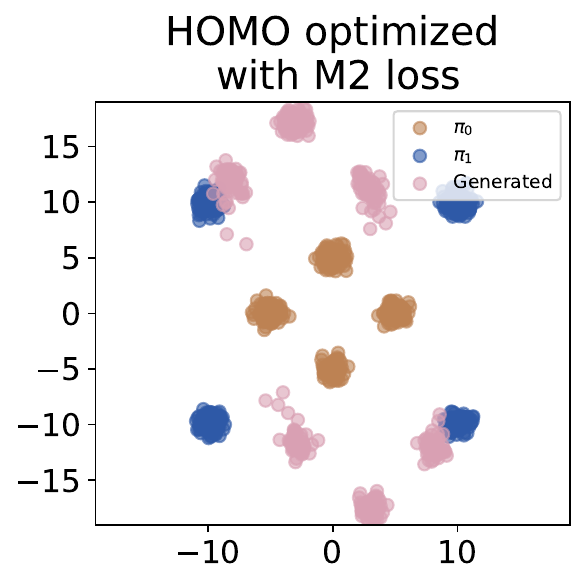}
\includegraphics[width=0.25\textwidth]{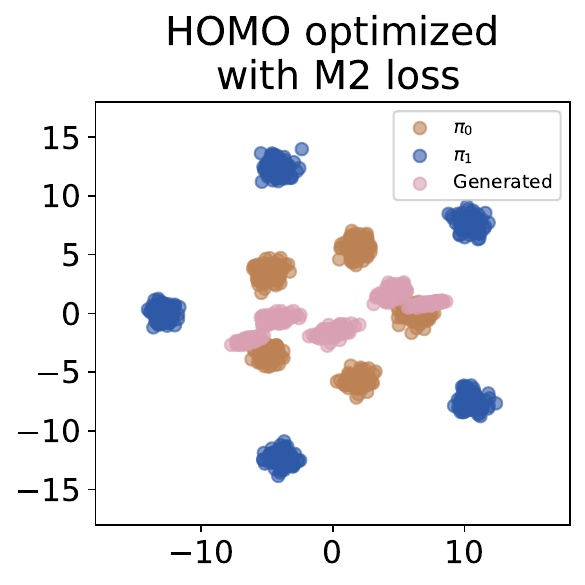}
\includegraphics[width=0.25\textwidth]{8_2_output.pdf}
\caption{
(B) The distributions generated by HOMO are only optimized by second-order term in the four-mode dataset (\textbf{Left}), five-mode dataset (\textbf{Middle}), and eight-mode dataset (\textbf{Right}). 
The source distribution, $\pi_0$ ({\textbf{brown}}), and the target distribution, $\pi_1$ ({\textbf{indigo}}), are shown, along with the generated distribution ({\textbf{pink}}). 
}
\label{fig:2_distribution}
\end{figure}

\subsection{Only Self-Consistency Term}\label{sec:app:self_consistency}
We optimize models by the sum of squared error(SSE). The source distribution and target distribution are all Gaussian distributions. For the target transport trajectory setting, we follow the VP ODE framework from~\cite{rectified_flow}, which is $x_t = \alpha_t x_0 + \beta_t x_1$. We choose $\alpha_t = \exp(-\frac{1}{4} a(1-t)^2 - \frac{1}{2} b(1-t))$ and $\beta_t = \sqrt{1 - \alpha_t^2}$, with hyperparameters $a = 19.9$ and $b = 0.1$. In the four-mode dataset, five-mode dataset, and eight-mode dataset, we all sample 100 points in each source mode and target mode. And in four-mode dataset training, we use an ODE solver and Adam optimizer, with 2 hidden layer MLP, 100 hidden dimensions, $800$ batch size, $0.005$ learning rate, and $50$ training steps. In five-mode dataset training, we also use an ODE solver and Adam optimizer, with 2 hidden layer MLP, 100 hidden dimensions, $1000$ batch size, $0.005$ learning rate, and $50$ training steps. And in eight-mode dataset training, we use an ODE solver and Adam optimizer, with 2 hidden layer MLP, 100 hidden dimensions, $1600$ batch size, $0.005$ learning rate, and $50$ training steps. 
\begin{figure}[!ht]
\centering
\includegraphics[width=0.25\textwidth]{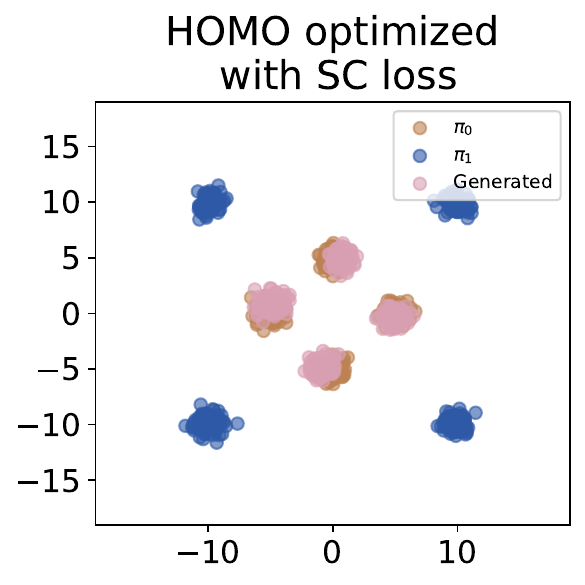}
\includegraphics[width=0.25\textwidth]{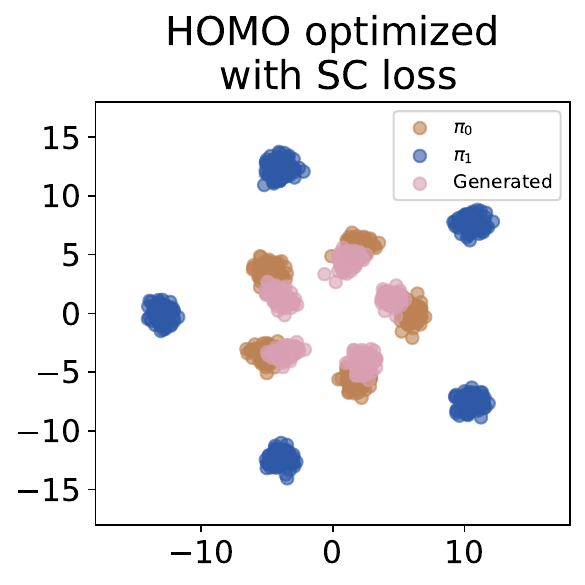}
\includegraphics[width=0.25\textwidth]{8_3_output.pdf}
\caption{
(C) The distributions generated by HOMO are only optimized by self-consistency term in the four-mode dataset (\textbf{Left}), five-mode dataset (\textbf{Middle}), and eight-mode dataset (\textbf{Right}). 
The source distribution, $\pi_0$ ({\textbf{brown}}), and the target distribution, $\pi_1$ ({\textbf{indigo}}), are shown, along with the generated distribution ({\textbf{pink}}). 
}
\label{fig:3_distribution}
\end{figure}

\subsection{First Order Plus Second Order}\label{sec:app:first_order_plus_second_order}
We optimize models by the sum of squared error(SSE). The source distribution and target distribution are all Gaussian distributions. For the target transport trajectory setting, we follow the VP ODE framework from~\cite{rectified_flow}, which is $x_t = \alpha_t x_0 + \beta_t x_1$. We choose $\alpha_t = \exp(-\frac{1}{4} a(1-t)^2 - \frac{1}{2} b(1-t))$ and $\beta_t = \sqrt{1 - \alpha_t^2}$, with hyperparameters $a = 19.9$ and $b = 0.1$. In the four-mode dataset, five-mode dataset, and eight-mode dataset, we all sample 100 points in each source mode and target mode. And in four-mode dataset training, we use an ODE solver and Adam optimizer, with 2 hidden layer MLP, 100 hidden dimensions, $800$ batch size, $0.005$ learning rate, and $1000$ training steps. In five-mode dataset training, we also use an ODE solver and Adam optimizer, with 2 hidden layer MLP, 100 hidden dimensions, $1000$ batch size, $0.005$ learning rate, and $2000$ training steps. And in eight-mode dataset training, we use an ODE solver and Adam optimizer, with 2 hidden layer MLP, 100 hidden dimensions, $1600$ batch size, $0.005$ learning rate, and $2000$ training steps. 
\begin{figure}[!ht]
\centering
\includegraphics[width=0.25\textwidth]{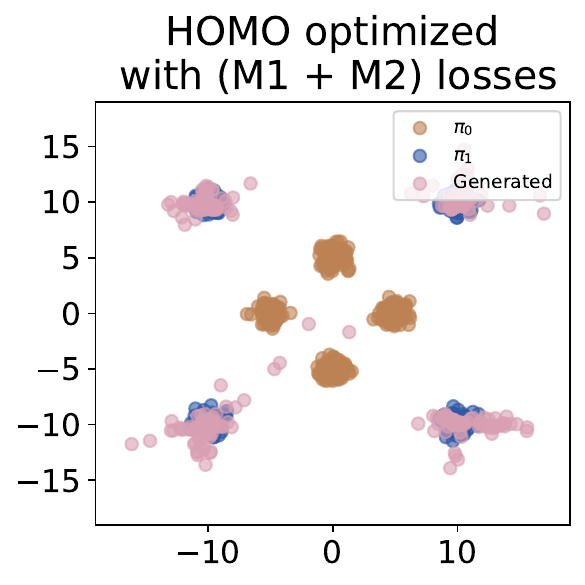}
\includegraphics[width=0.25\textwidth]{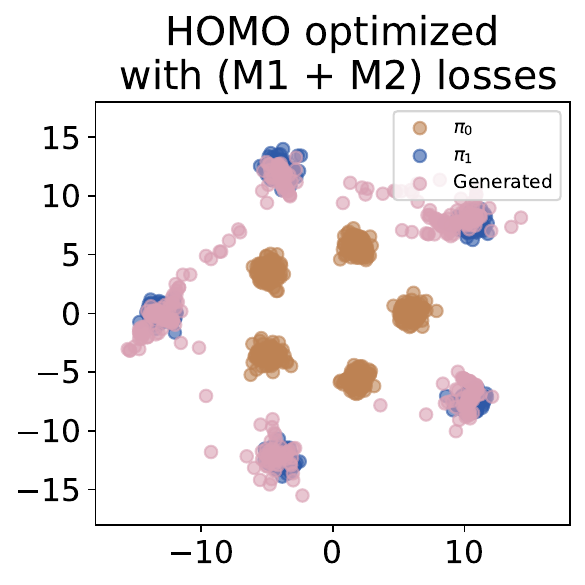}
\includegraphics[width=0.25\textwidth]{8_12_output.pdf}
\caption{
(A + B) The distributions generated by HOMO, optimized by first-order term and second-order term in four-mode dataset (\textbf{Left}), five-mode dataset (\textbf{Middle}), and eight-mode dataset (\textbf{Right}). 
The source distribution, $\pi_0$ ({\textbf{brown}}), and the target distribution, $\pi_1$ ({\textbf{indigo}}), are shown, along with the generated distribution ({\textbf{pink}}). 
}
\label{fig:12_distribution}
\end{figure}

\subsection{Second Order Plus Self-Target}\label{sec:app:second_order_plus_self_target}
We optimize models by the sum of squared error(SSE). The source distribution and target distribution are all Gaussian distributions. For the target transport trajectory setting, we follow the VP ODE framework from~\cite{rectified_flow}, which is $x_t = \alpha_t x_0 + \beta_t x_1$. We choose $\alpha_t = \exp(-\frac{1}{4} a(1-t)^2 - \frac{1}{2} b(1-t))$ and $\beta_t = \sqrt{1 - \alpha_t^2}$, with hyperparameters $a = 19.9$ and $b = 0.1$. In the four-mode dataset, five-mode dataset, and eight-mode dataset, we all sample 100 points in each source mode and target mode. And in four-mode dataset training, we use an ODE solver and Adam optimizer, with 2 hidden layer MLP, 100 hidden dimensions, $800$ batch size, $0.005$ learning rate, and $100$ training steps. In five-mode dataset training, we also use an ODE solver and Adam optimizer, with 2 hidden layer MLP, 100 hidden dimensions, $1000$ batch size, $0.005$ learning rate, and $100$ training steps. And in eight-mode dataset training, we use an ODE solver and Adam optimizer, with 2 hidden layer MLP, 100 hidden dimensions, $1600$ batch size, $0.005$ learning rate, and $100$ training steps. 
\begin{figure}[!ht]
\centering
\includegraphics[width=0.25\textwidth]{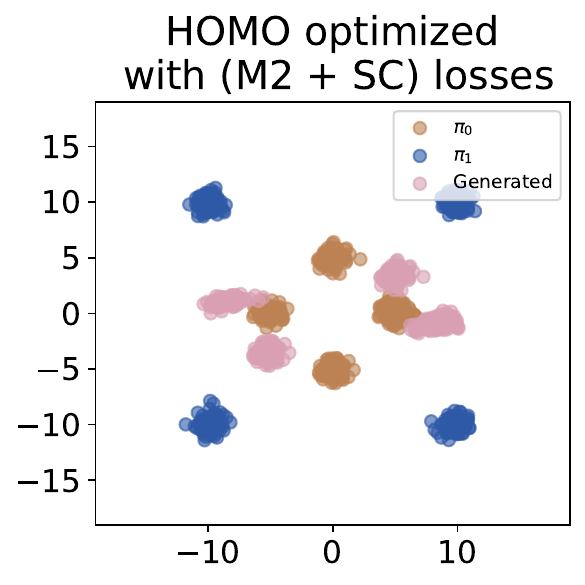}
\includegraphics[width=0.25\textwidth]{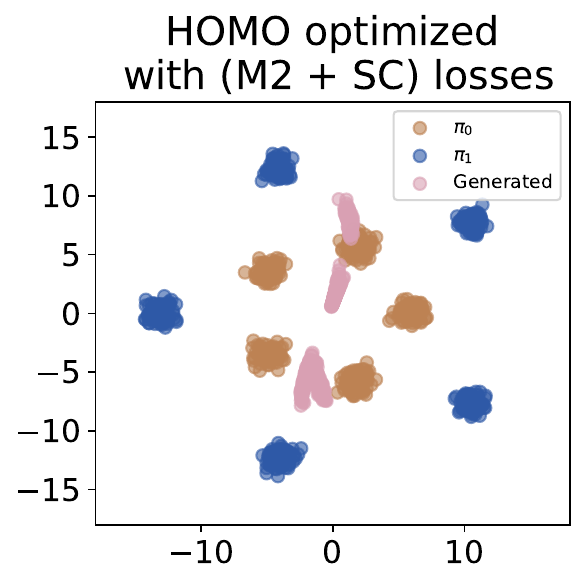}
\includegraphics[width=0.25\textwidth]{8_23_output.pdf}
\caption{
(B + C) The distributions generated by HOMO, optimized by second-order term and self-consistency term in four-mode dataset (\textbf{Left}), five-mode dataset (\textbf{Middle}), and eight-mode dataset (\textbf{Right}). 
The source distribution, $\pi_0$ ({\textbf{brown}}), and the target distribution, $\pi_1$ ({\textbf{indigo}}), are shown, along with the generated distribution ({\textbf{pink}}). 
}
\label{fig:23_distribution}
\end{figure}

\subsection{First Order Plus Self-Target}\label{sec:app:first_order_plus_self_target}
We optimize models by the sum of squared error(SSE). The source distribution and target distribution are all Gaussian distributions. For the target transport trajectory setting, we follow the VP ODE framework from~\cite{rectified_flow}, which is $x_t = \alpha_t x_0 + \beta_t x_1$. We choose $\alpha_t = \exp(-\frac{1}{4} a(1-t)^2 - \frac{1}{2} b(1-t))$ and $\beta_t = \sqrt{1 - \alpha_t^2}$, with hyperparameters $a = 19.9$ and $b = 0.1$. In the four-mode dataset, five-mode dataset, and eight-mode dataset, we all sample 100 points in each source mode and target mode. And in four-mode dataset training, we use an ODE solver and Adam optimizer, with 2 hidden layer MLP, 100 hidden dimensions, $800$ batch size, $0.005$ learning rate, and $1000$ training steps. In five-mode dataset training, we also use an ODE solver and Adam optimizer, with 2 hidden layer MLP, 100 hidden dimensions, $1000$ batch size, $0.005$ learning rate, and $1000$ training steps. And in eight-mode dataset training, we use an ODE solver and Adam optimizer, with 2 hidden layer MLP, 100 hidden dimensions, $1600$ batch size, $0.005$ learning rate, and $1000$ training steps. 
\begin{figure}[!ht]
\centering
\includegraphics[width=0.25\textwidth]{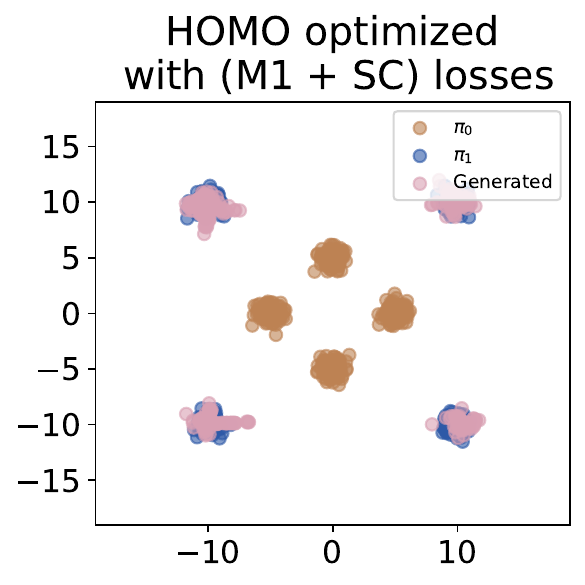}
\includegraphics[width=0.25\textwidth]{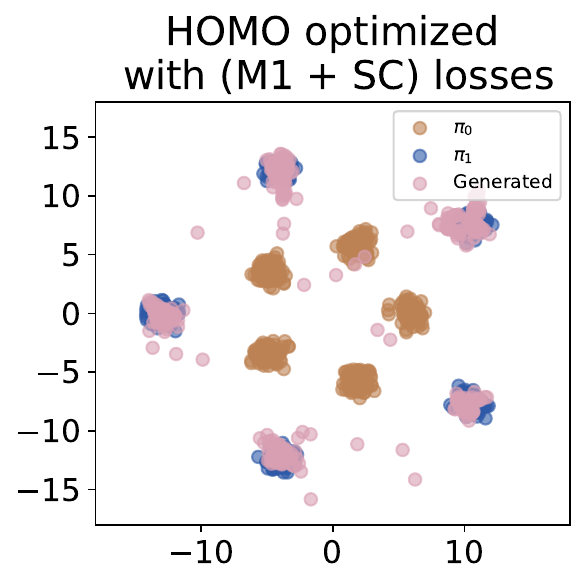}
\includegraphics[width=0.25\textwidth]{8_13_output.pdf}
\caption{
(A + C) The distributions generated by HOMO, optimized by first-order term and self-consistency term in four-mode dataset (\textbf{Left}), five-mode dataset (\textbf{Middle}), and eight-mode dataset (\textbf{Right}). 
The source distribution, $\pi_0$ ({\textbf{brown}}), and the target distribution, $\pi_1$ ({\textbf{indigo}}), are shown, along with the generated distribution ({\textbf{pink}}). 
}
\label{fig:13_distribution}
\end{figure}

\subsection{HOMO}\label{sec:app:homo}
We optimize models by the sum of squared error(SSE). The source distribution and target distribution are all Gaussian distributions. For the target transport trajectory setting, we follow the VP ODE framework from~\cite{rectified_flow}, which is $x_t = \alpha_t x_0 + \beta_t x_1$. We choose $\alpha_t = \exp(-\frac{1}{4} a(1-t)^2 - \frac{1}{2} b(1-t))$ and $\beta_t = \sqrt{1 - \alpha_t^2}$, with hyperparameters $a = 19.9$ and $b = 0.1$. In the four-mode dataset, five-mode dataset, and eight-mode dataset, we all sample 100 points in each source mode and target mode. And in four-mode dataset training, we use an ODE solver and Adam optimizer, with 2 hidden layer MLP, 100 hidden dimensions, $800$ batch size, $0.005$ learning rate, and $1000$ training steps. In five-mode dataset training, we also use an ODE solver and Adam optimizer, with 2 hidden layer MLP, 100 hidden dimensions, $1000$ batch size, $0.005$ learning rate, and $1000$ training steps. And in eight-mode dataset training, we use an ODE solver and Adam optimizer, with 2 hidden layer MLP, 100 hidden dimensions, $1600$ batch size, $0.005$ learning rate, and $1000$ training steps. 
\begin{figure}[!ht]
\centering
\includegraphics[width=0.25\textwidth]{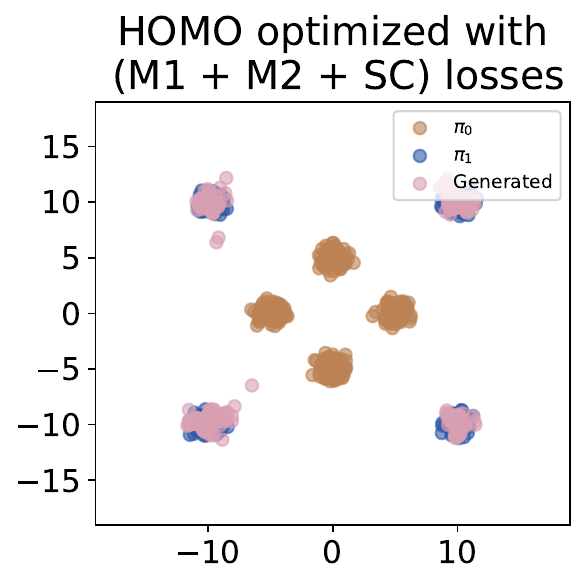}
\includegraphics[width=0.25\textwidth]{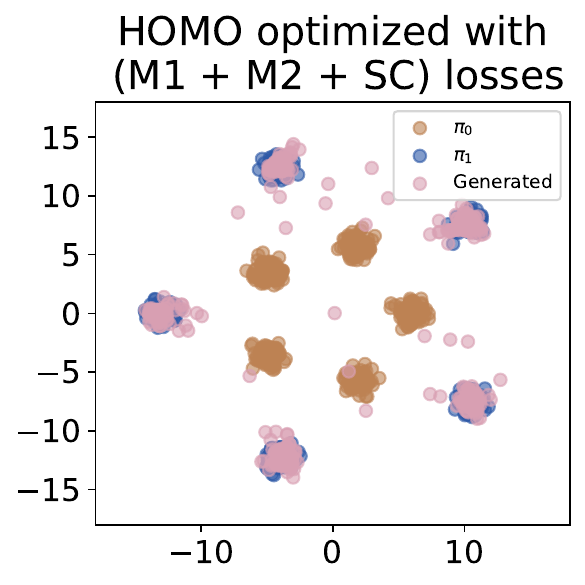}
\includegraphics[width=0.25\textwidth]{8_123_output.pdf}
\caption{
(A + B + C) The distributions generated by HOMO in four-mode dataset (\textbf{Left}), five-mode dataset (\textbf{Middle}), and eight-mode dataset (\textbf{Right}). 
The source distribution, $\pi_0$ ({\textbf{brown}}), and the target distribution, $\pi_1$ ({\textbf{indigo}}), are shown, along with the generated distribution ({\textbf{pink}}). 
}
\label{fig:123_distribution}
\end{figure}

\section{Complex Distribution Experiment}\label{sec:app:complex_distribution_experiment}
In Section~\ref{sec:app:datasets2}, we introduce the datasets used in our experiments. The analysis of results with first-order and second-order terms in Section~\ref{sec:app:first_second}, and we evaluate the performance with first-order and self-consistency terms in Section~\ref{sec:app:first_third}, assess the impact of second-order and self-consistency terms in Section~\ref{sec:app:second_third}. Finally, we present the overall results of HOMO with all loss terms combined in Section~\ref{sec:app:homo2}.

\subsection{Datasets}\label{sec:app:datasets2}
Here, we introduce four datasets we proposed: circle dataset, irregular ring dataset, spiral line dataset, and spin dataset. In the circle dataset, we sample 600 points from Gaussian distribution with $0.3$ variance for both source distribution and target distribution. In the irregular ring dataset, we sample 600 points from Gaussian distribution with $0.3$ variance for both source distribution and target distribution. In the spiral line dataset, we sample 600 points from Gaussian distribution with $0.3$ variance for both source distribution and target distribution. In the spin dataset, we sample 600 points from the Gaussian distribution with $0.3$ variance for both source distribution and target distribution. 
\begin{figure}[!ht]
\centering
\includegraphics[width=0.2\textwidth]{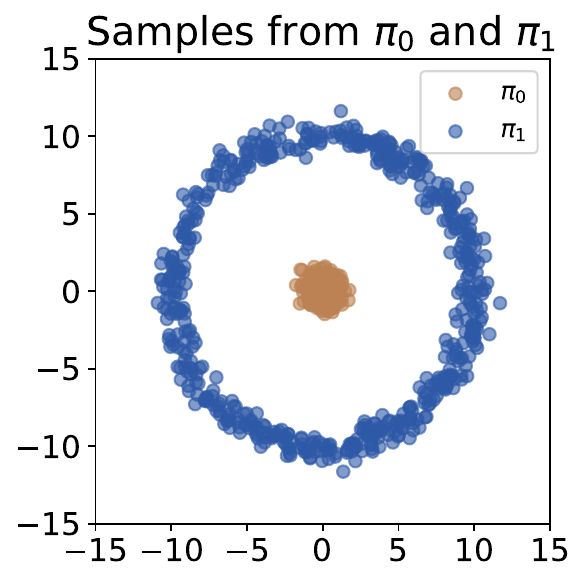}
\includegraphics[width=0.2\textwidth]{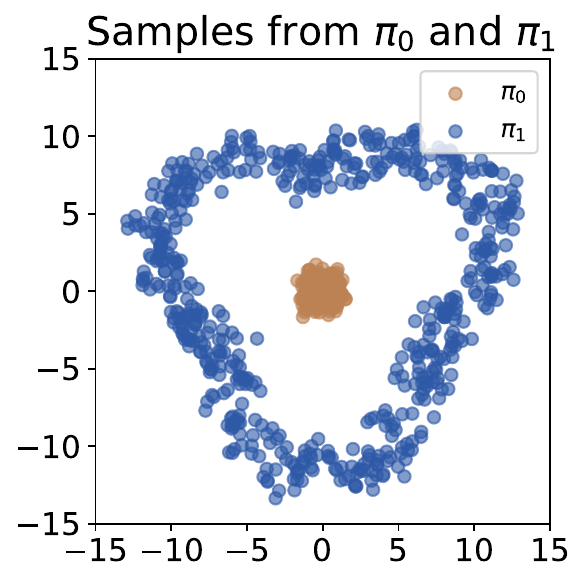}
\includegraphics[width=0.2\textwidth]{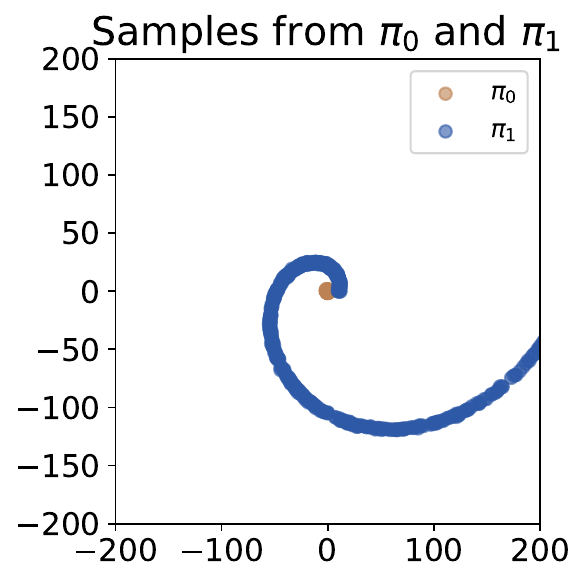}
\includegraphics[width=0.2\textwidth]{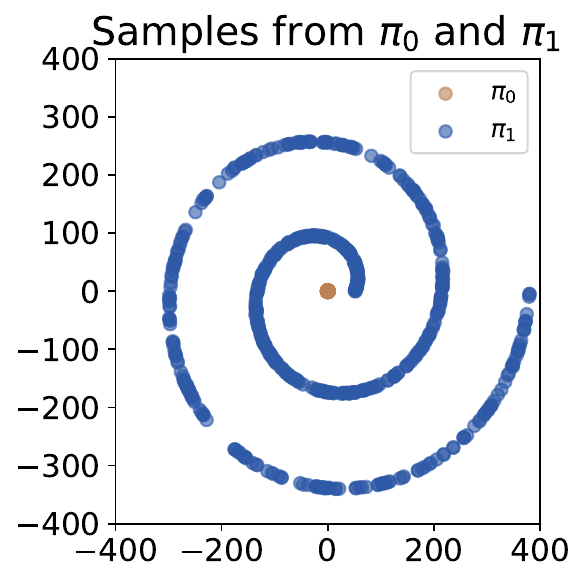}
\caption{
The circle dataset(\textbf{Left most}), irregular ring dataset (\textbf{Middle left}), spiral line dataset (\textbf{Middle right}), and spin dataset (\textbf{Right most}). 
Our goal is to make HOMO to learn a transport trajectory from distribution $\pi_0$ ({\textbf{brown}}) to distribution $\pi_1$ ({\textbf{indigo}}). 
}
\label{fig:datasets}
\end{figure}

\subsection{First Order Plus Second Order}\label{sec:app:first_second}
We optimize models by the sum of squared error(SSE). The source distribution and target distribution are all Gaussian distributions. For the target transport trajectory setting, we follow the VP ODE framework from~\cite{rectified_flow}, which is $x_t = \alpha_t x_0 + \beta_t x_1$. We choose $\alpha_t = \exp(-\frac{1}{4} a(1-t)^2 - \frac{1}{2} b(1-t))$ and $\beta_t = \sqrt{1 - \alpha_t^2}$, with hyperparameters $a = 19.9$ and $b = 0.1$. In the circle dataset, we all sample 400 points, both source distribution and target distribution. In the irregular ring dataset, we all sample 600 points, both source distribution and target distribution. In the spiral line dataset, we all sample 300 points, both source distribution and target distribution. In circle dataset training, we use an ODE solver and Adam optimizer, with 2 hidden layer MLP, 100 hidden dimensions, $800$ batch size, $0.005$ learning rate, and $1000$ training steps. In irregular ring dataset training, we also use an ODE solver and Adam optimizer, with 2 hidden layer MLP, 100 hidden dimensions, $1000$ batch size, $0.005$ learning rate, and $1000$ training steps. In spiral line dataset training, we use an ODE solver and Adam optimizer, with 2 hidden layer MLP, 100 hidden dimensions, $1600$ batch size, $0.005$ learning rate, and $1000$ training steps. In spiral line dataset training, we use an ODE solver and Adam optimizer, with 2 hidden layer MLP, 100 hidden dimensions, $1600$ batch size, $0.005$ learning rate, and $1000$ training steps. 
\begin{figure}[!ht]
\centering
\includegraphics[width=0.2\textwidth]{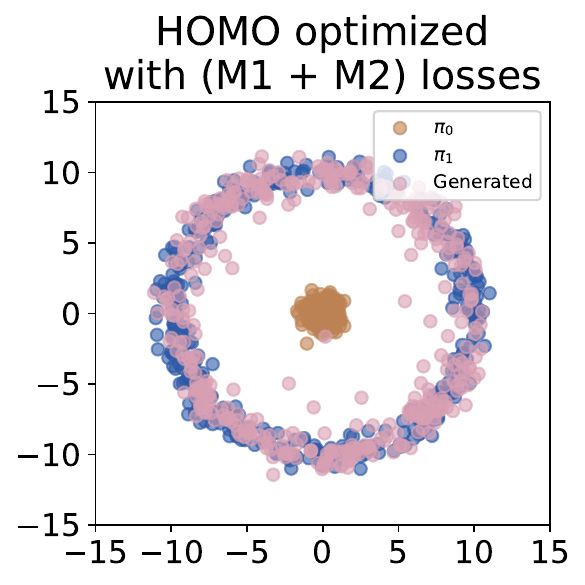}
\includegraphics[width=0.2\textwidth]{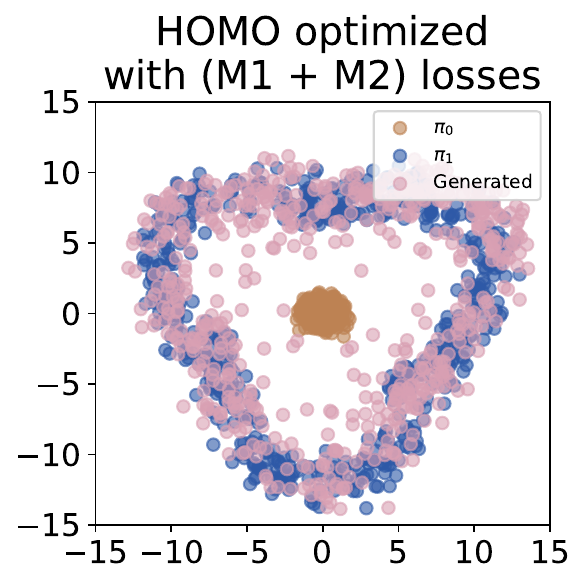}
\includegraphics[width=0.2\textwidth]{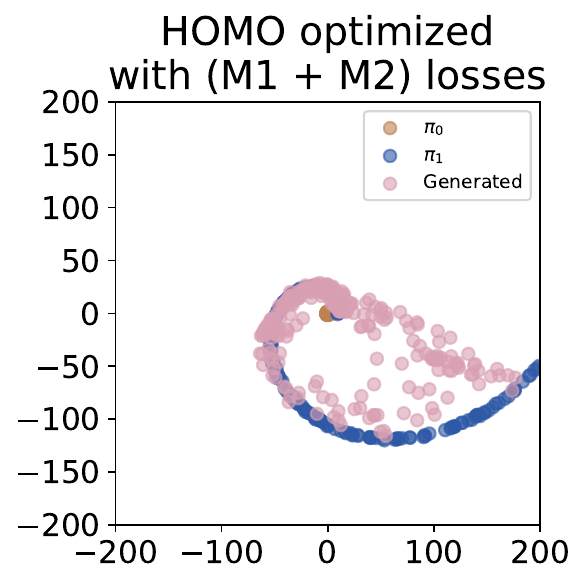}
\includegraphics[width=0.2\textwidth]{12_new_spiral_output.pdf}
\caption{
(M1+M2) \textbf{HOMO results on complex datasets with two kinds of loss: first-order and second-order terms.} The distributions generated by HOMO,
in circle dataset(\textbf{Left most}), irregular ring dataset (\textbf{Middle left}), spiral line dataset (\textbf{Middle right}) and spin dataset (\textbf{Right most}).  
The source distribution, $\pi_0$ ({\textbf{brown}}), and the target distribution, $\pi_1$ ({\textbf{indigo}}), are shown, along with the generated distribution ({\textbf{pink}}). }
\label{fig:m1_m2_appendix}
\end{figure}

\subsection{First Order Plus Self-Consistency Term}\label{sec:app:first_third}
We optimize models by the sum of squared error(SSE). The source distribution and target distribution are all Gaussian distributions. For the target transport trajectory setting, we follow the VP ODE framework from~\cite{rectified_flow}, which is $x_t = \alpha_t x_0 + \beta_t x_1$. We choose $\alpha_t = \exp(-\frac{1}{4} a(1-t)^2 - \frac{1}{2} b(1-t))$ and $\beta_t = \sqrt{1 - \alpha_t^2}$, with hyperparameters $a = 19.9$ and $b = 0.1$. In the circle dataset, we all sample 400 points, both source distribution and target distribution. In the irregular ring dataset, we all sample 600 points, both source distribution and target distribution. In the spiral line dataset, we all sample 300 points, both source distribution and target distribution. In circle dataset training, we use an ODE solver and Adam optimizer, with 2 hidden layer MLP, 100 hidden dimensions, $800$ batch size, $0.005$ learning rate, and $1000$ training steps. In irregular ring dataset training, we also use an ODE solver and Adam optimizer, with 2 hidden layer MLP, 100 hidden dimensions, $1000$ batch size, $0.005$ learning rate, and $1000$ training steps. In spiral line dataset training, we use an ODE solver and Adam optimizer, with 2 hidden layer MLP, 100 hidden dimensions, $1600$ batch size, $0.005$ learning rate, and $1000$ training steps. In spiral line dataset training, we use an ODE solver and Adam optimizer, with 2 hidden layer MLP, 100 hidden dimensions, $1600$ batch size, $0.005$ learning rate, and $1000$ training steps. 
\begin{figure}[!ht]
\centering
\includegraphics[width=0.2\textwidth]{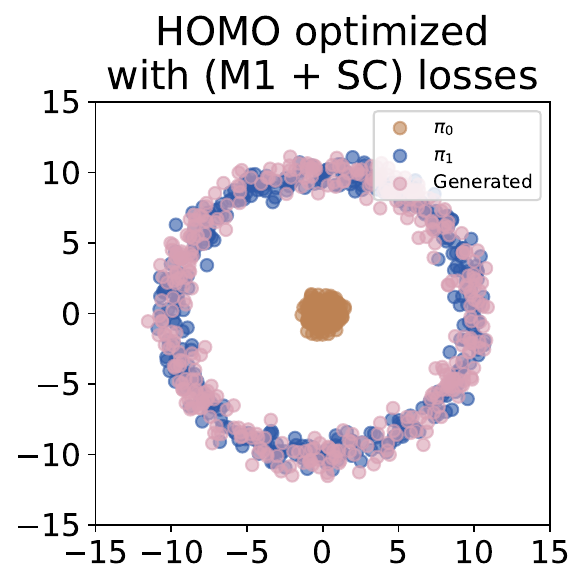}
\includegraphics[width=0.2\textwidth]{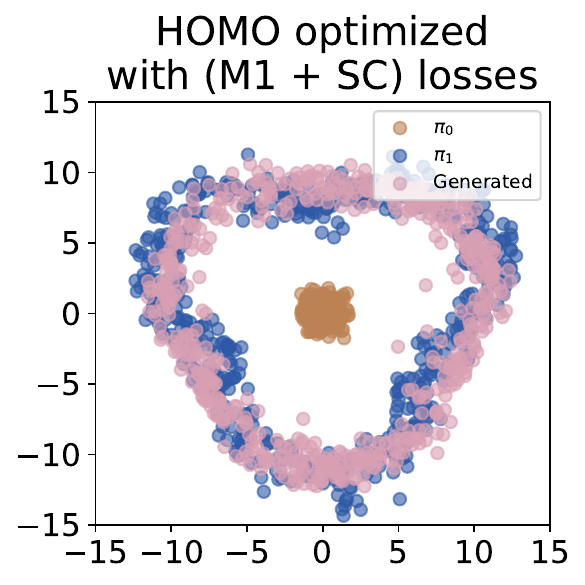}
\includegraphics[width=0.2\textwidth]{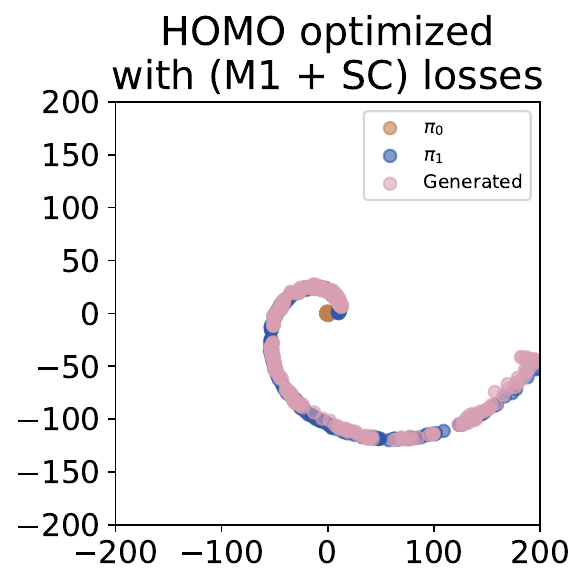}
\includegraphics[width=0.2\textwidth]{13_new_spiral_output.pdf}
\caption{
(M1+SC) \textbf{HOMO results on complex datasets with two kinds of loss: first-order and self-consistency terms.} The distributions generated by HOMO,
in circle dataset(\textbf{Left most}), irregular ring dataset (\textbf{Middle left}), spiral line dataset (\textbf{Middle right}) and spin dataset (\textbf{Right most}). 
The source distribution, $\pi_0$ ({\textbf{brown}}), and the target distribution, $\pi_1$ ({\textbf{indigo}}), are shown, along with the generated distribution ({\textbf{pink}}). }
\label{fig:m1_sc_appendix}
\end{figure}

\subsection{Second Order Plus Self-Consistency Term}\label{sec:app:second_third}
We optimize models by the sum of squared error(SSE). The source distribution and target distribution are all Gaussian distributions. For the target transport trajectory setting, we follow the VP ODE framework from~\cite{rectified_flow}, which is $x_t = \alpha_t x_0 + \beta_t x_1$. We choose $\alpha_t = \exp(-\frac{1}{4} a(1-t)^2 - \frac{1}{2} b(1-t))$ and $\beta_t = \sqrt{1 - \alpha_t^2}$, with hyperparameters $a = 19.9$ and $b = 0.1$. In the circle dataset, we all sample 400 points, both source distribution and target distribution. In the irregular ring dataset, we all sample 600 points, both source distribution and target distribution. In the spiral line dataset, we all sample 300 points, both source distribution and target distribution. And in circle dataset training, we use an ODE solver and Adam optimizer, with 2 hidden layer MLP, 100 hidden dimensions, $800$ batch size, $0.005$ learning rate, and $100$ training steps. In irregular ring dataset training, we also use an ODE solver and Adam optimizer, with 2 hidden layer MLP, 100 hidden dimensions, $100$ batch size, $0.005$ learning rate, and $1000$ training steps. In spiral line dataset training, we use an ODE solver and Adam optimizer, with 2 hidden layer MLP, 100 hidden dimensions, $1600$ batch size, $0.005$ learning rate, and $100$ training steps. In spiral line dataset training, we use an ODE solver and Adam optimizer, with 2 hidden layer MLP, 100 hidden dimensions, $1600$ batch size, $0.005$ learning rate, and $1000$ training steps. 
\begin{figure}[!ht]
\centering
\includegraphics[width=0.2\textwidth]{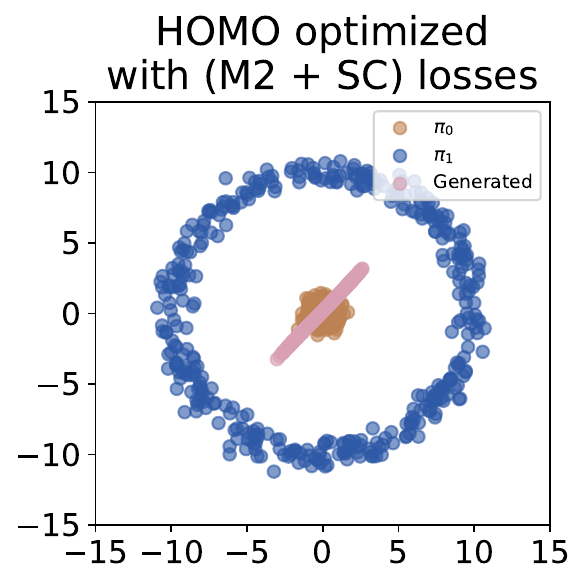}
\includegraphics[width=0.2\textwidth]{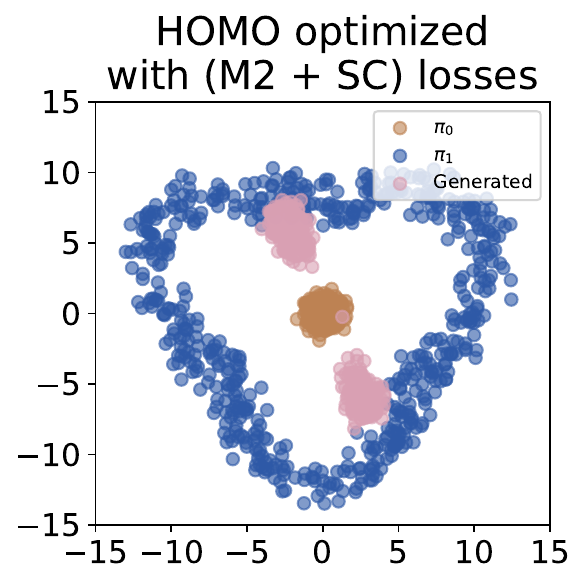}
\includegraphics[width=0.2\textwidth]{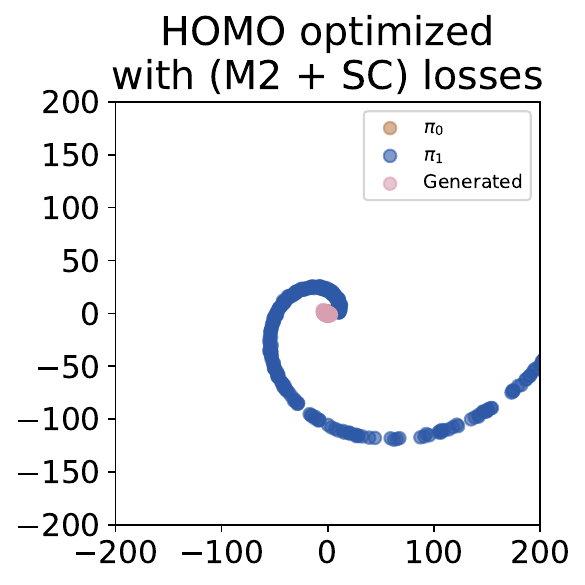}
\includegraphics[width=0.2\textwidth]{23_new_spiral_output.pdf}
\caption{
(M2+SC) \textbf{HOMO results on complex datasets with two kinds of loss: second-order and self-consistency terms.} The distributions generated by HOMO,
in circle dataset(\textbf{Left most}), irregular ring dataset (\textbf{Middle left}), spiral line dataset (\textbf{Middle right}) and spin dataset (\textbf{Right most}). 
The source distribution, $\pi_0$ ({\textbf{brown}}), and the target distribution, $\pi_1$ ({\textbf{indigo}}), are shown, along with the generated distribution ({\textbf{pink}}). }
\label{fig:m2_sc_appendix}
\end{figure}

\subsection{HOMO}\label{sec:app:homo2}
We optimize models by the sum of squared error(SSE). The source distribution and target distribution are all Gaussian distributions. For the target transport trajectory setting, we follow the VP ODE framework from~\cite{rectified_flow}, which is $x_t = \alpha_t x_0 + \beta_t x_1$. We choose $\alpha_t = \exp(-\frac{1}{4} a(1-t)^2 - \frac{1}{2} b(1-t))$ and $\beta_t = \sqrt{1 - \alpha_t^2}$, with hyperparameters $a = 19.9$ and $b = 0.1$. In the circle dataset, we all sample 400 points, both source distribution and target distribution. In the irregular ring dataset, we all sample 600 points, both source distribution and target distribution. In the spiral line dataset, we all sample 300 points, both source distribution and target distribution. And in circle dataset training, we use an ODE solver and Adam optimizer, with 2 hidden layer MLP, 100 hidden dimensions, $800$ batch size, $0.005$ learning rate, and $1000$ training steps. In irregular ring dataset training, we also use an ODE solver and Adam optimizer, with 2 hidden layer MLP, 100 hidden dimensions, $1000$ batch size, $0.005$ learning rate, and $1000$ training steps. In spiral line dataset training, we use an ODE solver and Adam optimizer, with 2 hidden layer MLP, 100 hidden dimensions, $1600$ batch size, $0.005$ learning rate, and $1000$ training steps. In spiral line dataset training, we use an ODE solver and Adam optimizer, with 2 hidden layer MLP, 100 hidden dimensions, $1600$ batch size, $0.005$ learning rate, and $1000$ training steps. 
\begin{figure}[!ht]
\centering
\includegraphics[width=0.2\textwidth]{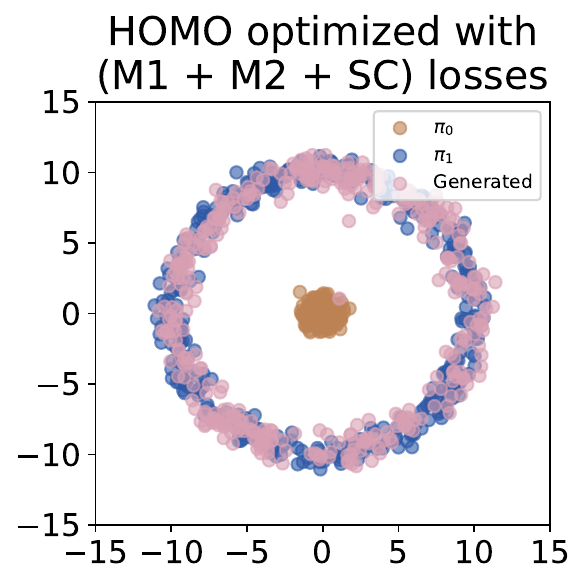}
\includegraphics[width=0.2\textwidth]{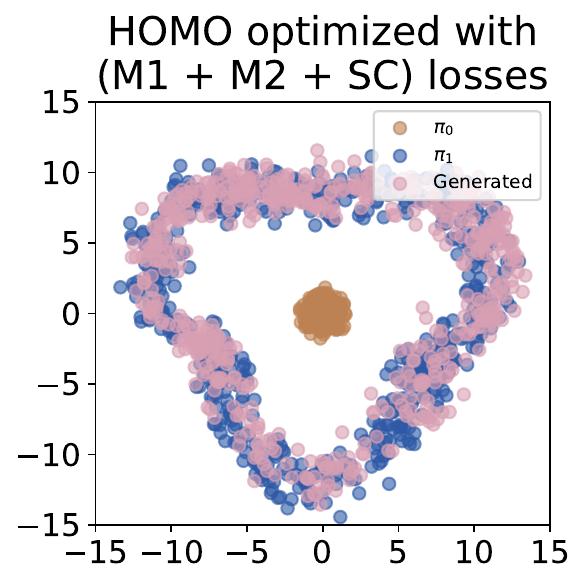}
\includegraphics[width=0.2\textwidth]{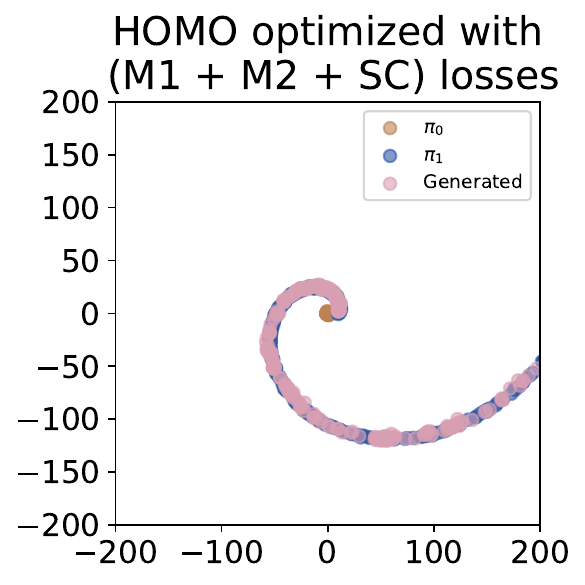}
\includegraphics[width=0.2\textwidth]{123_new_spiral_output.pdf}
\caption{
(M1+M2+SC) \textbf{HOMO results on complex datasets with three kinds of loss: first-order, second-order, and self-consistency terms.} The distributions generated by HOMO in circle dataset(\textbf{Left most}), irregular ring dataset (\textbf{Middle left}), spiral line dataset (\textbf{Middle right}), and spin dataset (\textbf{Right most}). 
The source distribution, $\pi_0$ ({\textbf{brown}}), and the target distribution, $\pi_1$ ({\textbf{indigo}}), are shown, along with the generated distribution ({\textbf{pink}}). }
\label{fig:m1_m2_sc_appendix}
\end{figure}

\section{Third-Order HOMO}\label{sec:app:3rd_homo}
This section extends HOMO to third-order dynamics and analyzes its performance on complex synthetic tasks. Section~\ref{sec:app:3rd_homo_algorithm} introduces the training and sampling algorithms incorporating third-order dynamics. Section~\ref{sec:app:trajectory_setting} compares two trajectory parameterization strategies for high-order systems. Section~\ref{sec:app:complex_dataset} describes the 2 Round Spin, 3 Round Spin, and Dot-Circle datasets designed to test complex mode transitions. Section~\ref{sec:app:euclidean_distance_loss} provides quantitative analysis through Euclidean distance metrics between generated and target distributions. Section~\ref{sec:app:3rd_self_consistency} evaluates the isolated impact of self-consistency constraints. Section~\ref{sec:app:3rd_first_order_plus_self_consistency} examines first-order dynamics coupled with self-consistency regularization. Section~\ref{sec:app:3rd_first_order_plus_second_order_plus_self_consistency} studies the combined effect of first-, second-order dynamics and self-consistency. Finally, Section~\ref{sec:app:3rd_order_homo} demonstrates full third-order HOMO with all optimization terms, analyzing trajectory linearity and mode fidelity under different trajectory settings.
\subsection{Algorithm}\label{sec:app:3rd_homo_algorithm}
Here we first introduce the training algorithm of our third-order HOMO: 
\begin{algorithm}[!ht]\caption{ Third-Order HOMO Training}
\begin{algorithmic}[1]
\While{not converged}
\State $x_0 \sim \N (0, I), x_1 \sim D, (d, t) \sim p(d, t)$
\State $\beta_t \gets \sqrt{1-\alpha_t^2}$
\State $x_t \gets \alpha_t \cdot x_0 + \beta_t \cdot x_1$ \Comment{Noise data point}
\For{first $k$ batch elements}
\State $\dot s_t^{\True} \gets \dot{\alpha_t} x_0 + \dot{\beta_t} x_1$ \Comment{First-order target}
\State $\ddot s_t^{\True} \gets \ddot{\alpha_t} x_0 + \ddot{\beta_t} x_1$ \Comment{Second-order target}
\State $\dddot s_t^{\True} \gets \dddot{\alpha_t} x_0 + \dddot{\beta_t} x_1$ \Comment{Third-order target}
\State $d \gets 0$
\EndFor
\For{other batch elements}
\State $s_t \gets u_1 ( x_t, t, d)$ \Comment{First small step of first order}
\State $\dot s_t \gets u_2 (u_1 ( x_t, t, d), x_t, t, d)$ \Comment{First small step of second order}
\State $\ddot s_t \gets u_3 (u_2(u_1(x_t, t, d), x_t, t, d), u_1(x_t, t, d), x_t, t, d)$ \Comment{First small step of third order}
\State $x_{t + d} \gets x_t + d \cdot s_t + \frac{d^2}{2} \dot s_t + \frac{d^6}{3} \ddot s_t $ \Comment{Follow ODE}
\State $s_{t + d} \gets u_1 ( x_{t + d}, t + d, d )$ \Comment{Second small step of first order}
\State $\dot s_t^{\mathrm{target}} \gets$ stopgrad $(s_t + s_{t+d}) / 2$ \Comment{Self-consistency target of first order }
\EndFor
\State $\theta \gets \nabla_\theta ( \| u_1 ( x_t, t, 2d ) - \dot s_t^{\True} \|^2$
\Statex \hspace{4.2em} $ + \| u_2 (u_1 (x_t, t, 2d), x_t, t, 2d) - \ddot s_t^{\True} \|^2$
\Statex \hspace{4.2em} $ + \| u_3 (u_2(u_1(x, t, d), x, t, d), u_1(x, t, d), x, t, d) - \dddot{s}_t^{\True} \|^2 $
\Statex \hspace{4.2em} $ + \| u_{1}(x_t, t, 2 d) - \dot{s}_t^{\mathrm{target}}\|^2$
\EndWhile
\end{algorithmic}
\end{algorithm}

Then we will discuss the sampling algorithm in third-order HOMO: 
\begin{algorithm}[!ht]\caption{Third-Order HOMO Sampling}
\begin{algorithmic}[1]
\State $x \sim \N (0, I)$
\State $d \gets 1 / M$
\State $t \gets 0$
\For{$n \in [0, \dots, M - 1]$}
\State $x \gets x + d \cdot u_1(x, t, d) + \frac{d^2}{2} \cdot u_2(u_1(x, t, d), x, t, d) + \frac{d^3}{6}\cdot u_3 (u_2(u_1(x, t, d), x, t, d), u_1(x, t, d), x, t, d)$
\State $t \gets t + d$
\EndFor
\State \textbf{return} $x$
\end{algorithmic}
\end{algorithm}

\subsection{Trajectory setting}\label{sec:app:trajectory_setting}
We have trajectory as:
\begin{align*}
    z_t = \alpha_t z_0 + \beta_t z_1
\end{align*}
In original trajectory, we choose $\alpha_t = \exp(-\frac{1}{4} a(1-t)^2 - \frac{1}{2} b(1-t))$ and $\beta_t = \sqrt{1 - \alpha_t^2}$, with hyperparameters $a = 19.9$ and $b = 0.1$. And new trajectory as $\alpha_t = 1 - ( 3t^2 - 2t^3 )$ and $\beta_t = 3t^2 - 2t^3$. 

\subsection{Dataset}\label{sec:app:complex_dataset}
Here, we introduce three datasets we use: 2 Round spin, 3 Round spin, and Dot-Circle datasets. In 2 Round spin dataset and 3 Round spin dataset, we both sample 600 points from Gaussian distribution with $0.3$ variance for both source distribution and target distribution. In Dot-Circle datasets, we sample 300 points from the center dot and 300 points from the outermost circle, combine them as source distribution, and then sample 600 points from 2 round spin distribution. 
\begin{figure}[!ht] 
\centering
\includegraphics[width=0.25\textwidth]{new_spiral_dataset.pdf}
\includegraphics[width=0.25\textwidth]{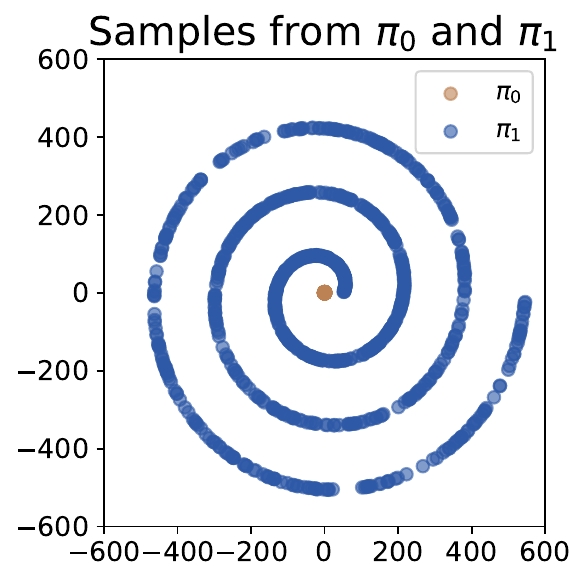}
\includegraphics[width=0.25\textwidth]{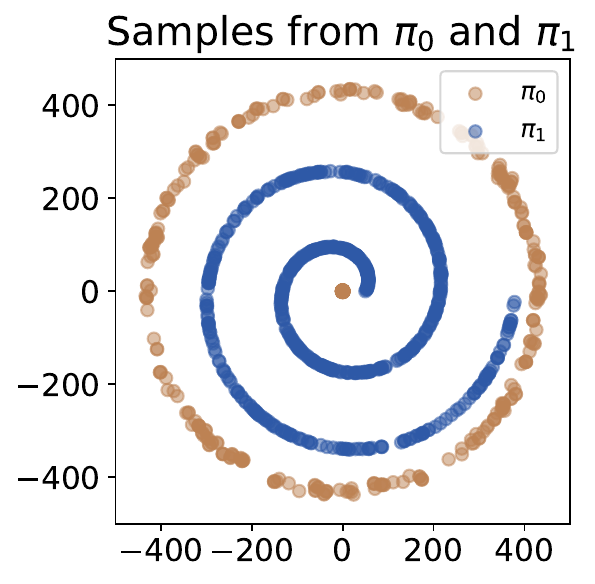}
\caption{
The 2 Round spin dataset(\textbf{Left}), 3 Round spin dataset(\textbf{Middle}), and Dot-Circle datasets(\textbf{Right}). Our goal is to make HOMO learn a transport trajectory from distribution $\pi_0$ ({\textbf{brown}}) to distribution $\pi_1$ ({\textbf{indigo}}). 
}
\label{fig:three_complex_dataset}
\end{figure}

\subsection{Euclidean distance loss}\label{sec:app:euclidean_distance_loss}
Here, we present the Euclidean distance loss performance of four different loss terms combined under the original trajectory setting and the new trajectory setting.

\begin{table}[!ht] 
\centering
\caption{
\textbf{Euclidean distance loss of three complex distribution datasets under new trajectory setting.} Lower values indicate more accurate distribution transfer results. Optimal values are highlighted in \textbf{Bold}. And \underline{Underlined} numbers represent the second best (second lowest) loss value for each dataset (row). 
For the qualitative results of a mixture of Gaussian experiments, please refer to Figure~\ref{fig:mixture_of_gaussian_experiment}.
}
\label{tab:euclidean_distance_complex_datasets_new}
\begin{tabular}{l|c|c|c}
\toprule
& \textbf{2 Round} & \textbf{3 Round} & \textbf{Dot-} \\
\textbf{Loss terms}  & \textbf{spin} & \textbf{spin} & \textbf{Circle} \\
\midrule
SC              & 41.265 & 48.201 & 87.407 \\
M1 + SC          & 14.926 & 18.376 & 30.027 \\
M1 + M2 + SC      & \underline{11.435} & \underline{12.422} & \underline{24.712} \\
M1 + M2 + SC + M3		& \textbf{4.701} & \textbf{9.261} & \textbf{21.968} \\
\bottomrule
\end{tabular}
\end{table}

\subsection{Only Self-Consistency Term}\label{sec:app:3rd_self_consistency}
We optimize models by the sum of squared error(SSE). The source distribution and target distribution are all Gaussian distributions. For the first line, we use the original transport trajectory setting, followed by the VP ODE framework from~\cite{rectified_flow}, which is $x_t = \alpha_t x_0 + \beta_t x_1$. We choose $\alpha_t = \exp(-\frac{1}{4} a(1-t)^2 - \frac{1}{2} b(1-t))$ and $\beta_t = \sqrt{1 - \alpha_t^2}$, with hyperparameters $a = 19.9$ and $b = 0.1$. In 2 Round spin datasets and 3 Round spin datasets, we sample 400 points, both source distribution and target distribution. In the Dot-Circle dataset, we sample 600 points from both source distribution and target distribution, 300 points of source points from the circle, and another 300 from the center dot. In 2-round dataset training, we use an ODE solver and Adam optimizer, with 2 hidden layer MLP, 100 hidden dimensions, $800$ batch size, $0.005$ learning rate, and $180$ training steps. In 3-round spin dataset training, we also use an ODE solver and Adam optimizer, with 2 hidden layer MLP, 100 hidden dimensions, $1000$ batch size, $0.005$ learning rate, and $180$ training steps. In Dot-Circle dataset training, we use an ODE solver and Adam optimizer, with 2 hidden layer MLP, 100 hidden dimensions, $1600$ batch size, $0.005$ learning rate, and $180$ training steps. 
\begin{figure}[!ht]
\centering
\subfloat[(original)SC / 2 Round]
{\includegraphics[width=0.25\textwidth]{1_3_2round_spiral_output.pdf}}
\subfloat[(original)SC / 3 Round]
{\includegraphics[width=0.25\textwidth]{1_3_3round_spiral_output.pdf}}
\subfloat[(original)SC / DotPlusCircle]
{\includegraphics[width=0.25\textwidth]{1_3_dotpluscircle_output.pdf}} \\
\subfloat[(new)SC / 2 Round]
{\includegraphics[width=0.25\textwidth]{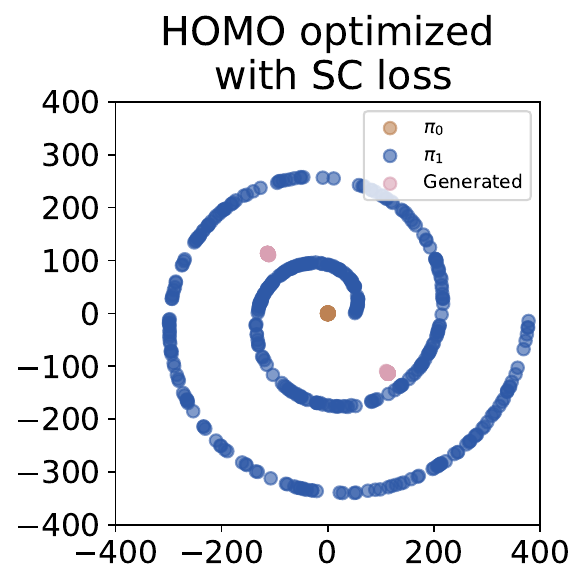}}
\subfloat[(new)SC / 3 Round]
{\includegraphics[width=0.25\textwidth]{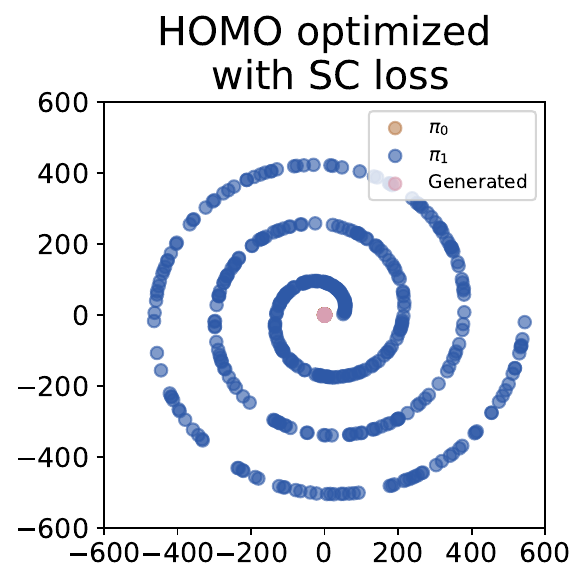}}
\subfloat[(new)SC / DotPlusCircle]
{\includegraphics[width=0.25\textwidth]{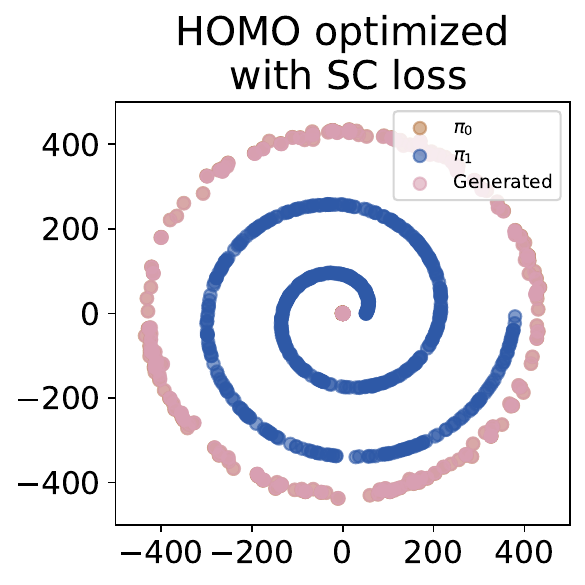}}
\caption{
(SC )
The distributions generated by HOMO are only optimized by self-consistency loss. 
\textbf{Upper row(original trajectory setting):}
Figure (a), in 2 Round spin dataset. Figure (b), in 3 Round spin dataset. 
Figure (c), in Dot-Circle dataset. 
\textbf{Lower row(new trajectory setting):}
Figure (d), in 2 Round spin dataset. 
Figure (e), in 3 Round spin dataset. 
Figure (f), in Dot-Circle dataset. 
The source distribution, $\pi_0$ ({\textbf{brown}}), and the target distribution, $\pi_1$ ({\textbf{indigo}}), are shown, along with the generated distribution ({\textbf{pink}}). 
}
\label{fig:3rd_self_consistency}
\end{figure}

\subsection{First Order Plus Self-Consistency}\label{sec:app:3rd_first_order_plus_self_consistency}
We optimize models by the sum of squared error(SSE). The source distribution and target distribution are all Gaussian distributions. For the first line, we use the original transport trajectory setting, followed by the VP ODE framework from~\cite{rectified_flow}, which is $x_t = \alpha_t x_0 + \beta_t x_1$. We choose $\alpha_t = \exp(-\frac{1}{4} a(1-t)^2 - \frac{1}{2} b(1-t))$ and $\beta_t = \sqrt{1 - \alpha_t^2}$, with hyperparameters $a = 19.9$ and $b = 0.1$. In 2 Round spin datasets and 3 Round spin datasets, we sample $400$ points in both source distribution and target distribution. And in the Dot-Circle dataset, we sample $600$ points from both source distribution and target distribution, $300$ points of sources points from the circle, and another $300$ from the center dot. In 2 Round dataset training, we use ODE solver and Adam optimizer, with 2 hidden layer MLP, 100 hidden dimension, $800$ batch size, $0.005$ learning rate, and $1000$ training steps. In 3 Round spin dataset training, we also use ODE solver and Adam optimizer, with 2 hidden layer MLP, 100 hidden dimension, $1000$ batch size, $0.005$ learning rate, and $2000$ training steps. And in Dot-Circle dataset training, we use an ODE solver and Adam optimizer, with 2 hidden layer MLP, 100 hidden dimensions, $1600$ batch size, $0.005$ learning rate, and $10000$ training steps. 
\begin{figure}[!ht]
\centering
\subfloat[(original)(M1+SC) / 2 Round]
{\includegraphics[width=0.25\textwidth]{1_13_2round_spiral_output.pdf}}
\subfloat[(original)(M1+SC) / 3 Round]
{\includegraphics[width=0.25\textwidth]{1_13_3round_spiral_output.pdf}}
\subfloat[(original)(M1+SC) / Dot-Circle]
{\includegraphics[width=0.25\textwidth]{1_13_dotpluscircle_output.pdf}} \\
\subfloat[(new)(M1 + SC) / 2 Round]
{\includegraphics[width=0.25\textwidth]{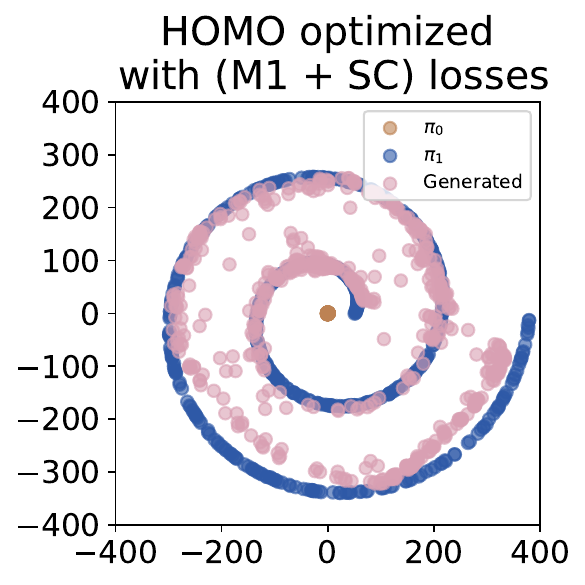}}
\subfloat[(new)(M1 + SC) / 3 Round]
{\includegraphics[width=0.25\textwidth]{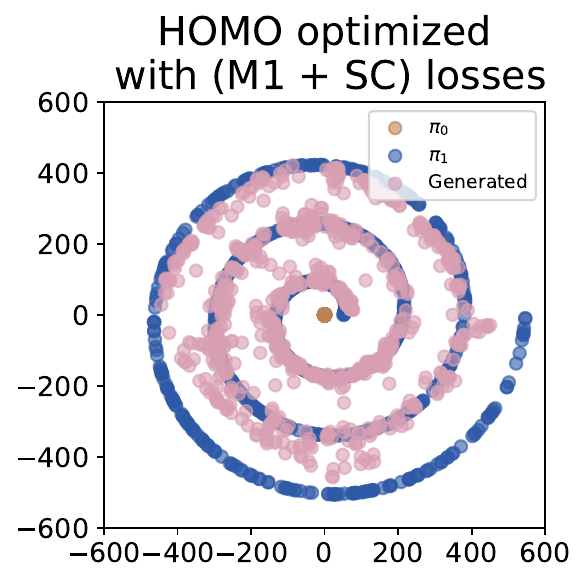}}
\subfloat[(new)(M1+SC) / Dot-Circle]
{\includegraphics[width=0.25\textwidth]{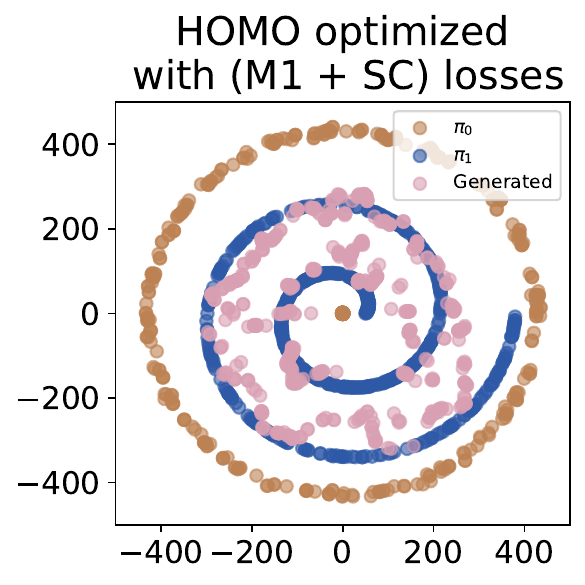}}
\caption{
(M1+SC)
The distributions generated by HOMO are only optimized by first-order loss and self-consistency loss. 
\textbf{Upper row(original trajectory setting):}
Figure (a), in 2 Round spin dataset. Figure (b), in 3 Round spin dataset. 
Figure (c), in Dot-Circle dataset. 
\textbf{Lower row(new trajectory setting):}
Figure (d), in 2 Round spin dataset. 
Figure (e), in 3 Round spin dataset. 
Figure (f), in Dot-Circle dataset. 
The source distribution, $\pi_0$ ({\textbf{brown}}), and the target distribution, $\pi_1$ ({\textbf{indigo}}), are shown, along with the generated distribution ({\textbf{pink}}). 
}
\label{fig:3rd_first_order_plus_self_consistency}
\end{figure}

\subsection{First Order Plus Second Order Plus Self-Consistency}\label{sec:app:3rd_first_order_plus_second_order_plus_self_consistency}
We optimize models by the sum of squared error(SSE). The source distribution and target distribution are all Gaussian distributions. For the first line, we use the original transport trajectory setting, followed by the VP ODE framework from~\cite{rectified_flow}, which is $x_t = \alpha_t x_0 + \beta_t x_1$. We choose $\alpha_t = \exp(-\frac{1}{4} a(1-t)^2 - \frac{1}{2} b(1-t))$ and $\beta_t = \sqrt{1 - \alpha_t^2}$, with hyperparameters $a = 19.9$ and $b = 0.1$. In 2 Round spin datasets and 3 Round spin datasets, we sample 400 points, both source distribution and target distribution. And in the Dot-Circle dataset, we sample 600 points from both source distribution and target distribution, 300 points of source points from the circle, and another 300 from the center dot. In 2 Round dataset training, we use ODE solver and Adam optimizer, with 2 hidden layer MLP, 100 hidden dimension, $800$ batch size, $0.005$ learning rate, and $1000$ training steps. In 3 Round spin dataset training, we also use ODE solver and Adam optimizer, with 2 hidden layer MLP, 100 hidden dimension, $1000$ batch size, $0.005$ learning rate, and $2000$ training steps. And in Dot-Circle dataset training, we use an ODE solver and Adam optimizer, with 2 hidden layer MLP, 100 hidden dimensions, $1600$ batch size, $0.005$ learning rate, and $10000$ training steps. 
\begin{figure}[!ht]
\centering
\subfloat[(original) (M1+M2+SC) / 2 Round]
{\includegraphics[width=0.25\textwidth]{1_123_2round_spiral_output.pdf}}
\subfloat[(original) (M1+M2+SC) / 3 Round]
{\includegraphics[width=0.25\textwidth]{1_123_3round_spiral_output.pdf}}
\subfloat[(original) (M1+M2+SC) / Dot-Circle]
{\includegraphics[width=0.25\textwidth]{1_123_dotpluscircle_output.pdf}} \\
\subfloat[(new) (M1+M2+SC) / 2 Round]
{\includegraphics[width=0.25\textwidth]{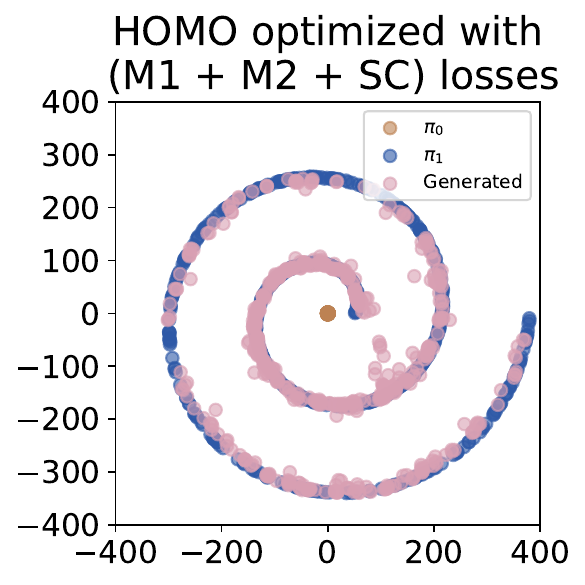}}
\subfloat[(new) (M1+M2+SC) / 3 Round]
{\includegraphics[width=0.25\textwidth]{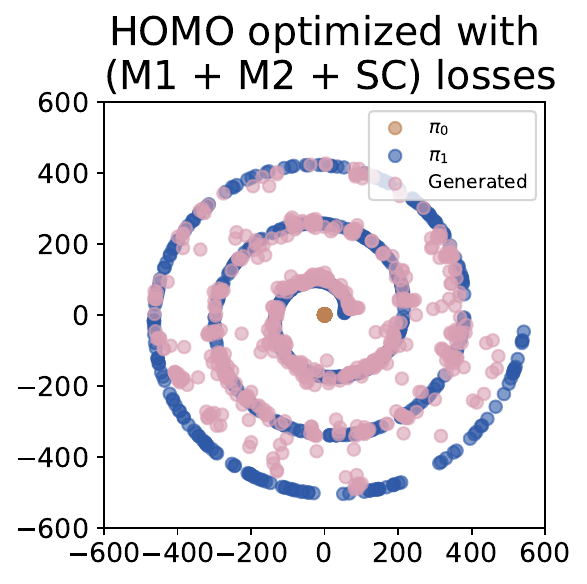}}
\subfloat[(new) (M1+M2+SC) / Dot-Circle]
{\includegraphics[width=0.25\textwidth]{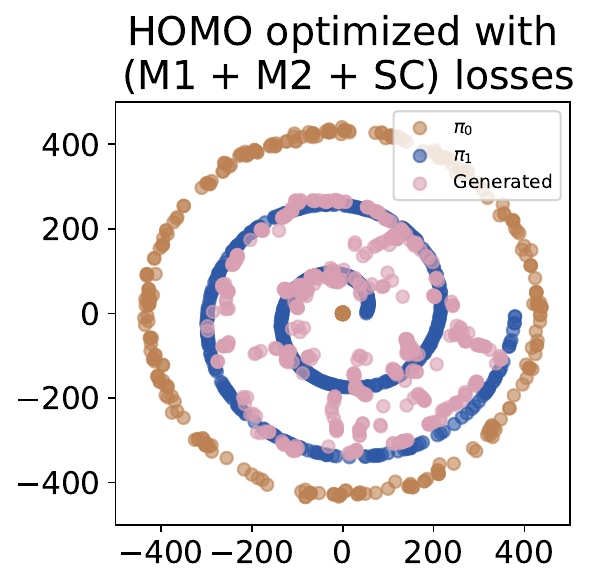}}
\caption{
(M1+M2+SC)
The distributions generated by HOMO are only optimized by first-order loss and second order loss, and self-consistency loss. 
\textbf{Upper row(original trajectory setting):}
Figure (a), in 2 Round spin dataset. Figure (b), in 3 Round spin dataset. 
Figure (c), in Dot-Circle dataset. 
\textbf{Lower row(new trajectory setting):}
Figure (d), in 2 Round spin dataset. 
Figure (e), in 3 Round spin dataset. 
Figure (f), in Dot-Circle dataset. 
The source distribution, $\pi_0$ ({\textbf{brown}}), and the target distribution, $\pi_1$ ({\textbf{indigo}}), are shown, along with the generated distribution ({\textbf{pink}}). 
}
\label{fig:3rd_frist_order_plus_second_order_plus_self_consistency}
\end{figure}

\subsection{Third-Order HOMO}\label{sec:app:3rd_order_homo}
We optimize models by the sum of squared error(SSE). The source distribution and target distribution are all Gaussian distributions. For the first line, we use the original transport trajectory setting, followed by the VP ODE framework from~\cite{rectified_flow}, which is $x_t = \alpha_t x_0 + \beta_t x_1$. We choose $\alpha_t = \exp(-\frac{1}{4} a(1-t)^2 - \frac{1}{2} b(1-t))$ and $\beta_t = \sqrt{1 - \alpha_t^2}$, with hyperparameters $a = 19.9$ and $b = 0.1$. In 2 Round spin datasets and 3 Round spin datasets, we sample 400 points, both source distribution and target distribution. In the Dot-Circle dataset, we sample 600 points, both source distribution and target distribution, 300 points of source points from the circle, and another 300 from the center dot. In 2-round dataset training, we use an ODE solver and Adam optimizer, with 2 hidden layer MLP, 100 hidden dimensions, $800$ batch size, $0.005$ learning rate, and $1000$ training steps. In 3-round spin dataset training, we also use an ODE solver and Adam optimizer, with 2 hidden layer MLP, 100 hidden dimensions, $1000$ batch size, $0.005$ learning rate, and $2000$ training steps. In Dot-Circle dataset training, we use an ODE solver and Adam optimizer, with 2 hidden layer MLP, 100 hidden dimensions, $1600$ batch size, $0.005$ learning rate, and $10000$ training steps. 

\begin{figure}[!ht]
\centering
\subfloat[(original) (M1+M2+M3+SC) / 2 Round]
{\includegraphics[width=0.25\textwidth]{1_1234_2round_spiral_output.pdf}}
\subfloat[(original) (M1+M2+M3+SC) / 3 Round]
{\includegraphics[width=0.25\textwidth]{1_1234_3round_spiral_output.pdf}}
\subfloat[(original) (M1+M2+M3+SC) / Dot-Circle]
{\includegraphics[width=0.25\textwidth]{1_1234_dotpluscircle_output.pdf}}\\
\subfloat[(new) (M1+M2+M3+SC) / 2 Round]
{\includegraphics[width=0.25\textwidth]{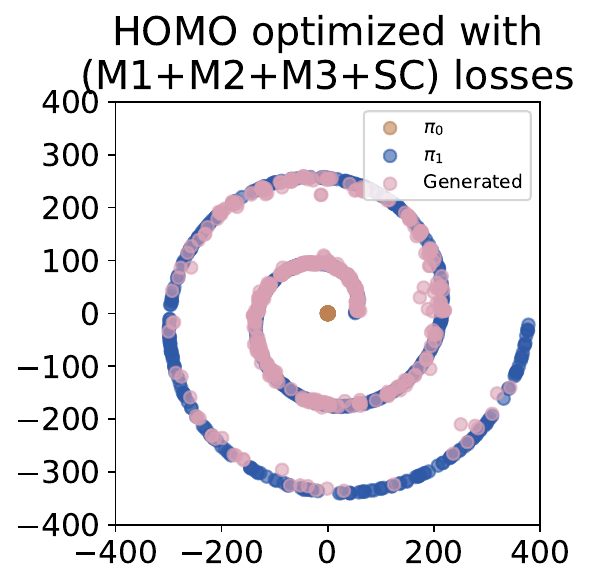}}
\subfloat[(new) (M1+M2+M3+SC) / 3 Round]
{\includegraphics[width=0.25\textwidth]{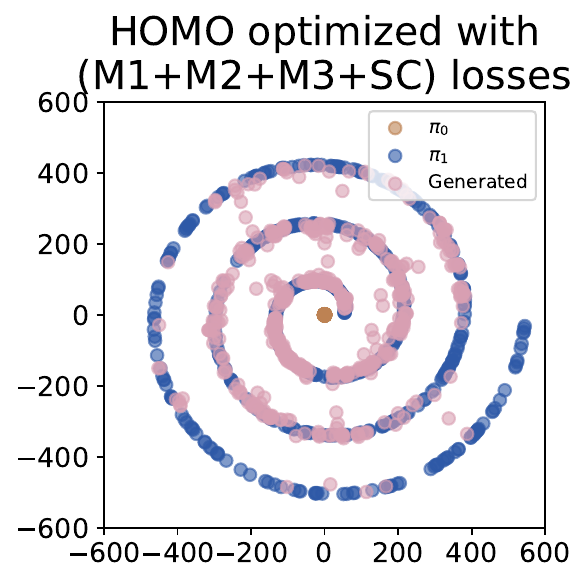}}
\subfloat[(new) (M1+M2+M3+SC) / Dot-Circle]
{\includegraphics[width=0.25\textwidth]{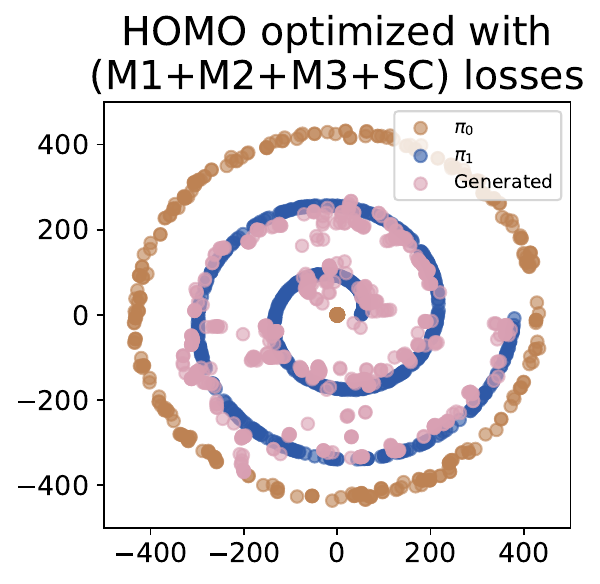}}
\caption{
(M1+M2+M3+SC) The distributions generated by Third-Order HOMO, optimized by first-order loss and second-order loss, third-order loss and self-consistency loss. 
\textbf{Upper row(original trajectory setting):}
Figure (a), in 2 Round spin dataset. Figure (b), in 3 Round spin dataset. 
Figure (c), in Dot-Circle dataset. 
\textbf{Lower row(new trajectory setting):}
Figure (d), in 2 Round spin dataset. 
Figure (e), in 3 Round spin dataset. 
Figure (f), in Dot-Circle dataset. 
The source distribution, $\pi_0$ ({\textbf{brown}}), and the target distribution, $\pi_1$ ({\textbf{indigo}}), are shown, along with the generated distribution ({\textbf{pink}}). 
}
\label{fig:3rd_order_homo}
\end{figure}

\section{Computational Cost and Optimization Cost} \label{sec:app:computational_cost}
We profile computational efficiency on the Apple MacBook Air (M1 8GB) with an 8-core CPU. Through systematic analysis, we observe three critical tradeoffs: (1) The M2 configuration demonstrates an 8.15$\times$ FLOPs increase over M1 while achieving 4.07$\times$ parameter expansion, revealing the fundamental FLOPs-parameters scaling relationship. (2) The self-consistency (SC) term introduces minimal computational overhead, with the M2+SC configuration maintaining 144.73 it/s versus vanilla M2's 146.34 it/s (1.1\% throughput reduction). (3) Architectural innovations yield substantial gains - the Shortcut Model (M1+SC) achieves 33.6\% faster iterations than vanilla M1 (283.20 vs 477.03 it/s) with comparable parameter counts. Table~\ref{tab:computational_cost} quantifies these effects through comprehensive benchmarking:

\begin{table}[!ht]
\centering
\caption{Computational Cost Analysis of Different Configurations}
\label{tab:computational_cost}
\begin{tabular}{lccc}
\toprule
\textbf{Configuration} & \textbf{FLOPs (M)} & \textbf{Params (K)} & \textbf{Training Speed (it/s)} \\
\midrule
M1 & 8.400 & 10.702 & 477.03 \\
M2 & 68.480 & 43.608 & 146.34 \\ 
M3 & 8.400 & 10.702 & 357.45 \\
M1 + M2 & 16.960 & 21.604 & 248.15 \\
M2 + SC & 68.480 & 43.608 & 144.73 \\
(Shortcut Model) M1 + SC & 8.480 & 10.802 & 283.20 \\
M1 + M2 + SC & 68.480 & 43.608 & 136.46 \\
M1 + M2 + M3 + SC & 103.680 & 66.012 & 122.18 \\
\bottomrule
\end{tabular}
\end{table}

Notably, our architecture maintains practical viability even for high-order extensions - the third-order HOMO configuration (M1+M2+M3+SC) sustains 122.18 it/s despite requiring 12.34$\times$ more FLOPs than the base M1 model. This demonstrates our method's ability to balance computational complexity with real-time performance requirements. 

%%%% Cut-line between first 10 pages and appendix

%%% some writing rules

%% Writing rule for creating tags.
%% Tags :
%% Theorem    \ref{thm:bla_bla}
%% Lemma      \ref{lem:bla_bla}
%% Claim      \ref{cla:bla_bla}
%% Corollary  \ref{cor:bla_bla}
%% Fact       \ref{fac:bla_bla}
%% Definition \ref{def:bla_bla}
%% Section    \ref{sec:bla_bla}
%% Subsection \ref{sub:bla_bla}
%% Equation   \ref{eq:bla_bla}

\end{document}